\documentclass[11pt]{article}

\usepackage{fullpage}

\usepackage[T1]{fontenc}
\usepackage[utf8]{inputenc}
\usepackage{lmodern}
\usepackage{microtype}

\usepackage{amsmath,mathtools}
\usepackage{amsfonts,amssymb}
\usepackage{amsthm,thmtools}
\usepackage{bm}
\usepackage{bbm}
\usepackage{accents}

\usepackage{graphicx}
\usepackage{xcolor}
\usepackage{booktabs}
\usepackage{array}
\usepackage{tabularx}
\usepackage{threeparttable}
\usepackage{caption}
\usepackage{subcaption}
\usepackage{makecell}
\usepackage{pifont}

\usepackage{enumitem}

\usepackage{algorithm}
\usepackage{algpseudocode}

\usepackage[round]{natbib}
\usepackage[colorlinks=true,citecolor=blue,linkcolor=red,urlcolor=magenta]{hyperref}
\usepackage{cleveref}

\crefformat{equation}{(#2#1#3)}
\Crefformat{equation}{(#2#1#3)}
\crefrangeformat{equation}{(#3#1#4)--(#5#2#6)}
\Crefrangeformat{equation}{(#3#1#4)--(#5#2#6)}
\crefmultiformat{equation}{(#2#1#3)}{ and (#2#1#3)}{, (#2#1#3)}{, and (#2#1#3)}
\Crefmultiformat{equation}{(#2#1#3)}{ and (#2#1#3)}{, (#2#1#3)}{, and (#2#1#3)}

\input{macro}

\title{Minimax Optimal Regret for Causal Logistic Bandits with Counterfactual Fairness}

\author{
Junhyuk Huh$^{1}$ \quad Seoungbin Bae$^{2}$ \quad Dabeen Lee$^{3}$\\[0.3em]
$^{1}$Faculty of Mathematics, University of Cambridge\\
$^{2}$Department of Industrial \& Systems Engineering, KAIST\\
$^{3}$Department of Mathematical Sciences, Seoul National University\\
\texttt{jh2556@cam.ac.uk, sbbae31@kaist.ac.kr, dabeenl@snu.ac.kr}
}

\date{}

\begin{document}
\maketitle

\begin{abstract}
We study causal logistic bandits with counterfactual fairness constraints. The causal structure is given through known factual and counterfactual feature maps that share an unknown logistic reward parameter, but the learner observes only factual rewards. Consequently, the directions determining counterfactual feasibility need not be identifiable from the available feedback. The closest prior analyses either omit a coverage condition or impose a comparatively strong one, and do not establish matching lower bounds. We first show that some coverage condition is necessary: without a coverage-type restriction, factually indistinguishable environments with different optimal fair actions force $\Omega(T)$ expected joint loss. Under a weaker full-rank condition on the factual covariance pooled across actions, we identify a target-specific information scale $V_\star$ that measures the difficulty of estimating rewards and counterfactual effects from factual feedback. We construct worst-case families satisfying this condition on which every policy incurs expected joint loss $\Omega\left(\left[V_\star\min\{\log K,d\}\right]^{1/3}T^{2/3}\right)$. We also give an explore--then--exploit procedure tuned using $V_\star$ and an adaptive algorithm that does not require its value. Both algorithms achieve $\max\{R_T,V_T\}=\widetilde{O}\left(\left[V_\star\min\{\log K,d\}\right]^{1/3}T^{2/3}+\kappa d/\sigma_0^2\right)$, where $R_T$ is regret relative to the best fair action and $V_T$ denotes the cumulative stage-wise positive violations. Thus the upper and lower bounds match in their leading dependence on $T$, $V_\star$, and $\min\{\log K,d\}$, up to logarithmic factors.
\end{abstract}

\section{Introduction}

Contextual bandits model sequential decisions in which a learner observes a context, chooses an action, and receives feedback only for the chosen action. Causal bandits use known causal structure to share information across interventions \citep{lattimore2016causal,sen2017identifying,lu2020regret}. We study a \textit{causal logistic} bandit in which the causal structure is given through known factual and counterfactual feature maps. These maps encode a specified intervention on a protected attribute, while the logistic reward parameter shared by the two maps is unknown. Our focus is the resulting constrained bandit learning problem: the learner seeks high rewards while keeping the factual--counterfactual mean reward difference within a prescribed tolerance.

The difficulty is that the learner observes only the factual reward generated by the chosen action. The counterfactual reward after intervention, and hence the statistic used to determine fairness, is never observed \citep{chen2025causal}. The learner must use factual feedback both to identify rewarding actions and to determine which actions are fair. We ask whether factual observations provide enough information about the parameter directions that determine the counterfactual effect.

The closest predecessors of our work are \citet{chen2025causal} and \citet{chen2026causal}. The former introduces causal logistic bandits with a counterfactual fairness constraint. Its assumptions, however, do not ensure that factual observations cover the parameter directions needed to estimate the counterfactual feedback. This turns out to be a fundamental problem: if counterfactual feedback depends on a parameter component that factual feedback cannot identify, then no policy can guarantee both sublinear regret and sublinear violation. The latter makes this issue explicit by imposing a covariate diversity assumption and adding forced exploration, obtaining sublinear guarantees. Its assumption, however, requires every action separately to cover every parameter direction, whereas we only require the covariance pooled across actions to be full rank. Moreover, \citet{chen2026causal} does not provide a corresponding lower bound establishing whether its upper rate is unavoidable.

Under the pooled coverage condition, the learning difficulty depends on how accurately factual observations can estimate rewards and counterfactual effects. We measure this through a target scale $V_\star$, the largest local variance for estimating these quantities from uniform factual exploration.

\paragraph{Results.} We write $R_T$ for regret relative to the highest-reward fair oracle and $V_T$ for the cumulative stage-wise positive violations, which cannot cancel across rounds. Their sum $R_T+V_T$ is the joint loss. These quantities are defined precisely in Section~\ref{sec:model}. Our contributions are threefold.
\begin{enumerate}
\item \emph{Necessity of a coverage condition.} Without a coverage-type restriction, we construct two environments that generate identical observable history distributions under every policy but have different optimal fair actions. Every policy therefore incurs $\E[R_T+V_T]=\Omega(T)$ in at least one of them, ruling out uniform guarantees of both sublinear regret and sublinear violation over the unrestricted class. For the next two results, we adopt a sufficient condition: the factual covariance averaged over uniform action sampling is full rank (Assumption~\ref{ass:coverage}), while individual action covariances may be singular.

\item \emph{Lower bounds on worst-case families.} Even under this coverage condition, we construct a two-environment, three-action family on which every policy incurs $\Omega(V_\star^{1/3}T^{2/3})$ expected joint loss in at least one environment. We then extend the construction to $M$ environments and $K$ actions, obtaining an additional $(\log M)^{1/3}$ factor, with $\log M=\Theta(\min\{\log K,d\})$.

\item \emph{Matching upper bounds with adaptation to $V_\star$.} Our explore--then--exploit algorithm with a fixed exploration length $m\geq n_0$ guarantees with high probability that
 \[
   \max\{{R_T, V_T}\}
   \le m+1024T\sqrt{\frac{V_\star\ell}{m}}
 \]
where $n_0$ is an initial estimation threshold and $\ell=O(\log(T/\delta)+\min\{\log K,d\})$ is a confidence factor. If $V_\star$ is known, balancing the two terms gives $O\bigl(n_0+(V_\star\ell)^{1/3}T^{2/3}\bigr)$ joint loss. When $V_\star$ is unknown, an adaptive procedure uses observable confidence widths to choose when to collect more exploration data, retaining the same $V_\star$-dependent rate without taking $V_\star$ as an input. These upper bounds match the lower bound on worst-case families in their leading dependence on $T$, $V_\star$, and $\min\{\log K,d\}$, up to logarithms.
\end{enumerate}

Table~\ref{tab:results} summarizes our results, and notation $\widetilde O$ hides logarithmic factors in $T$, $1/\delta$, and fixed constants.
\begin{table}[t]
\centering
\captionsetup{skip=4pt}
\caption{Upper and lower bounds for the joint loss, which is the sum of regret and violation.}
\label{tab:results}
\begin{tabular}{@{}lll@{}}
\toprule
Result & Knowledge of $V_\star$ & Guarantee \\
\midrule
Fixed ETE (Alg.~\ref{alg:fixed-ete}) & Used to choose $m$
& $\widetilde O\bigl([V_\star\min\{\log K, d\}]^{1/3}T^{2/3}
  +\kappa d/\sigma_0^2\bigr)$ \\
Adaptive ETE (Alg.~\ref{alg:adaptive-ete}) & Not assumed
& $\widetilde O\bigl([V_\star\min\{\log K, d\}]^{1/3}T^{2/3}
  +\kappa d/\sigma_0^2\bigr)$ \\
Lower bound (Theorem~\ref{thm:general}) & ---
& $\Omega\bigl((V_\star\min\{\log K, d\})^{1/3}T^{2/3}\bigr)$ \\
 Proposition~\ref{prop:no-coverage} & ---
& $\Omega(T)$ without a coverage assumption\\
\bottomrule
\end{tabular}
\end{table}
The two upper bounds hold with probability at least $1-\delta$ and control $\max\{R_T,V_T\}$. The lower bounds concern expected $R_T+V_T$ on worst-case families. A discussion of related work is provided in Appendix~\ref{sec:additional-related-work}.

\section{Problem Formulation}
\label{sec:model}

Fix a horizon $T\ge2$, a finite action set $\cA=[K]$, a context distribution $P_X$, and a parameter set $\Theta\subset\R^d$.  Let $\mu(u)={1}/({1+e^{-u}})$ and note that  $\dot\mu(u)=\mu(u)(1-\mu(u))$. For every context--action pair $(x,a)$ and $\theta\in\Theta$, define
\begin{gather*}
 r_\theta(x,a)=\mu\bigl(\phi(x,a)^\top\theta\bigr),
  \quad 
 r_\theta^G(x,a)=\mu\bigl(\phi^G(x,a)^\top\theta\bigr),\quad \Delta_\theta(x,a)=r_\theta(x,a)-r_\theta^G(x,a).
\end{gather*}
The feature maps $\phi$ and $\phi^G$, sending a context-action pair to a feature vector in $\R^d$, are known to the learner. Here $\phi^G(x,a)$ is an alternative notation for $\phi(x',a)$, where $x'$ is a context $x$ after counterfactual intervention. We call $\phi(x,a)^\top\theta$ the factual logit and $\phi^G(x,a)^\top\theta$ the counterfactual logit. We use the term \emph{effect} for the factual--counterfactual statistic $\Delta_\theta(x,a)$.

In each round $t \in [T]$, a context $X_t\sim P_X$ is drawn independently. The learner observes $\phi(X_t,a)$ and $\phi^G(X_t,a)$ for every $a\in\cA$ and chooses an action $A_t$. A reward is then drawn as $R_t\sim\Bern\bigl(r_{\theta^\star}(X_t,A_t)\bigr)$. The true parameter $\theta^\star$ is unknown, and neither $r_{\theta^\star}^G(X_t,A_t)$ nor $\Delta_{\theta^\star}(X_t,A_t)$ is observed. The use of the same parameter $\theta^\star$ in the two feature maps is the structural assumption that permits inferring counterfactual information from factual observations.
\begin{assumption}
\label{ass:bounded}
$\Theta$ is convex and compact, $\theta^\star\in\Theta$, and $\norm{\theta}_2\le S$ for every $\theta\in\Theta$.  Moreover, $\norm{\phi(x,a)}_2\le1, \norm{\phi^G(x,a)}_2\le1$ for every $(x,a)$.
\end{assumption}
We define the global inverse curvature $ \kappa =\sup_{\theta\in\Theta,x,a} \max\{ \dot\mu(\phi(x,a)^\top\theta)^{-1}, \dot\mu(\phi^G(x,a)^\top\theta)^{-1} \}$, which is finite by Assumption~\ref{ass:bounded}. These boundedness and curvature conditions are standard in logistic bandits \citep{faury2020improved}. We assume that $\kappa$ is known to the learner; a known upper bound can replace it in the algorithms and their guarantees.

Fix a threshold $\tau\in[0,1]$ and define a true feasible set $\cA_\tau(x) =\{a\in\cA:|\Delta_{\theta^\star}(x,a)|\le\tau\}$.
\begin{assumption}[Fair sets are nonempty]
\label{ass:fair}
$\cA_\tau(x)\ne\varnothing$ for $P_X$-almost every $x$.
\end{assumption}

Using any fixed deterministic tie-breaking, let $a^\star(x) \in\arg\max_{a\in\cA_\tau(x)}r_{\theta^\star}(x,a)$. The cumulative regret relative to this fair oracle and the cumulative stage-wise positive violations are
\begin{gather*}
 R_T=\sum_{t=1}^T
 \left[r_{\theta^\star}(X_t,a^\star(X_t))
       -r_{\theta^\star}(X_t,A_t)\right],\quad V_T=\sum_{t=1}^T
 \pos{|\Delta_{\theta^\star}(X_t,A_t)|-\tau}.
\end{gather*}
We call the sum $R_T+V_T$ the \emph{joint loss}.

\section{Coverage and the Target Scale \texorpdfstring{$V_\star$}{V-star}}
\label{sec:vstar}

\subsection{Coverage and Learnability}
\label{sec:coverage-necessity}

Let $X\sim P_X$ and $A\sim\mathrm{Unif}(\cA)$ independently. Define the \emph{pooled factual covariance} $\Sigma$ and Fisher information $I(\theta)$ as
\begin{gather*}
 \Sigma
 =\E\bigl[\phi(X,A)\phi(X,A)^\top\bigr],\quad I(\theta)
 =\E\left[
 \dot\mu\bigl(\phi(X,A)^\top\theta\bigr)
 \phi(X,A)\phi(X,A)^\top
 \right].
\end{gather*}

\begin{assumption}[Coverage]
\label{ass:coverage}
$\lambda_{\min}(\Sigma)=\sigma_0^2>0$, and $\sigma_0$ is known to the learner.
\end{assumption}

The pooled full-rank condition in Assumption~\ref{ass:coverage} is also used in finite-armed contextual generalized linear bandits \citep{li2017provably,jun2021improved}. We will show that the same assumption on factual features suffices for sublinear regret and violation under counterfactual fairness constraints.

Without a restriction on parameter directions that factual observations cannot identify, no policy can guarantee both sublinear regret and sublinear violation uniformly over the model class.

\begin{proposition}[Linear loss without counterfactual learnability]
\label{prop:no-coverage}
There exists a two-dimensional, two-action causal logistic bandit with $\theta_0,\theta_1$ satisfying Assumptions~\ref{ass:bounded} and~\ref{ass:fair} such that, for every policy $\pi$,
$$\max\left\{
\E_{0,\pi}[R_T], \E_{1,\pi}[V_T] \right\} \ge \frac{T}{6},\quad \max_{j\in\{0,1\}} \E_{j,\pi}[R_T+V_T] \ge \frac{T}{8}$$ where $\E_{j,\pi}$ denotes expectation under $\theta^\star=\theta_j$ and policy $\pi$ for $j\in\{0,1\}$.
\end{proposition}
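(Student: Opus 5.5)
The plan is to build two environments that produce exactly the same distribution of factual observations under every policy, but that disagree on which action is fair. Since the learner cannot tell them apart, any allocation of pulls that is good in one environment is bad in the other. Because $\phi$ and $\phi^G$ share the parameter, I will make the factual features lie entirely along $e_1$ and let the counterfactual feature of one action load on $e_2$. Then $\theta^\star_2$ is invisible to the learner but determines fairness.

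\emph{Construction.} Take $d=2$, $K=2$, a single context $x$ (so $P_X$ is a point mass), and $\tau=0$. Set
\[
\phi(x,1)=e_1,\quad \phi^G(x,1)=(0.6,0.8)^\top,\quad \phi(x,2)=\phi^G(x,2)=0 .
\]
Let $\theta_0=(2,1)^\top$, chosen so that $\phi^G(x,1)^\top\theta_0=1.2+0.8=2=\phi(x,1)^\top\theta_0$. Let $\theta_1=(2,-6.5)^\top$, so that $\phi^G(x,1)^\top\theta_1=-4$. Take $\Theta$ to be the closed Euclidean ball of radius $S=7$, which is convex, compact and contains both parameters; the features have norm at most $1$, so Assumption~\ref{ass:bounded} holds. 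Action $2$ has $\Delta\equiv0$, so it is always fair and Assumption~\ref{ass:fair} holds.

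Under either environment, $r(x,1)=\mu(2)\approx0.881$ and $r(x,2)=1/2$. The gap is therefore $g=\mu(2)-1/2\approx0.381$.
\begin{itemize}
\item Under $\theta_0$ both actions are fair, $a^\star=1$, and $V_T=0$.
\item Under $\theta_1$, action $1$ has $|\Delta|=w:=\mu(2)-\mu(-4)\approx0.863>0=\tau$. So action $1$ is unfair, $a^\star=2$, and each pull of action $1$ adds $w$ to $V_T$ and $-g$ to $R_T$.
\end{itemize}

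\emph{Indistinguishability.} The learner observes the features (identical in both environments) and the rewards $R_t\sim\Bern(r_{\theta^\star}(x,A_t))$. These reward laws coincide under $\theta_0$ and $\theta_1$ because $\phi$ has no $e_2$ component. By induction on $t$, the joint law of $(A_1,R_1,\dots,A_T,R_T)$ under any policy $\pi$ is the same in both environments. Hence $N:=\E_{j,\pi}[\#\{t:A_t=1\}]$ does not depend on $j$.

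\emph{Loss computation.} The losses are
\[
\E_{0,\pi}[R_T]=g(T-N),\qquad \E_{1,\pi}[V_T]=wN,\qquad \E_{1,\pi}[R_T+V_T]=(w-g)N .
\]
\begin{itemize}
\item For the first claim, $\max\{g(T-N),wN\}\ge \frac{gw}{g+w}T$. With the numbers above this is $\approx0.264\,T\ge T/6$.
\item For the second claim, note that $R_T$ can be negative under $\theta_1$. Still, $\max\{g(T-N),(w-g)N\}\ge \frac{g(w-g)}{w}T\approx0.212\,T\ge T/8$.
\end{itemize}

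\emph{Main obstacle.} The only delicate step is choosing the constants. The gap $g$ must be large, and $w-g$ must also be large, which forces the counterfactual logit under $\theta_1$ to be very negative. This is why $\theta_1$ has a large second coordinate and $S=7$. I would verify the numerical values of $\mu(2)$ and $\mu(-4)$ explicitly when writing the proof.
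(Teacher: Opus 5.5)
Your construction is correct and is essentially the paper's own argument: a single context, an always-fair action with zero features, and an action whose counterfactual feature loads on a direction orthogonal to every factual feature, so that $\theta_0$ and $\theta_1$ induce identical history laws under every policy but have different fair oracles. The differences are cosmetic and do not change the argument: you use $\tau=0$ instead of $1/8$, a ball instead of the segment for $\Theta$, numerical rather than exact rational means, and you minimize $\max\{g(T-N),wN\}$ over $N$ rather than splitting at $N=T/3$; your resulting constants even exceed $T/6$ and $T/8$.
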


The construction for Proposition~\ref{prop:no-coverage} uses a single context $x_0$, two actions, and two parameters $\theta_0,\theta_1$. The factual reward laws are identical under both parameters, but one action's counterfactual reward differs, changing the optimal fair action. The full proof is in Appendix~\ref{app:no-coverage-proof}.

The issue in the construction is that factual features are orthogonal to a parameter direction that changes the counterfactual effect. Assumption~\ref{ass:coverage} rules out this possibility by requiring $\Sigma$ to be positive definite, so $\ker(\Sigma)=\{0\}$. Individual action covariances may still be singular. Thus, it is weaker than the covariate-diversity condition in Assumption~4 of \cite{chen2026causal}.

\subsection{The Target Scale \texorpdfstring{$V_\star$}{V-star}}

We now use Assumption~\ref{ass:coverage} to quantify the factual information available for the reward and effect directions. Write $I_\star=I(\theta^\star)$.  By the definition of $\kappa$, we have $I_\star\succeq\frac{1}{\kappa}\Sigma \succeq\frac{\sigma_0^2}{\kappa}I_d$ and thus $I_\star^{-1}\preceq\frac{\kappa}{\sigma_0^2}I_d$. With the true parameter $\theta^\star$, define $q_r(x,a)=\dot\mu\bigl(\phi(x,a)^\top\theta^\star\bigr)\phi(x,a)$, $q_G(x,a)=\dot\mu\bigl(\phi^G(x,a)^\top\theta^\star\bigr)\phi^G(x,a)$, and $q_\Delta(x,a) =q_r(x,a)-q_G(x,a)$. These are the gradients of factual reward, counterfactual reward, and the effect at $\theta^\star$, respectively.

\begin{definition}[Target scale]
\label{def:vstar}
Given $I_\star=I(\theta^\star)$, the target scale is defined as
\begin{equation*}
 V_\star
 =\esssup_{x\sim P_X}\max_{a\in\cA}
 \max\left\{
 \normM{q_r(x,a)}{I_\star^{-1}}^2,
 \normM{q_\Delta(x,a)}{I_\star^{-1}}^2
 \right\}.
\end{equation*}
\end{definition}

The matrix $I_\star$ measures the information about each parameter direction provided by uniform factual exploration. The vectors $q_r$ and $q_\Delta$ record which directions are relevant for reward and fairness. Hence $q^\top I_\star^{-1}q$ gives the local variance for estimating the target in direction $q$ from factual data. The essential supremum bounds this variance for $P_X$-almost every context.

Definition~\ref{def:vstar} also gives $ \normM{q_r}{I_\star^{-1}}\le\sqrt{V_\star}$ and $ \normM{q_\Delta}{I_\star^{-1}}\le\sqrt{V_\star}$. Moreover, since  $q_G=q_r-q_\Delta$, it follows that $\norm[I_\star^{-1}]{q_G}\le2\sqrt{V_\star}$ and $ \norm[I_\star^{-1}]{q_r} +\norm[I_\star^{-1}]{q_G} \le3\sqrt{V_\star}$ hold almost surely. Furthermore, $\dot\mu\le1/4$ and the feature norms are at most one, so $V_\star\le \kappa/(4\sigma_0^2)$. In the lower bound, the inequality $(q^\top h)^2 \le \bigl(q^\top I^{-1}q\bigr)\bigl(h^\top Ih\bigr)$ relates the change in a target along a parameter direction $h$ to the information available in that direction.

\section{Lower Bounds}
\label{sec:lower}

We derive our lower bounds using information-theoretic arguments for stochastic bandits~\citep{lattimore2019bandit}. To the best of our knowledge, these are the first minimax lower bounds for causal logistic bandits with counterfactual fairness constraints. We construct environments with different optimal fair actions that can be distinguished only by choosing low-reward actions. The factual and counterfactual features are chosen so that the information needed to distinguish these environments is determined by $V_\star$, while the pooled factual covariance $\Sigma$ remains full rank.

We compare environments with different true parameters: in environment $j$, the true parameter is $\theta^\star = \theta_j$. Write $V_{\star,j}$ for the target scale in Definition~\ref{def:vstar} evaluated at $\theta_j$. The resulting lower bounds on the worst-case expected joint loss match the upper bounds in Section~\ref{sec:upper} in their leading dependence on $T$ and $V_\star$, up to logarithmic factors.

\subsection{Two-Environment Three-Action Construction}
\label{sec:binary}

The first construction has dimension $d=2$, three actions, and two possible parameters $\theta_0$ and $\theta_1$. Contexts are drawn independently and uniformly from $\{-1,1\}$. The three actions, $H$ (high-reward), $S$ (safe), and $E$ (informative), have the following roles.

\begin{table}[H]
\centering
\captionsetup{skip=4pt}
\caption{Roles of the three actions in the two-dimensional family.}
\label{tab:binary-actions}
\begin{tabular}{@{}llll@{}}
\toprule
Action & Reward role & Information role & Fairness role \\
\midrule
$H$ & high reward & uninformative & fair only in environment $0$ \\
$S$ & slightly lower reward & uninformative & always fair \\
$E$ & low reward & informative  & always fair \\
\bottomrule
\end{tabular}
\end{table}

Set $v_0=\dot\mu(3/4)$ and $\Delta_E=1/2-\mu(-3/4)-1/128>0$. Fix $B\ge1$ and $\rho\in(0,1]$, write $v_B=\dot\mu(B)$, and define $C=(1+\rho^2)/(4e^{1/4}\rho^2v_B)$. The full construction is given in the appendix.

For a parameter $\alpha$ to be chosen later, the family has the following properties: (1) in environment $0$, $H$ is the optimal fair action, and every other action incurs a joint cost of at least a constant multiple of $\alpha$; (2) in environment $1$, $S$ is the optimal fair action, and choosing $H$ incurs a joint cost of at least a constant multiple of $\alpha$; (3) each selection of $E$ incurs a joint cost of at least $\Delta_E$ in both environments; (4) $H$ and $S$ have the same factual reward laws in the two environments; (5) the KL divergence between the two environments for a single observation from $E$ is at most $\alpha^2/(8C)$; (6) both environments have target scale $V_{\star,j}\asymp C$.

\begin{theorem}[Binary lower bound]
\label{thm:binary}
There exist universal constants $c_1,C_0>0$ such that, for every $B\ge1$, every $\rho\in(0,1]$, and every horizon $T\ge C_0C$, there exists a two-environment family with $d=2$ and $K=3$, satisfying Assumptions~\ref{ass:bounded}--\ref{ass:coverage} and $V_{\star,j}\asymp C$, for which
\begin{equation}
 \inf_\pi\max_{j\in\{0,1\}}
 \E_{j,\pi}[R_T+V_T]
 \ge c_1C^{1/3}T^{2/3}.
 \label{eq:binary-lower}
\end{equation}
\end{theorem}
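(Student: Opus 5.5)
The argument is a two-point lower bound in the style of \citet{lattimore2019bandit}. It uses the family whose properties (1)--(6) are listed just before the theorem, and it is tuned by choosing $\alpha$ to balance exploration cost against distinguishability. Let $N_A$ be the number of rounds in which action $A\in\{H,S,E\}$ is chosen, and let $c>0$ be the constant from properties (1)--(2). Properties (1)--(3) give the following per-environment loss decompositions:
\[
\E_{0,\pi}[R_T+V_T]\ge c\alpha\,\E_{0,\pi}[N_S+N_E]+\Delta_E\E_{0,\pi}[N_E],
\qquad
\E_{1,\pi}[R_T+V_T]\ge c\alpha\,\E_{1,\pi}[N_H]+\Delta_E\E_{1,\pi}[N_E].
\]

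The next step bounds how well the policy can tell the two environments apart. By property (4), observations from $H$ and $S$ have identical laws in both environments, and the context law is common to both. The divergence decomposition therefore gives
\[
\mathrm{KL}(\mathbb P_{0,\pi},\mathbb P_{1,\pi})=\E_{0,\pi}[N_E]\,\mathrm{KL}_E\le \E_{0,\pi}[N_E]\,\alpha^2/(8C),
\]
using property (5), where the per-observation KL is taken conditionally on the context. Now apply the Bretagnolle--Huber inequality to the event $\{N_H\le T/2\}$:
\[
\mathbb P_{0}(N_H\le T/2)+\mathbb P_{1}(N_H>T/2)\ge\tfrac12\exp\bigl(-\E_{0}[N_E]\alpha^2/(8C)\bigr).
\]
On $\{N_H\le T/2\}$ the environment-0 loss is at least $c\alpha T/2$, and on the complement the environment-1 loss is at least $c\alpha T/2$. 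Combining these, the sum of the two expected losses is at least
\[
\tfrac{c\alpha T}{4}\exp\bigl(-\E_0[N_E]\alpha^2/(8C)\bigr)+\Delta_E\E_0[N_E].
\]

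The final step is a case split on $n=\E_0[N_E]$. If $n\ge C^{1/3}T^{2/3}$, the exploration term alone gives loss of order $C^{1/3}T^{2/3}$, since $\Delta_E$ is a positive absolute constant. Otherwise set $\alpha=(C/T)^{1/3}$. Then $n\alpha^2/(8C)\le 1/8$, so the first term is at least of order $\alpha T=C^{1/3}T^{2/3}$. Taking the max over the two environments loses only a factor of $2$. This gives \eqref{eq:binary-lower} with a universal $c_1$.

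The condition $T\ge C_0C$ is needed so that $\alpha\le C_0^{-1/3}$ is small enough for the construction to be valid. In particular, the parameters must stay in $\Theta$ with bounded norm, the gap $\alpha$ must remain below the fairness and reward margins, and the three actions must keep the roles in Table~\ref{tab:binary-actions}.

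The main obstacle is the construction itself, namely checking properties (1)--(6) simultaneously. The two parameters $\theta_0,\theta_1$ must differ by a direction $h$ with the following properties:
\begin{itemize}[nosep]
\item $h$ is invisible in the factual features of $H$ and $S$, so those rewards agree.
\item $h$ moves the counterfactual effect of $H$ across $\tau$ by an amount of order $\alpha$.
\item $h$ perturbs $E$'s factual logit only by order $\alpha/\sqrt{C}$, which is what keeps the KL at most $\alpha^2/(8C)$.
\item $\Sigma$ remains full rank and $V_{\star,j}\asymp C$.
\end{itemize}
I would take $E$'s factual feature to be $\rho$-scaled in the $h$ direction with logit near $B$, so that the Fisher information along $h$ is of order $\rho^2v_B$. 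Here the inequality $(q^\top h)^2\le(q^\top I^{-1}q)(h^\top Ih)$ from Section~\ref{sec:vstar} is tight for $q=q_\Delta(H)$, and this is what yields $V_\star\asymp(1+\rho^2)/(\rho^2v_B)=\Theta(C)$. After fixing the features, I would verify the remaining properties by direct computation with $\mu$ and $\dot\mu$.
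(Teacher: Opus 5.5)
Your argument is essentially the paper's proof. Both use the same two-point Bretagnolle--Huber bound with KL accrued only through pulls of $E$, a case split at $\E_0[N_E]\approx C/\alpha^2$, and $\alpha\asymp(C/T)^{1/3}$; this $\alpha$ must be fixed before the case split, since the family cannot depend on the policy. The only difference is that you apply Bretagnolle--Huber once to the trajectory event $\{N_H\le T/2\}$, whereas the paper applies it round by round to $\{A_t=H\}$ and sums over $t$, which changes only constants.

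Your construction sketch needs two corrections. First, $E$'s factual logit must sit near $-B$, not $+B$; otherwise $E$ is the best fair action in both environments. Second, its shift between environments is $\rho\alpha/\sqrt{1+\rho^2}\asymp\alpha/\sqrt{Cv_B}$, not $\alpha/\sqrt C$. It is the per-pull KL, about $\tfrac12 v_B\rho^2\alpha^2/(1+\rho^2)$, that is of order $\alpha^2/C$, and a shift as small as $\alpha/\sqrt C$ would inflate $V_{\star,j}$ by a factor of order $1/v_B$.
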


Only observations from the costly action $E$ distinguish the two environments. Any policy must therefore incur worst-case expected joint loss $\Omega(C/\alpha^2)$ from exploration or $\Omega(\alpha T)$ from uncertainty about whether $H$ is fair. Balancing these costs with $\alpha\asymp(C/T)^{1/3}$ yields the lower bound.

\subsection{The General Lower Bound}

We extend Section~\ref{sec:binary} to $M$ environments by replacing the single high-reward action with candidates $H_1,\ldots,H_M$. All candidates have factual reward $1/2$, but separated parameter directions and suitable counterfactual features make $H_j$ the only fair candidate in environment $j$. Every other action incurs joint cost $\Omega(\alpha)$. A safe action remains available, and only costly informative actions distinguish the environments. The action budget and a packing in $d-1$ dimensions allow $\log M\asymp\min\{\log K,d\}$. The exact parameter choices and construction are given in Appendix~\ref{app:general-construction}.

Let $D$ denote the information scale defined in the theorem below, with $V_{\star,j}\asymp D$. Each informative selection incurs constant joint cost, while the KL divergence between any two environments for one such observation is $O(\alpha^2/D)$. Reliably identifying $H_j$ requires information of order $\log M$. Fano's inequality~\citep{cover1991elements} then forces worst-case expected joint loss $\Omega(D\log M/\alpha^2)$ from exploration or $\Omega(\alpha T)$ from failing to select $H_j$. Balancing these costs yields the rate $(D\log M)^{1/3}T^{2/3}$ below.

\begin{theorem}[General lower bound]
\label{thm:general}
There exist constants $c_2,C_2>0$ with the following property. Fix $B\ge1$, $\rho\in(0,1]$, $d\ge46$, and $K\ge8$, and set $D=(d-1)(1+\rho^2)/(\rho^2\dot\mu(B))$. For every horizon $T\ge C_2D\min\{\log K,d\}$, there exists a finite family of $d$-dimensional, $K$-action environments indexed by $j\in\mathcal J$, each satisfying Assumptions~\ref{ass:bounded}--\ref{ass:coverage} and $V_{\star,j}\asymp D$, such that
\begin{equation}
 \inf_\pi\max_{j\in\mathcal J}
 \E_{j,\pi}[R_T+V_T]
 \ge c_2\bigl(D\min\{\log K,d\}\bigr)^{1/3}T^{2/3}.
 \label{eq:general-lower}
\end{equation}
\end{theorem}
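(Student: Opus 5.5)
We prove Theorem~\ref{thm:general} with a Fano-type argument over an $M$-element family built on the two-environment construction of Theorem~\ref{thm:binary}. We first fix a packing $\{u_1,\dots,u_M\}\subset\mathbb S^{d-2}$ of the unit sphere in $d-1$ dimensions with pairwise inner products at most $1/2$. We take $M=\min\{\lfloor (K-2)/3\rfloor, e^{c(d-1)}\}$, so that $\log M\asymp\min\{\log K,d\}$; here $d\ge46$ and $K\ge8$ guarantee $M\ge2$ with room to spare. The actions are $H_1,\dots,H_M$, a safe action $S$, and informative actions $E$. Environment $j$ has parameter $\theta_j=(B,\,\alpha' u_j)$, with the first coordinate carrying a common offset and $\alpha'\asymp\alpha\sqrt{D/(d-1)}$ scaled appropriately.

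The features are chosen to satisfy four requirements.
\begin{itemize}
\item Each $H_i$ has a factual feature supported on the first coordinate only, so its factual reward is $1/2$ in every environment and carries no information.
\item The counterfactual feature of $H_i$ involves $\rho u_i$, so that $|\Delta_{\theta_j}(x,H_i)|\le\tau$ iff $i=j$. The inner-product gap $1-1/2$ from the packing makes every $H_i$ with $i\ne j$ violate by $\Omega(\alpha)$.
\item $S$ is fair with reward $1/2-c\alpha$.
\item The $E$ actions have reward bounded away from $1/2$, which gives constant joint cost $\ge\Delta_E$ per pull, and factual features spanning the $d-1$ packing directions with context signs. They are weighted by $\rho$ and curvature $\dot\mu(B)$ so that $\Sigma$ is full rank and Assumptions~\ref{ass:bounded}--\ref{ass:coverage} hold.
\end{itemize}
We then verify, as in Theorem~\ref{thm:binary}, that $I(\theta_j)^{-1}$ has eigenvalues of order $(d-1)(1+\rho^2)/(\rho^2\dot\mu(B))$ along the relevant directions. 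Together with the bounds on $q_r$ and $q_\Delta$, this gives $V_{\star,j}\asymp D$. The factor $d-1$ arises because uniform sampling spreads exploration over $d-1$ directions.

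The information step goes as follows. By the divergence decomposition \citep{lattimore2019bandit}, $\mathrm{KL}(\mathbb P_{j,\pi},\mathbb P_{0,\pi})=\sum_{a}\E_{0,\pi}[N_a(T)]\,\mathrm{KL}_a(j,0)$. Here we use an auxiliary reference environment $0$ with parameter $(B,0)$, in which all $H_i$ are unfair, to obtain the averaged Fano form. Only $E$-type actions contribute, and the per-pull divergence is $O(\alpha^2/D)$ for each $j$. Moreover, averaging over $j$ and using that the $u_j$ are nearly isotropic gives $\frac1M\sum_j \mathrm{KL}\le O(\alpha^2 N_E/(D))$, where $N_E=\E_0[\text{number of }E\text{ pulls}]$. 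Let $\hat J$ be the most-pulled $H_i$. Fano's inequality gives $\frac1M\sum_j\mathbb P_j(\hat J\ne j)\ge 1-\frac{O(\alpha^2N_E/D)+\log2}{\log M}$.

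We then split into two cases.
\begin{itemize}
\item If $N_E\ge cD\log M/\alpha^2$, the exploration cost alone is $\ge\Delta_E N_E$ in the reference environment. This transfers to some $j$, either by repeating the argument with the reference replaced by an environment in the family, or via a change-of-measure bound on $\E_j[N_E]$.
\item Otherwise, with constant probability under a uniformly random $j$, the policy spends at least $T/2$ rounds off $H_j$. Each such round costs $\Omega(\alpha)$, since $S$ loses $c\alpha$ in reward and $H_i$ with $i\ne j$ violates by $\Omega(\alpha)$. This gives loss $\Omega(\alpha T)$.
\end{itemize}
Setting $\alpha=(D\log M/T)^{1/3}$, which is at most a small constant exactly when $T\ge C_2D\min\{\log K,d\}$, yields the bound~\eqref{eq:general-lower}.

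The main obstacle is the geometric construction. We must simultaneously (i) keep the factual laws of $H_i$ and $S$ identical across environments, (ii) make the fairness indicator of $H_i$ switch with separation $\Omega(\alpha)$ using only the packing gap, and (iii) keep $V_{\star,j}\asymp D$ rather than larger, while $\Sigma$ stays full rank with all norms at most $1$. Fano's step itself is routine once the per-pull KL bound $O(\alpha^2/D)$ and the per-round cost separations are in place. The first approach I would try for the construction is to reuse the binary gadget coordinate-wise: the $E$ features are $\pm e_k$ scaled by $\rho$, and the $H_i$ counterfactual features are $\phi^G=(\cdot,\rho u_i)$ calibrated by a first-coordinate offset so that the threshold $\tau$ sits halfway between the $i=j$ and $i\ne j$ effect values.
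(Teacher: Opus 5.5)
Your overall plan is the paper's: a packing-indexed family of $M$ environments, costly informative actions with per-pull KL $O(\alpha^2/D)$, a split on the amount of exploration, Fano in the low-exploration case, and $\alpha\asymp(D\log M/T)^{1/3}$. Taking $\hat J$ to be the most-pulled $H_i$ and applying Fano once is a fine substitute for the paper's per-round estimator. The information step, however, fails as you set it up. The divergence decomposition of $\operatorname{KL}(\mathbb P_{j,\pi}\|\mathbb P_{0,\pi})$ weights the per-pull divergences by $\E_{j,\pi}[N_a(T)]$. Your formula with $\E_{0,\pi}$ decomposes the reverse divergence $\operatorname{KL}(\mathbb P_{0,\pi}\|\mathbb P_{j,\pi})$. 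Fano needs $I(Z;\cH)\le\frac1M\sum_j\operatorname{KL}(P_j\|Q)$ with $P_j$ in the first slot, and a reverse-KL budget cannot give the $\log M$ gain. For example, if $P_j=\frac12\delta_j+\frac12\nu$ and $Q=\nu$, then $\operatorname{KL}(Q\|P_j)=\log2$ for every $j$, yet $j$ is identified with probability $1/2$. Your near-isotropy averaging over $j$ works only because every expectation is taken under the common measure $\E_0$, so it is tied to the wrong direction. The same choice causes the Case 1 problem you flag. A large $\E_0[N_E]$ bounds the loss only in the reference environment, which is outside the family. A Pinsker-type transfer $|\E_jN_E-\E_0N_E|\le T\sqrt{\operatorname{KL}/2}$ is useless when $N_E\ll T$. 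The paper instead splits on $\overline N_E=\frac1M\sum_j\E_jN_E(T)$. Case 1 then gives $\max_j\E_j[R_T+V_T]\ge\Delta_E\overline N_E$ directly, since all costs are nonnegative. Case 2 uses $I(Z;\cH_t)\le\frac1{M^2}\sum_{j,k}\operatorname{KL}(P_j^t\|P_k^t)\le\frac{2e^{1/4}\alpha^2}{D}\overline N_E$. This requires the per-pull KL bound to hold uniformly over contexts, because the learner sees $L$ before deciding whether to pull an informative action. That is why the paper uses a hypercube packing $u_i\in\{\pm1/\sqrt n\}^n$, every coordinate of which has size $1/\sqrt n$, rather than a general spherical code, whose coordinates can be of order one.

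Your construction also puts $\rho$ in the wrong place. Suppose $\phi^G(H_i)$ has packing block $\rho u_i$ while the informative features carry $\rho e_k$. Then the two factors of $\rho$ cancel. The vector $q_\Delta(H_j)$ has packing block of norm $\rho/4$, while the inverse Fisher information in those directions is of order $D$, so $V_{\star,j}\asymp\rho^2D$. Likewise, an effect gap of order $\alpha$ forces a per-pull KL of order $\dot\mu(B)\alpha^2/(d-1)$, which is a factor $\rho^{-2}$ larger than $\alpha^2/D$. You would therefore prove the bound with $\rho^2D$ in place of $D$. Your rescaling $\alpha'\asymp\alpha\sqrt{D/(d-1)}$ does not fix this. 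It makes the counterfactual logit gap of order $\alpha/\sqrt{\dot\mu(B)}$, which is unbounded in $B$. It can also push the informative logits out of the window around $-B$ where the curvature comparison holds. In the paper, $\rho$ weakens only the factual informative features $(-e_1+\xi\rho e_{L+1})/\sqrt{1+\rho^2}$. The counterfactual feature $\phi^G(H_i)=c_\alpha\bigl(\frac{\alpha}{B\sqrt{1+\rho^2}}e_1+(0,u_i)\bigr)$ has a unit-size packing block, and $\theta_j=(B\sqrt{1+\rho^2},-\alpha u_j)$ carries no extra scaling. Together these give effect logit $c_\alpha\alpha(1-u_i^\top u_j)$, per-pull KL at most $2e^{1/4}\alpha^2/D$, and $V_{\star,j}\asymp D$ simultaneously. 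Finally, $H_i$ needs a zero factual feature to have reward exactly $1/2$; a nonzero first-coordinate feature does not give this.
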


\begin{remark}[Connection to partial monitoring]
The $T^{2/3}$ rate has a similar origin to that in hard partial-monitoring problems: distinguishing competing actions requires observations from actions with a constant loss gap~\citep[Appendix~G]{kirschner2020information}. Here, the relevant hidden information is counterfactual feasibility, and learning it requires costly factual observations. Balancing exploration costs against the loss from choosing the wrong action yields the same horizon dependence.
\end{remark}

\section{Algorithms and Upper Bounds}
\label{sec:upper}

We give two algorithms that use uniform exploration to estimate rewards and counterfactual effects, then select actions using confidence intervals. More exploration incurs regret and violation but narrows these intervals. The algorithms balance this cost against the regret and violation during exploitation, which we bound by the interval widths.

\subsection{Confidence Intervals and Optimistic Screening}
\label{sec:confidence}

Both algorithms use the following estimates and action-selection rule. We describe them for $n$ exploration observations; Sections~\ref{sec:fixed-ete} and~\ref{sec:adaptive-ete} specify when to collect these observations and update the estimates.

Let $\mathcal D=((X_s^E,A_s^E,R_s^E))_{s=1}^n$ denote the observations retained for estimation, listed in order of collection, with $\phi_s^E=\phi(X_s^E,A_s^E)$. The algorithms collect these observations by scheduling exploration before seeing its contexts and choosing actions uniformly. Using all $n$ observations, define the unconstrained logistic MLE by
\begin{equation}
 \wh\theta_n
 \in\arg\max_{\theta\in\R^d}
 \sum_{s=1}^n
 \left[
 R_s^E(\phi_s^E)^\top\theta
 -\log\bigl(1+e^{(\phi_s^E)^\top\theta}\bigr)
 \right].
 \label{eq:mle}
\end{equation}
The same estimate gives the factual prediction $\mu(\phi(x,a)^\top\wh\theta_n)$ and the counterfactual prediction $\mu(\phi^G(x,a)^\top\wh\theta_n)$. To quantify their estimation errors, we use the estimated Fisher matrix
\begin{equation}
 \wh H_n
 =\sum_{s=1}^n
 \dot\mu\bigl((\phi_s^E)^\top\wh\theta_n\bigr)
 \phi_s^E(\phi_s^E)^\top.
 \label{eq:empirical-fisher}
\end{equation}
Let $\ell>0$ be a confidence factor, whose value we specify below. When the MLE is finite and $\wh H_n$ is invertible, define the logit radius $b_n(z)=11\sqrt\ell\,\normM{z}{\wh H_n^{-1}}$. The radius $b_n(z)$ is larger when $z$ points in a direction with little information in $\wh H_n$. Applying the increasing function $\mu$ to the two endpoints of the logit interval gives the mean-reward interval $[\mu(z^\top\wh\theta_n-b_n(z)),\mu(z^\top\wh\theta_n+b_n(z))]$. For a pair $(x,a)$, denote this interval by $[\underline r_n(x,a),\overline r_n(x,a)]$ when $z=\phi(x,a)$ and by $[\underline r_n^G(x,a),\overline r_n^G(x,a)]$ when $z=\phi^G(x,a)$. Since the effect is the factual mean minus the counterfactual mean, its interval is $[\underline\Delta_n(x,a),\overline\Delta_n(x,a)] = [\underline r_n(x,a)-\overline r_n^G(x,a), \overline r_n(x,a)-\underline r_n^G(x,a)]$. The largest reward or effect interval width at context $x$
\begin{equation*}
 W_n(x)=\max_{a\in\cA}
 \max\left\{
 \overline r_n(x,a)-\underline r_n(x,a),
 \overline\Delta_n(x,a)-\underline\Delta_n(x,a)
 \right\}
\end{equation*}
can be computed from the data without knowing $V_\star$.

Given these intervals, we retain every action whose effect interval intersects $[-\tau,\tau]$ and choose the retained action with the largest reward upper endpoint. The intersection means that the data have not ruled out the action being fair. We call an interval \emph{valid} when it contains the corresponding true reward or effect. If all intervals are valid, the optimal fair action is retained. Optimistic reward selection then bounds regret by the selected action's reward interval width. A retained action can still violate the constraint, but its true effect and a value in $[-\tau,\tau]$ lie in the same interval, so its violation is at most the effect interval width. In the following lemma, we suppress the sample-size subscript $n$.

\begin{lemma}
\label{lem:screening}
Fix a context $x$ with $\cA_\tau(x)\ne\varnothing$ and suppose that every reward and effect interval is valid. Retain the actions whose effect intervals intersect $[-\tau,\tau]$, and let $A$ maximize the reward upper endpoint among the retained actions. Then $a^\star(x)$ is retained, and we have $r_{\theta^\star}(x,a^\star(x))-r_{\theta^\star}(x,A) \le \overline r(x,A)-\underline r(x,A)$ and $\pos{|\Delta_{\theta^\star}(x,A)|-\tau} \le \overline\Delta(x,A)-\underline\Delta(x,A)$.
\end{lemma}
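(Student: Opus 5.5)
The proof is elementary and uses only the validity of the intervals together with the retention and selection rules; I would verify the three claims in order. Write $a^\star=a^\star(x)$, which is well defined because $\cA_\tau(x)\ne\varnothing$.

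\emph{Retention of $a^\star$.} Since $a^\star\in\cA_\tau(x)$, we have $\Delta_{\theta^\star}(x,a^\star)\in[-\tau,\tau]$. The effect interval of $a^\star$ is valid, so it also contains $\Delta_{\theta^\star}(x,a^\star)$. The effect interval therefore meets $[-\tau,\tau]$, and $a^\star$ is retained. In particular the retained set is nonempty, so $A$ is well defined.

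\emph{Regret bound.} Because $A$ maximizes the reward upper endpoint over the retained set, which contains $a^\star$, we have $\overline r(x,A)\ge\overline r(x,a^\star)$. Validity then gives the chain
\[
r_{\theta^\star}(x,a^\star)\le\overline r(x,a^\star)\le\overline r(x,A)
\quad\text{and}\quad
r_{\theta^\star}(x,A)\ge\underline r(x,A).
\]
Subtracting yields $r_{\theta^\star}(x,a^\star)-r_{\theta^\star}(x,A)\le\overline r(x,A)-\underline r(x,A)$.

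\emph{Violation bound.} Since $A$ is retained, there is some $u\in[-\tau,\tau]$ with $u\in[\underline\Delta(x,A),\overline\Delta(x,A)]$. By validity, $\Delta:=\Delta_{\theta^\star}(x,A)$ lies in the same interval, so $|\Delta-u|\le\overline\Delta(x,A)-\underline\Delta(x,A)$. Since $|u|\le\tau$, the triangle inequality gives $|\Delta|-\tau\le|\Delta|-|u|\le|\Delta-u|$. The interval width is nonnegative, so taking positive parts gives $\pos{|\Delta|-\tau}\le\overline\Delta(x,A)-\underline\Delta(x,A)$.

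I expect no real obstacle. The only points needing care are that the maximizing $A$ exists, which holds because the retained set is finite and nonempty, and that each interval is a genuine interval with lower endpoint at most upper endpoint. Both follow from validity: a valid interval contains a point, and the endpoint formulas come from applying the monotone function $\mu$ to $z^\top\wh\theta\mp b(z)$ with $b\ge0$.
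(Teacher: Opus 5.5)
Your proposal is correct and follows the paper's proof: retention of $a^\star(x)$ via validity of its effect interval, then the chain $r_{\theta^\star}(x,a^\star(x))\le\overline r(x,a^\star(x))\le\overline r(x,A)$ together with $r_{\theta^\star}(x,A)\ge\underline r(x,A)$ for the regret bound. The only cosmetic difference is in the violation step: you pick a witness $u\in[-\tau,\tau]$ inside the effect interval and apply the triangle inequality, whereas the paper splits into the cases $\Delta_A>\tau$, $\Delta_A<-\tau$, and $|\Delta_A|\le\tau$; the two arguments are equivalent.
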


Thus, valid intervals bound each of the one-round regret and violation by $W_n(x)$. We now choose the parameters that make these intervals valid with high probability. The minimum sample size $n_0$ ensures that the fitted MLE is finite and its Fisher matrix is invertible with high probability. The confidence factor $\ell$ accounts for all model fits and interval queries during the run. For $\delta\in(0,1)$, set $J=\lceil\log_2T\rceil$, $M_{\mathrm{dir}}=\min\{KT,5^d\}$, and $\ell=\log({32(J+1)(T+2)M_{\mathrm{dir}}}/{\delta})$. Here $J+1$ bounds the number of model fits in either algorithm. Set $n_0=\left\lceil{\kappa\Gamma_{\mathrm{corr}}}/{\sigma_0^2}\right\rceil$, where $\Gamma_{\mathrm{corr}}=\widetilde O(d)$ is the explicit threshold given in the appendix. In particular, $n_0=\widetilde O(\kappa d/\sigma_0^2)$.

\begin{lemma}
\label{lem:master-interface}
Suppose Assumptions~\ref{ass:bounded} and~\ref{ass:coverage} hold and $n_0<T$. Run either algorithm in Sections~\ref{sec:fixed-ete}--\ref{sec:adaptive-ete}. With probability at least $1-\delta$, every MLE computed by the algorithm is finite and its estimated Fisher matrix is invertible. Whenever the algorithm uses an estimate based on $n$ exploration observations to compute intervals at $X_t$, all factual reward and effect intervals are valid and
\begin{equation}
 W_n(X_t)\le1024\sqrt{\frac{V_\star\ell}{n}}.
 \label{eq:master-width}
\end{equation}
Both conclusions hold on the same event, simultaneously for all such fits and rounds.
\end{lemma}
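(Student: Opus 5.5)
The plan is to build one high-probability event on which three things hold simultaneously for every fit. First, the empirical design concentrates around $n\Sigma$. Second, the MLE lies in a local Fisher ball around $\theta^\star$. Third, a directional self-normalized bound holds for every query direction. We then convert logit radii into reward and effect widths using the curvature of $\mu$.

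\textbf{Step 1: design concentration.} Each fit uses $n$ exploration observations with $X\sim P_X$ and $A\sim\mathrm{Unif}(\cA)$, scheduled before their contexts are seen. The features $\phi_s^E$ are therefore i.i.d. draws from the pooled law, independent of the exploitation rounds. We apply matrix Bernstein/Chernoff to $\sum_s \dot\mu((\phi_s^E)^\top\theta^\star)\phi_s^E(\phi_s^E)^\top$. Its summands have norm at most $1/4$, and the mean $nI_\star\succeq n\sigma_0^2/\kappa$ has minimum eigenvalue at least $n\sigma_0^2/\kappa$. Once $n\ge n_0=\lceil\kappa\Gamma_{\mathrm{corr}}/\sigma_0^2\rceil$, this gives
\[
\tfrac12 nI_\star\preceq H_n(\theta^\star)\preceq 2nI_\star .
\]
A union bound over the at most $J+1$ fits costs only the $\log(J+1)$ inside $\ell$.

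\textbf{Step 2: existence and localization of the MLE.} We use the standard self-concordance argument for logistic regression (Faury et al.; Jun et al.). Set $g_n(\theta)=\sum_s \mu((\phi_s^E)^\top\theta)\phi_s^E$, so that $g_n(\wh\theta_n)-g_n(\theta^\star)=\sum_s\eta_s\phi_s^E$ with martingale noise $\eta_s$. The noise is bounded in $H_n(\theta^\star)^{-1}$-norm by $O(\sqrt{d+\log(1/\delta)})$. Together with the generalized self-concordance inequality $\dot\mu(u+v)\ge\dot\mu(u)e^{-|v|}$, this shows the score equation has a root in a ball where all logits $(\phi_s^E)^\top\theta$ move by $O(1)$. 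That is exactly what $\Gamma_{\mathrm{corr}}=\widetilde O(d)$ is tuned for. On this ball, $\wh H_n$, $H_n(\theta^\star)$ and the mean-value Hessian $G_n$ are all within constant factors of each other, so $\wh H_n\succeq c\,nI_\star$ is invertible.

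\textbf{Step 3: directional validity.} For a fixed direction $z$, write
\[
z^\top(\wh\theta_n-\theta^\star)=z^\top G_n^{-1}\textstyle\sum_s\eta_s\phi_s^E .
\]
The second-order (directional) approach (Jun et al. 2021) gives $|z^\top(\wh\theta_n-\theta^\star)|\le c\sqrt{\ell}\,\|z\|_{H_n^{-1}}$ with only a $\log(1/\delta)$ price per direction, instead of $\sqrt d$. Since $\wh H_n$ is equivalent to $H_n$ up to constants, the radius $b_n(z)=11\sqrt\ell\|z\|_{\wh H_n^{-1}}$ covers this error. The set of query directions is $\{\phi(X_t,a),\phi^G(X_t,a)\}$, at most $2KT$ of them. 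Because contexts in exploitation rounds are independent of the exploration data, we can either union bound over them, or alternatively use a $1/2$-net of the sphere of size $5^d$ together with the Step 2 ellipsoid bound. This yields the $M_{\mathrm{dir}}=\min\{KT,5^d\}$ in $\ell$.

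Validity of the reward intervals follows because $\mu$ is monotone. For the effect interval, both endpoints come from valid factual and counterfactual intervals, so it is valid too.

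\textbf{Step 4: widths.} For $z=\phi(x,a)$, the reward width is at most $2b_n(z)\sup\dot\mu$ over the interval. We need this supremum to be $\lesssim\dot\mu(z^\top\theta^\star)$ rather than $1/4$, which requires $b_n(z)=O(1)$. That holds since $\|z\|^2_{\wh H_n^{-1}}\lesssim\kappa/(n\sigma_0^2)$ and $n\ge n_0$. Hence the reward width is
\[
O\bigl(\sqrt{\ell}\,\dot\mu(z^\top\theta^\star)\|z\|_{\wh H_n^{-1}}\bigr)=O\bigl(\sqrt{\ell/n}\,\|q_r\|_{I_\star^{-1}}\bigr)\le O\bigl(\sqrt{V_\star\ell/n}\bigr).
\]
The same argument bounds the counterfactual width by $\|q_G\|_{I_\star^{-1}}\le2\sqrt{V_\star}$, so the effect width is at most $3\sqrt{V_\star}$ times the same factor, using $\|q_r\|+\|q_G\|\le3\sqrt{V_\star}$. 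Tracking constants gives $1024$.

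\textbf{Main obstacle.} The main obstacle is Step 2/3: obtaining a dimension-free $\sqrt{\ell}$ directional radius while keeping $\wh H_n$ comparable to $nI_\star$. The plan is to follow the Jun et al. burn-in analysis, with $n_0$ absorbing the $\kappa d/\sigma_0^2$ requirement.
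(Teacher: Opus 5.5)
Your proposal is correct and follows the paper's proof in most steps. Both use matrix-Chernoff Fisher concentration at each checkpoint, with the adaptive fitting data viewed as prefixes of one i.i.d.\ uniform-exploration sequence. Both then apply Jun et al.'s directional bound together with its $\widehat H$-versus-$H_n(\theta^\star)$ comparison. Both union bound over query directions in one of two ways: action-wise over $2(J+1)KT$ directions, using that $X_t$ is fresh given the history, or over a $1/2$-net; this is where $M_{\mathrm{dir}}=\min\{KT,5^d\}$ comes from. Finally, both bound widths via $b_n(z)<1$, $\dot\mu(u)\le e^2\dot\mu(z^\top\theta^\star)$, and $\|q_r\|_{I_\star^{-1}}+\|q_G\|_{I_\star^{-1}}\le 3\sqrt{V_\star}$.

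The genuine difference is how you obtain a finite MLE. The paper proves this separately and combinatorially. Via the recession function, the MLE is infinite exactly when some homogeneous hyperplane separates the labels. A fixed sign pattern then has conditional probability at most $\exp(-v^\top H_n(\theta^\star)v)\le\exp(-n\sigma_0^2/(2\kappa))$. A union bound over at most $2(4eT)^d$ patterns of the hyperplane arrangement finishes the argument; this is what the $2d\log(4eT)$ term in $\Gamma_{\mathrm{corr}}$ pays for. Jun et al.'s lemma is then used as a black box on the event that $\widehat\theta$ is finite. You instead localize: a self-normalized bound on $\|\sum_s\eta_s\phi^E_s\|_{H_n(\theta^\star)^{-1}}$ plus self-concordance makes the gradient of the convex loss point outward on a Fisher ball where design logits move by $O(1)$, so a unique minimizer lies inside. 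Your route gives existence and $\widehat H_n\asymp H_n(\theta^\star)$ together and avoids the counting. The price is an extra Bernstein-plus-net concentration step that the paper does not need.

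Two cautions. First, in Step~1, Chernoff with summand norm $1/4$ and $\lambda_{\min}(nI_\star)\ge n\sigma_0^2/\kappa$ only lower-bounds $\lambda_{\min}(H_n(\theta^\star))$. Step~4 needs the Loewner bound $H_n(\theta^\star)\succeq \frac n2 I_\star$ to produce $V_\star$ rather than $\kappa/\sigma_0^2$. Getting it requires whitening by $I_\star^{-1/2}$, as the paper does; the summands then have norm at most $\kappa/(4\sigma_0^2)$, which gives the same exponent. Second, the coefficients $11$ and $1024$ are hard-coded, so take the comparison factor $2.2$ from Jun et al.\ rather than a loose $e^{\pm1}$ from your localization. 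With $e^{\pm1}$, the effect width would be bounded only by $3\cdot 2e^2\cdot 11\sqrt{2e}\approx 1137>1024$.
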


The proof establishes finiteness of the MLE and combines Fisher concentration~\citep{tropp2015introduction} with a fixed-design logistic confidence bound~\citep{jun2021improved}. Applying $\mu$ and subtracting endpoints gives the reward and effect intervals. The width bound decreases as $n^{-1/2}$, with dependence on the reward and effect directions measured by $V_\star$. Lemma~\ref{lem:screening} then bounds per-round regret and violation during exploitation.

\subsection{Fixed-Length Explore--Then--Exploit}
\label{sec:fixed-ete}

Fix $m\in\{n_0,\ldots,T\}$ in advance; this does not require $V_\star$. Algorithm~\ref{alg:fixed-ete} explores uniformly for $m$ rounds, fits $(\wh\theta_m,\wh H_m)$, and uses this model with the optimistic screening rule from Section~\ref{sec:confidence} in all remaining rounds.

The exploration rounds contribute at most $m$ to each of regret and violation. Lemma~\ref{lem:master-interface} bounds the exploitation cost by $1024T\sqrt{V_\star\ell/m}$. Increasing $m$ raises the first cost and lowers the second; balancing them gives the $T^{2/3}$ rate. This tuning uses $V_\star$ and respects $n_0\le m\le T$.

\begin{algorithm}[H]
\caption{Fixed-length ETE}
\label{alg:fixed-ete}
\begin{algorithmic}[1]
\Require
$T,m,\delta,\tau,\phi,\phi^G,\kappa,\sigma_0$,
where $m\in\{n_0,\ldots,T\}$

\For{$t=1,\ldots,m$}
    \State Observe $X_t$, choose
    $A_t\sim\operatorname{Unif}(\mathcal A)$ independently,
    and observe $R_t$
\EndFor

\State Compute $(\widehat{\theta}_m,\widehat H_m)$
using Eqs.~\eqref{eq:mle}--\eqref{eq:empirical-fisher}

\For{$t=m+1,\ldots,T$}
    \State Observe $X_t$ and compute the reward and effect intervals
    \State $\displaystyle
    \widehat{\mathcal A}_t
    \gets
    \left\{
    a\in\mathcal A:
    [\underline\Delta_m(X_t,a),\overline\Delta_m(X_t,a)]
    \cap[-\tau,\tau]\neq\varnothing
    \right\}$
    \State $\displaystyle
    A_t\in
   {\arg\max}_{a\in\widehat{\mathcal A}_t}\,
    \overline r_m(X_t,a)$;
    observe $R_t$
\EndFor
\end{algorithmic}
\end{algorithm}

\begin{theorem}[Fixed-length ETE]
\label{thm:fixed}
Suppose Assumptions~\ref{ass:bounded}--\ref{ass:coverage} hold and $n_0<T$.  For every $m\in\{n_0,\ldots,T\}$, fixed-length ETE satisfies, with probability at least $1-\delta$,
\begin{equation}
 \max\{R_T,V_T\}
 \le
 m+1024T\sqrt{\frac{V_\star\ell}{m}}.
 \label{eq:fixed-bound}
\end{equation}Choosing $m
=\min\left\{ T, n_0+\lceil(1024^2V_\star\ell T^2)^{1/3}\rceil \right\}$ gives $\max\{R_T,V_T\} =O\!(n_0+(V_\star\ell)^{1/3}T^{2/3})$.
\end{theorem}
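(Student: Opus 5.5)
We split the horizon into the exploration phase $t\le m$ and the exploitation phase $t>m$, and bound the contribution of each phase to $R_T$ and to $V_T$ separately. In the exploration phase every per-round regret term $r_{\theta^\star}(X_t,a^\star(X_t))-r_{\theta^\star}(X_t,A_t)$ lies in $[0,1]$. It is nonnegative because $a^\star(X_t)$ is defined only on the event $\cA_\tau(X_t)\neq\varnothing$, which holds almost surely by Assumption~\ref{ass:fair}. Every per-round violation $\pos{|\Delta_{\theta^\star}(X_t,A_t)|-\tau}$ lies in $[0,1]$, since $\Delta_{\theta^\star}$ is a difference of two numbers in $(0,1)$. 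Hence the first $m$ rounds contribute at most $m$ to each of $R_T$ and $V_T$, deterministically.

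For the exploitation phase we work on the event $\mathcal E$ of Lemma~\ref{lem:master-interface}, which has probability at least $1-\delta$. We invoke the lemma with $n=m\ge n_0$ and the single fit $(\wh\theta_m,\wh H_m)$ used by Algorithm~\ref{alg:fixed-ete}. On $\mathcal E$ the MLE is finite and $\wh H_m$ is invertible, so the intervals are well defined. For every $t>m$, all reward and effect intervals at $X_t$ are valid and $W_m(X_t)\le1024\sqrt{V_\star\ell/m}$. Outside a null set of contexts, $\cA_\tau(X_t)\ne\varnothing$, and the algorithm's rule is exactly the screening rule of Lemma~\ref{lem:screening}. It retains the actions in $\widehat{\cA}_t$ and selects the maximizer of $\overline r_m$ among them; $\widehat{\cA}_t$ is nonempty because $a^\star(X_t)$ is retained. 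Lemma~\ref{lem:screening} then bounds both the per-round regret and the per-round violation at time $t$ by the corresponding interval widths at $A_t$, and each of these is at most $W_m(X_t)$.

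Summing over the at most $T-m\le T$ exploitation rounds gives, on $\mathcal E$, that both the exploitation regret and the exploitation violation are at most $1024T\sqrt{V_\star\ell/m}$. Adding the exploration bound proves \eqref{eq:fixed-bound}. The main subtlety is the quantifier structure. We need Lemma~\ref{lem:master-interface} to hold simultaneously for all exploitation rounds, so that the bound applies to random future contexts and is not just a per-round statement. The lemma grants exactly this, since the fit is determined by data independent of $X_{m+1},\dots,X_T$ and its event is uniform over rounds. We also need the null set where Assumption~\ref{ass:fair} fails to be harmless. Since $R_T$ is defined only when $a^\star$ exists, we intersect $\mathcal E$ with the almost-sure event that $\cA_\tau(X_t)\neq\varnothing$ for all $t$; this does not change the probability.

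For the tuning, set $m_1=\lceil(1024^2V_\star\ell T^2)^{1/3}\rceil$ and $m=\min\{T,n_0+m_1\}$. If $m=T$, then trivially $\max\{R_T,V_T\}\le T\le n_0+m_1=O(n_0+(V_\star\ell)^{1/3}T^{2/3})$. Otherwise $m\ge m_1\ge(1024^2V_\star\ell T^2)^{1/3}$, so
\[
1024T\sqrt{V_\star\ell/m}\le 1024T\,(V_\star\ell)^{1/2}\,(1024^2V_\star\ell T^2)^{-1/6}=(1024^2V_\star\ell)^{1/3}T^{2/3}.
\]
Moreover $m\le n_0+1+(1024^2V_\star\ell T^2)^{1/3}$. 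Combining the two terms yields $\max\{R_T,V_T\}=O(n_0+(V_\star\ell)^{1/3}T^{2/3})$. The requirement $m\ge n_0$ holds in both cases because $n_0<T$.
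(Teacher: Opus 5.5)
Your proposal is correct and follows the paper's proof essentially step for step: the first $m$ rounds cost at most $m$, Lemma~\ref{lem:screening} on the event of Lemma~\ref{lem:master-interface} bounds each later round by $W_m(X_t)\le 1024\sqrt{V_\star\ell/m}$, and the tuning uses the same two cases $m<T$ and $m=T$. One harmless slip: per-round regret need not be nonnegative, since an unfair $A_t$ can have larger reward than $a^\star(X_t)$, as the paper itself notes; your argument only uses the per-round upper bound of one, so nothing breaks.
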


\begin{proof}[Proof Sketch]
The first $m$ rounds contribute at most $m$ to each of $R_T$ and $V_T$. On the event of Lemma~\ref{lem:master-interface}, Lemma~\ref{lem:screening} bounds the regret and violation in each subsequent round by the largest reward or effect interval width at $X_t$. Summing \eqref{eq:master-width} over the remaining rounds and using $T-m\le T$ gives \eqref{eq:fixed-bound}. Balancing the two terms yields the stated choice of $m$ and the $T^{2/3}$ rate.
\end{proof}

\subsection{Adaptive Explore--Then--Exploit}
\label{sec:adaptive-ete}

The choice of $m$ in Theorem~\ref{thm:fixed} requires $V_\star$, which is generally unknown. Without knowing $V_\star$, we can choose $m=\min\{T,n_0+\lceil T^{2/3}\rceil\}$ to obtain $\max\{R_T,V_T\}=O\!\left(n_0+(1+\sqrt{V_\star\ell})T^{2/3}\right)$ with probability at least $1-\delta$. Algorithm~\ref{alg:adaptive-ete} attains the $V_\star^{1/3}$ dependence without knowing $V_\star$ by checking the current interval widths. Narrow intervals allow exploitation; wide intervals trigger another exploration batch and a new fit, so exploration can resume after exploitation has begun.

Suppose $n_0<T$. We call each model update a \emph{checkpoint} and prescribe the cumulative exploration sample sizes $n_j=\min\{2^jn_0,T\}$, ending the sequence when it first reaches $T$. At checkpoint $j$, compute $(\wh\theta_{n_j},\wh H_{n_j})$ from $\mathcal D$, with $|\mathcal D|=n_j$, using Eqs.~\eqref{eq:mle}--\eqref{eq:empirical-fisher}. The algorithm keeps this estimate until another exploration batch is completed. For quantities computed from these observations, we abbreviate the subscript $n_j$ by $j$; in particular, $\wh\theta_j=\wh\theta_{n_j}$, $\wh H_j=\wh H_{n_j}$, and $W_j=W_{n_j}$. The sample size $n_j$ counts estimation observations, not elapsed rounds. If $n_0\ge T$, the algorithm instead explores uniformly for all $T$ rounds.

At checkpoint $j$, the algorithm exploits whenever $W_j(X_t)\le n_j/T$. The threshold compares the bound on regret and violation from the current intervals to the exploration cost $n_j$. When the intervals are valid, every round that passes this test contributes at most $n_j/T$ to each of regret and violation. Even $T$ such rounds at this checkpoint would thus contribute at most $n_j$ to each. If the test fails, we collect more exploration data up to the next checkpoint and refit, or explore until the horizon ends.

The triggering round is excluded from fitting because the decision to explore depends on its context. Keeping it would favor contexts with wide intervals in the estimation data, even though the action is drawn uniformly. We instead schedule the next exploration batch before seeing its contexts. The fitting data can then be viewed as prefixes of a single iid uniform-exploration sequence, which allows us to use the same confidence analysis at every checkpoint.

\begin{algorithm}[t]
\caption{Adaptive ETE}
\label{alg:adaptive-ete}
\begin{algorithmic}[1]
\Require $T,\delta,\tau,\phi,\phi^G,\kappa,\sigma_0$

\State Compute $\ell,\Gamma_{\mathrm{corr}},n_0$; define
$n_i=\min\{2^i n_0,T\}$ for $i\ge0$ and set $j\gets0$

\For{$t=1,\ldots,n_0$}
    \State Observe $X_t$, choose
    $A_t\sim\operatorname{Unif}(\mathcal A)$ independently,
    and observe $R_t$
\EndFor

\State Set $\mathcal D\gets((X_s,A_s,R_s))_{s=1}^{n_0}$; compute
$(\widehat\theta_j,\widehat H_j)$ from $\mathcal D$ using
Eqs.~\eqref{eq:mle}--\eqref{eq:empirical-fisher} with $n=n_j$
\State Set $t\gets n_0+1$

\While{$t\le T$}
    \State Observe $X_t$ and compute $W_j(X_t)$
    \If{$W_j(X_t)\le n_j/T$}
        \State Compute the reward and effect intervals at $X_t$ as in Section~\ref{sec:confidence}
        \State $\
        \widehat{\mathcal A}_t
        \gets
        \{
        a\in\mathcal A:
        [\underline\Delta_j(X_t,a),\overline\Delta_j(X_t,a)]
        \cap[-\tau,\tau]\neq\varnothing
        \}$
        \State $\displaystyle
        A_t\in
        {\arg\max}_{a\in\widehat{\mathcal A}_t}\,
        \overline r_j(X_t,a)$;
        observe $R_t$ and set $t\gets t+1$
    \Else
        \State Choose
        $A_t\sim\operatorname{Unif}(\mathcal A)$ and observe $R_t$,
        without updating $\mathcal D$
        \State Set $b\gets\min\{n_{j+1}-n_j,T-t\}$
        \State Before observing their contexts, schedule rounds
        $t+1,\ldots,t+b$ for the following loop
        \For{$s=t+1,\ldots,t+b$}
            \State Observe $X_s$, choose
            $A_s\sim\operatorname{Unif}(\mathcal A)$ independently
            \State
            Observe $R_s$ and append $(X_s,A_s,R_s)$ to $\mathcal D$
        \EndFor
        \If{$b=n_{j+1}-n_j$}
            \State Set $j\gets j+1$; compute $(\widehat\theta_j,\widehat H_j)$
            from $\mathcal D$ using Eqs.~\eqref{eq:mle}--\eqref{eq:empirical-fisher}
            with $n=n_j$
        \EndIf
        \State $t\gets t+b+1$
    \EndIf
\EndWhile
\end{algorithmic}
\end{algorithm}

All ties are resolved by a fixed deterministic rule. To bound the exploration cost, define $n_{\mathrm{crit}} =\max\{ n_0, (1024^2V_\star\ell T^2)^{1/3} \}$. This quantity is used only in the analysis; the algorithm checks the observable width instead. On the confidence event, once $n_j\ge n_{\mathrm{crit}}$, Lemma~\ref{lem:master-interface} implies $ W_j(X_t) \le1024\sqrt{{V_\star\ell}/{n_j}} \le\frac{n_j}{T}$, so no subsequent context can trigger further exploration. Doubling the checkpoint sizes limits any overshoot of this sufficient sample size to a factor of two, giving the same order of exploration cost as the fixed-length choice.

\begin{theorem}[Adaptive ETE]
\label{thm:adaptive}
Under Assumptions~\ref{ass:bounded}--\ref{ass:coverage}, with probability at least $1-\delta$, adaptive ETE satisfies $\max\{R_T,V_T\} \le\min\{T,4n_{\mathrm{crit}}+J+1\}$. Consequently,
\begin{equation*}
 \max\{R_T,V_T\}
 =\widetilde O\!\left(
 \bigl[V_\star\min\{\log K,d\}\bigr]^{1/3}T^{2/3}
 +\frac{\kappa d}{\sigma_0^2}
 \right)
\end{equation*}
where $\wt O$ suppresses logarithmic factors in $T$ and $1/\delta$, as well as constants.
\end{theorem}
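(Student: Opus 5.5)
We work on the event $\mathcal E$ of Lemma~\ref{lem:master-interface}, which has probability at least $1-\delta$. On $\mathcal E$ every fit is finite with invertible Fisher matrix, every interval used is valid, and $W_j(X_t)\le1024\sqrt{V_\star\ell/n_j}$ at every round where checkpoint $j$ is in force. The trivial bound $\max\{R_T,V_T\}\le T$ holds because each per-round regret and violation lies in $[0,1]$. We may also assume $n_0<T$; otherwise the algorithm explores for all $T$ rounds and $T\le n_0\le n_{\mathrm{crit}}$. The rounds then split into three types: initial exploration rounds ($n_0$ of them), exploitation rounds, and rounds spent in exploration episodes. Each episode consists of one triggering round followed by a scheduled batch of size $b$. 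We bound the contribution of each type to $R_T$ and to $V_T$ separately, charging at most $1$ per exploration round.

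\emph{Exploitation rounds.} A round exploited under checkpoint $j$ satisfies $W_j(X_t)\le n_j/T$. Since the intervals are valid and $\cA_\tau(X_t)\neq\varnothing$ almost surely (Assumption~\ref{ass:fair}), Lemma~\ref{lem:screening} bounds both the regret and the violation of that round by $W_j(X_t)\le n_j/T$. Let $j_{\max}$ be the last checkpoint reached. There are at most $T$ exploitation rounds under each checkpoint, so the total exploitation cost is at most $\sum_{j\le j_{\max}} n_j\le 2n_{j_{\max}}$, using the doubling structure $n_j\le 2^{j-j_{\max}}n_{j_{\max}}$.

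\emph{Exploration cost.} The key step is to bound $n_{j_{\max}}$ by $2n_{\mathrm{crit}}$. Let $j^\ast$ be the first index with $n_{j^\ast}\ge n_{\mathrm{crit}}$ (or the last index, if $n_{\mathrm{crit}}>T$). Then $n_{j^\ast}\le\max\{n_0,2n_{\mathrm{crit}}\}\le 2n_{\mathrm{crit}}$, since either $j^\ast=0$ or $n_{j^\ast-1}<n_{\mathrm{crit}}$. Once checkpoint $j^\ast$ is fitted, on $\mathcal E$ we have $W_{j^\ast}(X_t)\le1024\sqrt{V_\star\ell/n_{j^\ast}}\le n_{j^\ast}/T$. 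This uses $n_{j^\ast}^{3/2}\ge 1024\sqrt{V_\star\ell}\,T$, which is exactly $n_{j^\ast}\ge(1024^2V_\star\ell T^2)^{1/3}$. Hence no further episode is triggered and $j_{\max}\le j^\ast$. If $n_{j^\ast}=T$ this gives nothing new, but then the trivial bound $T$ is already at most $2n_{\mathrm{crit}}$, so either way the result is covered.

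The batch sizes telescope: the total number of data-collecting rounds is at most $n_{j_{\max}}$. The one exception is a final truncated batch with $b<n_{j+1}-n_j$, which ends at the horizon and does not exceed $n_{j+1}\le 2n_{j_{\max}}$ in aggregate. The main obstacle I expect is this last case, which needs careful bookkeeping: a truncated batch does not advance $j$, so it must be charged against the would-be next checkpoint. Since a truncated batch can only occur when $j<j^\ast$, we have $n_{j+1}\le n_{j^\ast}$, and the batch is charged to $n_{j^\ast}$. Each episode also contains one triggering round, which is not counted in $\mathcal D$. There are at most $J+1$ episodes, because $j$ increases after every complete batch and a truncated batch ends the run. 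Summing everything, each of $R_T$ and $V_T$ is at most $n_{j^\ast}$ (exploration) $+\,2n_{j^\ast}$ (exploitation) $+\,(J+1)$ (triggering rounds). A slightly more careful bound on the exploitation cost, using $\sum_{j<j^\ast}n_j\le n_{j^\ast}$ together with at most $n_{j^\ast}$ from the last checkpoint, gives $2n_{j^\ast}+J+1\le 4n_{\mathrm{crit}}+J+1$.

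\emph{The $\widetilde O$ bound.} Substitute $n_{\mathrm{crit}}\le n_0+(1024^2V_\star\ell T^2)^{1/3}$ and $n_0=\widetilde O(\kappa d/\sigma_0^2)$. Then use $\ell=\log(32(J+1)(T+2)M_{\mathrm{dir}}/\delta)$ with $\log M_{\mathrm{dir}}\le\min\{\log(KT),d\log5\}=O(\log T+\min\{\log K,d\})$. This gives $(V_\star\ell)^{1/3}=\widetilde O\bigl((V_\star\min\{\log K,d\})^{1/3}\bigr)$, up to factors logarithmic in $T$ and $1/\delta$. Finally, $J+1=O(\log T)$ is absorbed, which completes the $\widetilde O$ statement.
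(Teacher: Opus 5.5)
Your decomposition matches the paper's proof: initial and batch exploration, discarded trigger rounds, and exploitation. So do your no-trigger argument once $n_j\ge n_{\mathrm{crit}}$ and your charging of a truncated final batch to the next checkpoint. Your $n_{j^\ast}$ plays the role of the paper's $\bar n$, and the rate statement follows from your argument. The gap is in the final summation, so the explicit inequality $\max\{R_T,V_T\}\le\min\{T,4n_{\mathrm{crit}}+J+1\}$ is not established as written.

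Your exploitation accounting allows up to $T$ exploitation rounds under \emph{each} checkpoint. That gives an exploitation cost of $\sum_{j\le j^\ast}n_j\le 2n_{j^\ast}$. Adding the up to $n_{j^\ast}$ exploration rounds and the $J+1$ trigger rounds gives $3n_{j^\ast}+J+1\le 6n_{\mathrm{crit}}+J+1$, not $4n_{\mathrm{crit}}+J+1$. Your ``slightly more careful bound'' ($\sum_{j<j^\ast}n_j\le n_{j^\ast}$ plus $n_{j^\ast}$ from the last checkpoint) re-derives the same $2n_{j^\ast}$ for exploitation alone. The stated total $2n_{j^\ast}+J+1$ then silently drops the exploration term.

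The missing observation is that exploitation rounds across \emph{all} checkpoints number at most $T$ in total. Write $\mathcal T_j$ for the exploitation rounds under checkpoint $j$. Each such round has $j\le j^\ast$, so it costs at most $n_j/T\le n_{j^\ast}/T$ in both regret and violation. The total exploitation cost is therefore at most $(n_{j^\ast}/T)\sum_j|\mathcal T_j|\le n_{j^\ast}$. Exploration plus exploitation is then at most $2n_{j^\ast}\le 4n_{\mathrm{crit}}$, which is exactly how the paper obtains the constant.

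A smaller omission is in the $\widetilde O$ step. Absorbing the $\log(T/\delta)$ part of $\ell$ into $[V_\star\min\{\log K,d\}]^{1/3}T^{2/3}$ requires $\min\{\log K,d\}$ to be bounded below by a positive constant, which fails when $K=1$. In that case you must argue separately, as the paper does, that Assumption~\ref{ass:fair} forces $R_T=V_T=0$. Also, per-round regret can be negative, since an unfair action may beat the fair oracle. It lies in $[-1,1]$ rather than $[0,1]$, though only the upper bound $1$ is used.
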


\begin{proof}[Proof Sketch]
Every triggered batch targets $n_{j+1}\le2n_{\mathrm{crit}}$, since $n_j<n_{\mathrm{crit}}$ at a failed test. Thus at most $2n_{\mathrm{crit}}$ observations are retained, even if the last batch is truncated, and at most $J+1$ trigger rounds are discarded. Each exploitation round contributes at most $n_j/T\le2n_{\mathrm{crit}}/T$ to each of regret and violation. Summing these costs proves the bound.
\end{proof} 

\section{Experiments}
\label{sec:experiments}

We compare Fixed ETE-oracle (Algorithm~\ref{alg:fixed-ete}) and Adaptive ETE (Algorithm~\ref{alg:adaptive-ete}) with CCLB~\citep{chen2025causal}, DO-LogB~\citep{chen2026causal}, GLB-OMD~\citep{zhang2026generalized}, and the best-fair oracle. Fixed ETE-oracle uses $V_\star$ to choose its exploration length; we estimate $V_\star$ empirically using $\theta^\star$ and the full set of offered contexts. The best-fair oracle knows $\theta^\star$ and selects the optimal fair action. We report cumulative regret, cumulative stage-wise positive violations, and penalized regret as defined by \citet{chen2025causal}, with zero reward assigned to unfair actions. Regret can be negative because an unfair action may yield a higher reward than the best fair action. Each experiment is repeated $10$ times. Curves show means; shading spans two standard deviations on either side.

\paragraph{Synthetic experiment.} We construct a $d=10$ instance in which one coordinate is weakly represented in the factual features but remains visible in the counterfactual features. The coefficient of this coordinate in $\theta^\star$ is zero. Varying its factual scale therefore changes the information available for certifying counterfactual fairness without changing the mean rewards, fair action sets, or best fair actions. For the experiment in Figure~\ref{fig:main-experiments}\subref{fig:main-synthetic}, we set $K=5$, $T=10{,}000$, fairness threshold $\tau=0.05$, and factual scale $\rho=0.25$. The full experimental setup is given in Appendix~\ref{app:synthetic-experiment}.

\paragraph{Structural causal model.} We adapt the experimental setting of \citet{chen2025causal} to bounded logistic features. The structural causal model contains a binary sensitive attribute $A$, a covariate $W$, a mediator $M$ whose distribution depends on $(A,W)$, and an action-specific variable $D_a$. We obtain the counterfactual feature vector by replacing $A$ with $1-A$ and recomputing $M$ with the same exogenous noise. We clip and rescale features derived from $(A,W,M,D_a,MD_a)$ and draw Bernoulli rewards with a logistic mean. For the experiment in Figure~\ref{fig:main-experiments}\subref{fig:main-scm}, we set $K=20$, $T=10{,}000$, and $\tau=0.001$. Following \citet{chen2025causal}, we retain contexts for which at least $12$ actions are fair. The full experimental setup is given in Appendix~\ref{app:scm-experiment}.

\begin{figure}[t]
  \centering
  \begin{subfigure}{\textwidth}
    \centering
    \includegraphics[width=\linewidth,trim=0 14bp 0 0,clip]{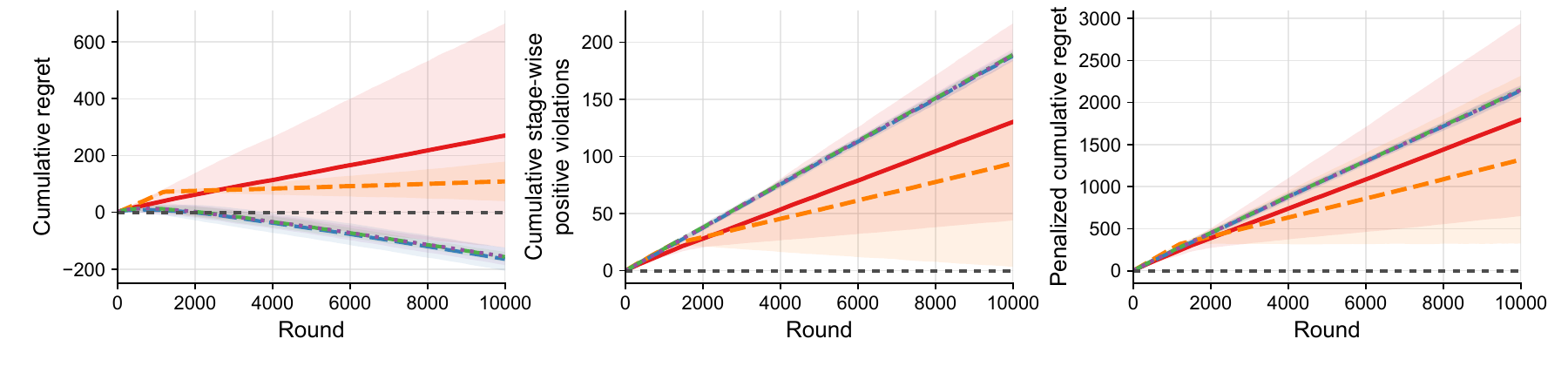}
    \caption{Synthetic experiment.}
    \label{fig:main-synthetic}
  \end{subfigure}\par
  \begin{subfigure}{\textwidth}
    \centering
    \includegraphics[width=\linewidth,trim=0 14bp 0 0,clip]{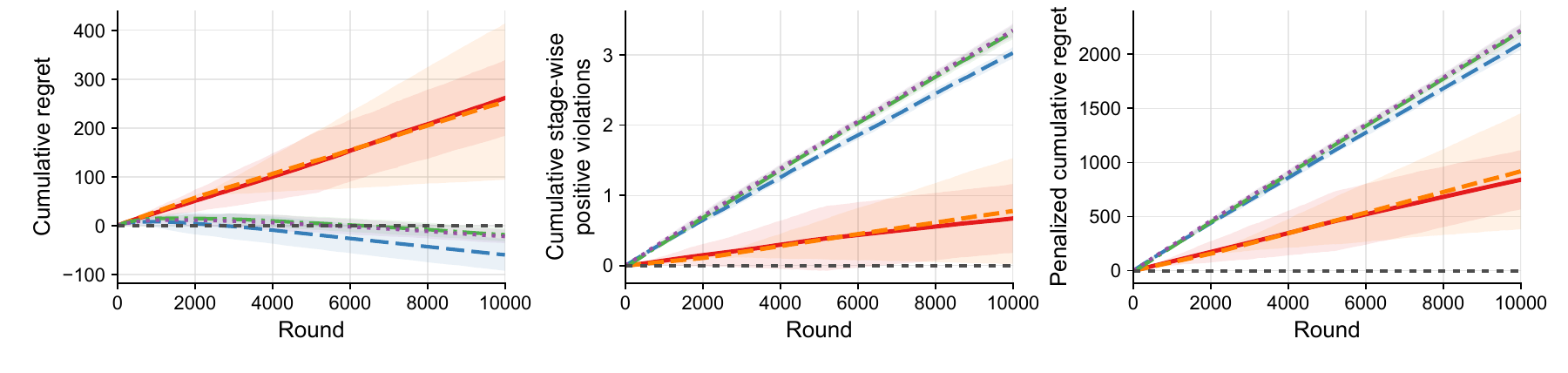}
    \par
    \includegraphics[width=0.96\linewidth]{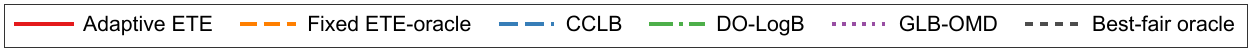}
    \par
    \caption{Structural causal model.}
    \label{fig:main-scm}
  \end{subfigure}
  \caption{Cumulative regret, cumulative stage-wise positive violations, and penalized cumulative regret for the proposed methods and baselines.}
  \label{fig:main-experiments}
\end{figure}

Figure~\ref{fig:main-experiments} shows that both ETE methods incur lower cumulative stage-wise positive violations and lower penalized regret than the learning baselines. Some baselines attain negative regret by choosing high-reward unfair actions, but incur higher cumulative stage-wise positive violations and higher penalized regret. Appendix~\ref{app:synthetic-experiment} presents additional experiments that vary the factual scale $\rho$, which controls the target-information scale $V_\star$. Appendix~\ref{app:scm-experiment} examines the effect of varying the fairness threshold $\tau$.

\bibliographystyle{plainnat}
\bibliography{ref}

\clearpage
\appendix

\section{Experimental Details and Additional Results}
\label{app:experiments}

\begin{figure}[H]
  \centering
  \begin{subfigure}{\textwidth}
    \centering
    \includegraphics[width=\linewidth,trim=0 14bp 0 0,clip]{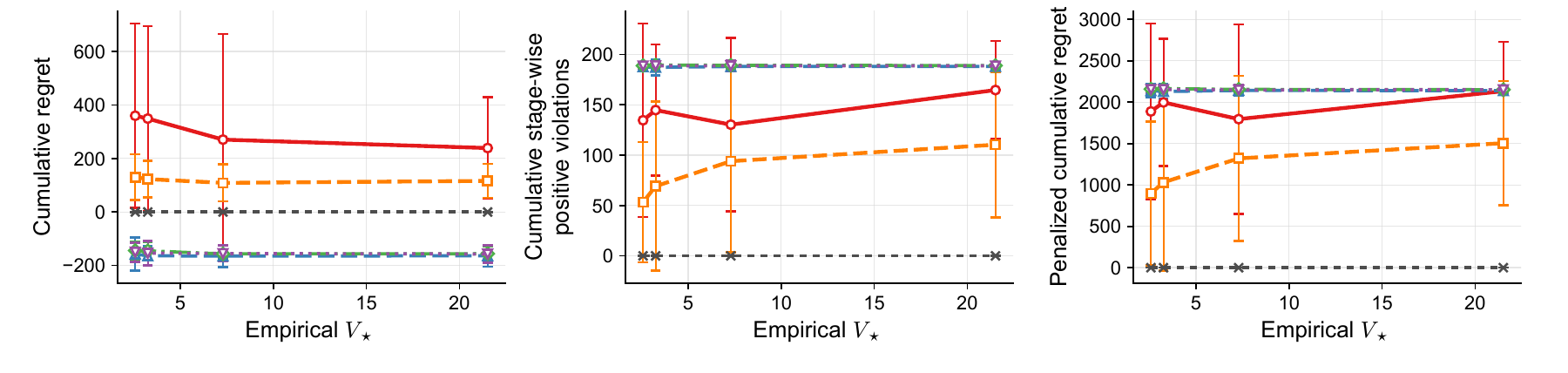}
    \caption{Synthetic experiment with varying factual scale $\rho$.}
    \label{fig:additional-synthetic}
  \end{subfigure}\par
  \begin{subfigure}{\textwidth}
    \centering
    \includegraphics[width=\linewidth,trim=0 14bp 0 0,clip]{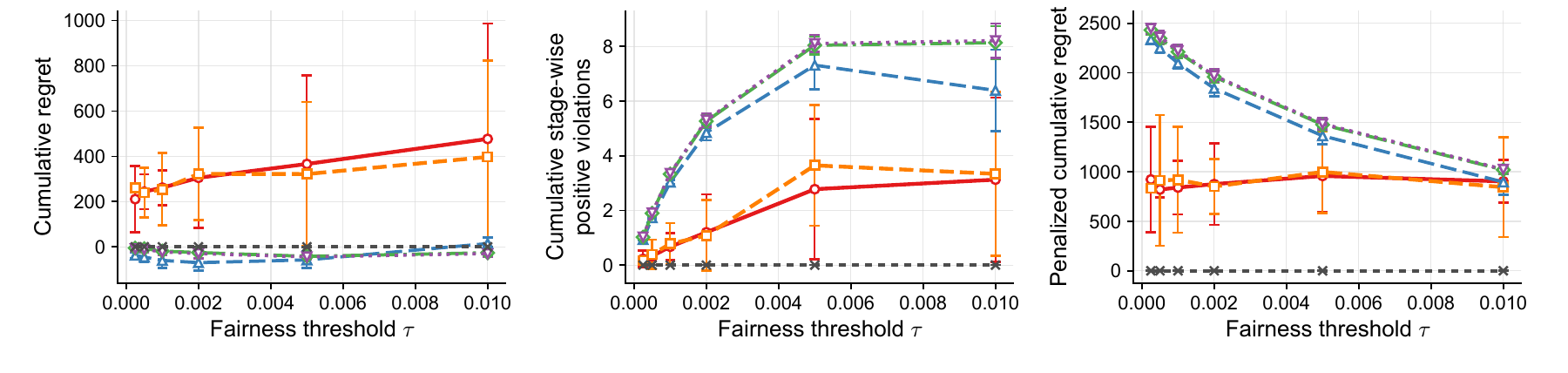}
    \par
    \includegraphics[width=0.96\linewidth]{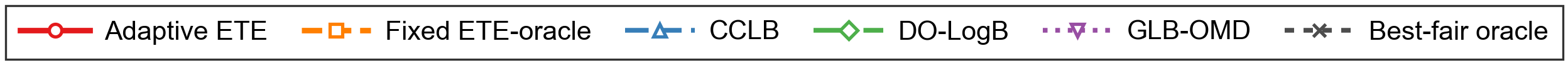}
    \par
    \caption{Structural causal model with varying fairness threshold $\tau$.}
    \label{fig:additional-scm}
  \end{subfigure}
  \caption{Comparison of cumulative regret, cumulative stage-wise positive violations, and penalized cumulative regret under varying $\rho$ and $\tau$.}
  \label{fig:additional-experiments}
\end{figure}

\subsection{Metrics and Implementation Details}
\label{app:evaluation-protocol}

We report three evaluation metrics and use the following implementation settings throughout. Curves show means over $10$ independent runs; shading and error bars span two standard deviations on either side.

Let $a^\star(X_t)$ be the best fair action. In addition to cumulative regret $R_t$ and cumulative stage-wise positive violations $V_t$ from Section~\ref{sec:model}, we report
\[
 R_t^{\mathrm{pen}}
 =\sum_{s=1}^t\left[r_{\theta^\star}(X_s,a^\star(X_s))
 -r_{\theta^\star}(X_s,A_s)\ind\{A_s\in\cA_\tau(X_s)\}\right].
\]
We use the penalized cumulative regret of \citet{chen2025causal}. In this metric, an unfair action receives zero reward. The best-fair oracle has zero regret, violation, and penalized regret.

The logistic MLE uses ridge $10^{-6}$. Both ETE algorithms begin with $100$ uniform exploration rounds and use a confidence radius coefficient of $0.01$. Adaptive ETE doubles the number of exploration samples used for estimation when its width test fails.

Fixed ETE-oracle sets its exploration length using an empirical version of $V_\star$ in Definition~\ref{def:vstar}. Using $\theta^\star$ and all $TK$ offered context--action pairs, define
\[
 \widehat I=\frac1{TK}\sum_{t=1}^T\sum_{a=1}^K
 \dot\mu(\phi(X_t,a)^\top\theta^\star)\phi(X_t,a)\phi(X_t,a)^\top
\]
and, with $q_\Delta=q_r-q_G$,
\[
 \widehat V_\star
 =\max_{t\in[T],a\in[K]}\max\left\{
 \normM{q_r(X_t,a)}{\widehat I^\dagger}^2,
 \normM{q_\Delta(X_t,a)}{\widehat I^\dagger}^2
 \right\}.
\]
The algorithm bounds the counterfactual effect by combining separate intervals for the factual and counterfactual rewards. The corresponding empirical scale is
\[
 \widehat V
 =\max_{t\in[T],a\in[K]}\max\left\{
 \normM{q_r(X_t,a)}{\widehat I^\dagger}^2,
 \left(\normM{q_r(X_t,a)}{\widehat I^\dagger}
 +\normM{q_G(X_t,a)}{\widehat I^\dagger}\right)^2
 \right\},
\]
where $\widehat I^\dagger$ is the Moore--Penrose inverse. The triangle inequality gives $\widehat V_\star\le\widehat V\le9\widehat V_\star$. Fixed ETE-oracle explores for $m=\max\{100,\lceil(\widehat V T^2)^{1/3}\rceil\}$ rounds. This choice omits the theoretical factor $1024^2\ell$; the initial exploration length and radius coefficient are also empirical choices. This benchmark uses $\theta^\star$ and future contexts and is not implementable online.

For CCLB, we use the primal--dual decision rule of \citet{chen2025causal} with ridge $1$ and a confidence radius coefficient of $2$. Its dual variable is projected onto $[0,10]$, and the model is refitted every $10$ rounds. For DO-LogB, we use the OMD update of \citet{chen2026causal}, replace its path-specific constraints with $|\Delta_\theta(x,a)|\le\tau$, and use $100$ initial uniform exploration rounds. DO-LogB and GLB-OMD~\citep{zhang2026generalized} use a confidence radius coefficient of $1/4$. GLB-OMD does not use the fairness constraint. An algorithm selects an action uniformly if its optimistic fair set is empty.

\subsection{Synthetic Experiment}
\label{app:synthetic-experiment}

This experiment varies the information available for estimating the counterfactual effect while keeping the rewards and fair action sets fixed. For every $t\in[T]$, $a\in[K]$, and $j\in[d]$, draw $S_{t,a,j}$ independently and uniformly from $\{-1,1\}$. Set $\phi_j(X_t,a)=S_{t,a,j}/\sqrt d$ for $j<d$ and $\phi_d(X_t,a)=\rho S_{t,a,d}/\sqrt d$. The counterfactual feature of action $1$ equals its factual feature, so this action is always fair. For each remaining action, the first and last coordinates of $\phi^G(X_t,a)$ are independent random permutations of $K-1$ evenly spaced points in $[-1,1]/\sqrt d$. The other coordinates equal those of $\phi(X_t,a)$. We use
\[
 d=10,\quad K=5,\quad T=10{,}000,\quad
 \theta^\star=(0.75,-0.45,0.30,-0.20,0,\ldots,0)^\top,
\]
and set $\rho=0.25$ and $\tau=0.05$ for Figure~\ref{fig:main-experiments}\subref{fig:main-synthetic}. Since $\theta_d^\star=0$, changing $\rho$ leaves the factual and counterfactual means, fair sets, and best-fair actions unchanged. It changes only the information available in the last coordinate. The mean number of fair actions is $3.00$.

Figure~\ref{fig:additional-experiments}\subref{fig:additional-synthetic} varies $\rho\in\{1,1/2,1/4,1/8\}$. The corresponding empirical values of $V_\star$, averaged over $10$ trials, are $2.58$, $3.25$, $7.29$, and $21.52$. Fixed ETE-oracle has larger violation and penalized regret as $V_\star$ increases. Adaptive ETE also has larger values when $V_\star$ is largest.

\subsection{Structural Causal Model}
\label{app:scm-experiment}

We adapt the structural causal model of \citet{chen2025causal} by clipping and rescaling the features and using a logistic reward model in place of their Gaussian-noise threshold model. Draw $A\sim\Bern(1/2)$ and $W\sim\mathcal N(0,1-A/2)$, where the second parameter of the normal distribution is the variance. Independently draw $U_M,Z_M\sim\mathcal N(0,1)$ and define
\[
 s(a,w,u)=
 \begin{cases}
 |w|/2+|u|/3,&a=1,\\
 |w|/3+|u|/2,&a=0.
 \end{cases}
\]
Using the same $U_M$ and $Z_M$ in the factual and counterfactual worlds, set
\[
 M=s(A,W,U_M)^{1/2}Z_M,\qquad
 M^G=s(1-A,W,U_M)^{1/2}Z_M.
\]
For each $a\in[20]$, draw $D_a\sim\mathcal N(0,\max\{|W|,|M|\})$. Let $[u]_{-1}^1=\max\{-1,\min\{u,1\}\}$. The factual feature and its counterfactual counterpart are
\[
 \phi(X,a)=\frac{1}{\sqrt5}\left(A,[W/2.5]_{-1}^1,[M/2.5]_{-1}^1,
 [D_a/3.5]_{-1}^1,[M/2.5]_{-1}^1[D_a/3.5]_{-1}^1\right)^\top,
\]
where $\phi^G(X,a)$ is obtained by replacing $(A,M)$ with $(1-A,M^G)$. We set
\[
 \theta^\star=(0,-\sqrt5/2,0,0,35\sqrt5/12)^\top
\]
and draw rewards from $\Bern(\mu(\phi(X,a)^\top\theta^\star))$. Coordinate-wise clipping and the factor $1/\sqrt5$ ensure that both feature vectors have norm at most one.

At each round, we use rejection sampling until at least $12$ actions satisfy $|\Delta_{\theta^\star}(X_t,a)|\le\tau$, following \citet{chen2025causal}. Figure~\ref{fig:main-experiments}\subref{fig:main-scm} uses $\tau=0.001$. The mean number of fair actions is $18.00$. Figure~\ref{fig:additional-experiments}\subref{fig:additional-scm} uses $\tau\in\{0.00025,0.0005,0.001,0.002,0.005,0.01\}$. Rejection sampling gives a different accepted context distribution for each value of $\tau$. For all $6$ values, one of the two ETE algorithms has the smallest mean penalized regret among the learning algorithms.

\clearpage
\section{Related Work}
\label{sec:additional-related-work}

\paragraph{Causal bandits and counterfactual fairness.} Early causal-bandit work uses known causal structure to share information across interventions, improving best-intervention identification \citep{lattimore2016causal,sen2017identifying} and cumulative regret \citep{lu2020regret}. Counterfactual fairness asks whether a prediction or outcome would change under a counterfactual intervention on a protected attribute \citep{kusner2017counterfactual}; causal and path-specific criteria further distinguish the mechanisms through which disparities arise \citep{zhang2018fairness,chiappa2019path,wu2019counterfactual,plecko2023causal}. \citet{huang2022achieving} bring counterfactual fairness to causal bandits using d-separation-based reward estimation and fair policy selection. \citet{lear2025causal} use causal decomposition to study long-term fairness in Markov decision processes. For logistic rewards, \citet{chen2025causal} study a counterfactual mean-reward constraint with a shared unknown parameter, while \citet{chen2026causal} consider path-specific fairness. In our setting, the causal structure is supplied through known factual and counterfactual feature maps. We study the resulting constrained bandit problem: whether factual reward observations provide enough information to learn counterfactual feasibility, and the regret--violation tradeoff when they do.

\paragraph{Bandits with constraints.} Stage-wise constrained bandits require feasibility at each round \citep{amani2019linear,khezeli2020safe,moradipari2020stage,amani2020generalized,pacchiano2021stochastic}. Long-term formulations instead bound cumulative constraint values, allowing strictly feasible rounds to offset violations, often through primal--dual updates \citep{jenatton2016adaptive,liu2021efficient,zhou2022kernelized}. The analysis of \citet{chen2025causal} uses a signed cumulative violation measure (Appendix~F, Eqs.~(82) and~(101)). We control the stronger measure of cumulative stage-wise positive violations, $V_T=\sum_{t=1}^T[|\Delta_{\theta^\star}(X_t,A_t)|-\tau]_+$, which counts each positive violation separately while allowing violations during learning. The learner receives no separate constraint feedback: both reward and fairness must be estimated from the same factual observations. We establish the need for coverage and obtain matching leading upper and worst-case lower rates under a full-rank condition on the factual covariance pooled across actions.

\paragraph{Logistic bandits.} Logistic bandits are a special case of generalized linear bandits \citep{filippi2010parametric,li2017provably}. Subsequent work sharpens their confidence sets, curvature dependence, and computational efficiency, and extends the model to neural reward functions \citep{faury2020improved,abeille2021instance,jun2021improved,faury2022jointly,lee2024unified,zhang2026generalized,bae2025neural}. These results estimate rewards at the available action features. In our problem, the same logistic parameter must also predict rewards at counterfactual features and the difference between the two means. The target scale \(V_\star\) is the largest inverse-Fisher variance over the reward and effect directions.

\paragraph{Coverage assumptions.} Full-rank context-covariance assumptions are used in stochastic linear contextual bandits and best-of-both-worlds analyses \citep{wu2020stochastic,bastani2021mostly,kuroki2024best}. Assumption~\ref{ass:coverage} requires the covariance generated by uniform action sampling to be nonsingular but allows the covariance of each action to be singular. Experimental-design methods choose a sampling distribution to reduce uncertainty in the directions that distinguish promising actions \citep{fiez2019sequential,wagenmaker2021experimental,mason2022experimental}. Our target scale \(V_\star\) measures the largest uncertainty of a reward or effect direction under the Fisher information generated by uniform exploration. \citet{hao2020high} also obtain a \(T^{2/3}\) rate by balancing costly exploration against estimation error. The same balance determines the exploration length in our upper bounds.

\section{Proof of Proposition~\ref{prop:no-coverage}}
\label{app:no-coverage-proof}

Take $d=2$, $\cA=\{1,2\}$, and $X_t=x_0$ almost surely for every $t$. Define
\[
 \theta_0=
 \begin{pmatrix}
 2\log 3\\
 0
 \end{pmatrix},
 \qquad
 \theta_1=
 \begin{pmatrix}
 2\log 3\\
 -2\log 21
 \end{pmatrix},
 \qquad
 \Theta=\{(1-\lambda)\theta_0+\lambda\theta_1:\lambda\in[0,1]\}.
\]
Set $\tau=1/8$. With $e_1,e_2$ denoting the standard basis of $\R^2$, choose
\[
 \phi(x_0,1)=\frac12e_1,
 \qquad
 \phi^G(x_0,1)=\frac12(e_1+e_2),
 \qquad
 \phi(x_0,2)=\phi^G(x_0,2)=0.
\]
In environment $j\in\{0,1\}$, the true parameter is $\theta^\star=\theta_j$. The context distribution, feature maps, parameter set, and threshold are the same in both environments. The factual reward means satisfy
\[
 r_{\theta_0}(x_0,1)=r_{\theta_1}(x_0,1)=\mu(\log 3)=\frac34,
 \qquad
 r_{\theta_j}(x_0,2)=r_{\theta_j}^G(x_0,2)=\frac12.
\]
The counterfactual reward means of action $1$ are
\[
 r_{\theta_0}^G(x_0,1)=\mu(\log 3)=\frac34,
 \qquad
 r_{\theta_1}^G(x_0,1)=\mu(-\log 7)=\frac18.
\]
Thus $\Delta_{\theta_0}(x_0,1)=0$, $\Delta_{\theta_1}(x_0,1)=5/8$, and $\Delta_{\theta_j}(x_0,2)=0$. Both actions are fair under $\theta_0$, so the fair oracle chooses action $1$. Only action $2$ is fair under $\theta_1$. The one-round $(\text{regret},\text{violation})$ pairs are
\[
\begin{array}{c|cc}
 & \text{action }1 & \text{action }2\\
\hline
 \theta_0 & (0,0) & (1/4,0)\\
 \theta_1 & (-1/4,1/2) & (0,0).
\end{array}
\]

Fix any adaptive, possibly randomized policy $\pi$, and let
\[
 N_1(T)=\sum_{t=1}^T\ind\{A_t=1\}.
\]
Every action has the same Bernoulli reward law under $\theta_0$ and $\theta_1$. Since the context and observed feature maps also agree, $\pi$ induces the same distribution of observed histories, and hence of $N_1(T)$, in both environments. Write $m=\E_{0,\pi}[N_1(T)]=\E_{1,\pi}[N_1(T)]$. The table gives
\[
 \E_{0,\pi}[R_T]=\frac{T-m}{4},
 \qquad
 \E_{1,\pi}[V_T]=\frac{m}{2}.
\]
If $m\le T/3$, then $\E_{0,\pi}[R_T]\ge T/6$; if $m\ge T/3$, then $\E_{1,\pi}[V_T]\ge T/6$. This proves the first claim.

Action $1$ under $\theta_1$ has regret $-1/4$ and violation $1/2$, so its joint loss is $1/4$. Consequently,
\[
 \E_{0,\pi}[R_T+V_T]=\frac{T-m}{4},
 \qquad
 \E_{1,\pi}[R_T+V_T]=\frac{m}{4}.
\]
The maximum of these two quantities is at least $T/8$, which proves the second claim.

The construction satisfies Assumptions~\ref{ass:bounded} and~\ref{ass:fair}. The set $\Theta$ is convex and compact, with $\norm{\theta}_2\le\norm{\theta_1}_2=2\sqrt{(\log3)^2+(\log21)^2}$ for every $\theta\in\Theta$; we take this value as the parameter bound $S$. All feature norms are at most one, and action $2$ has zero effect for every $\theta\in\Theta$. For $\theta=(1-\lambda)\theta_0+\lambda\theta_1$, the logits of action $1$ are
\[
 \phi(x_0,1)^\top\theta=\log 3,
 \qquad
 \phi^G(x_0,1)^\top\theta
 =\log 3-\lambda\log 21
 \in[-\log 7,\log 3].
\]
The logits of action $2$ are zero. Therefore $\kappa=64/7<\infty$. Finally, the pooled factual covariance is
\[
 \Sigma
 =
 \frac12\phi(x_0,1)\phi(x_0,1)^\top
 +\frac12\phi(x_0,2)\phi(x_0,2)^\top
 =
 \frac18e_1e_1^\top.
\]
Thus $\theta_1-\theta_0\in\ker(\Sigma)$, while $\phi^G(x_0,1)^\top(\theta_1-\theta_0)=-\log21\ne0$. The coverage assumption excludes this construction.

\section{Proof of Theorem~\ref{thm:binary}}
\label{app:lower-notation}

We use the following notation for both lower-bound proofs. In environment $j$, with true parameter $\theta_j$, write
\[
  r_j=r_{\theta_j},\qquad r_j^G=r_{\theta_j}^G,
  \qquad \Delta_j=\Delta_{\theta_j},\qquad I_j=I(\theta_j).
\]
Let $\cA_{\tau,j}(x)=\{a\in\cA:|\Delta_j(x,a)|\le\tau\}$ and $a_j^\star(x)\in\arg\max_{a\in\cA_{\tau,j}(x)}r_j(x,a)$. Define the one-round joint cost by
\[
  \mathrm{cost}_j(x,a)
  =r_j(x,a_j^\star(x))-r_j(x,a)
  +\pos{|\Delta_j(x,a)|-\tau}.
\]
Then $R_T+V_T=\sum_{t=1}^T\mathrm{cost}_j(X_t,A_t)$ in environment $j$.

\subsection{Construction of the Worst-Case Instance}

Set the constants
\begin{equation*}
  v_0=\dot\mu(3/4),
  \qquad
  \Delta_E=\frac12-\mu(-3/4)-\frac1{128}>0.
\end{equation*}
Indeed, $\mu(-3/4)<1/3$, so $\Delta_E>1/6-1/128>1/128$.

Fix $B\ge1$, $\rho\in(0,1]$, and $\alpha\in(0,1/4]$.  Define
\begin{equation}
  v_B=\dot\mu(B),
  \qquad
  g_\alpha=\frac{v_0\alpha}{8},
  \qquad
  b_\alpha=\log\frac{1/2+g_\alpha}{1/2-g_\alpha}.
  \label{eq:binary-scalars}
\end{equation}
The context $X$ is uniform on $\{-1,1\}$. Define the two parameters by
\begin{equation*}
  \theta_0=(B\sqrt{1+\rho^2},0)^\top,
  \qquad
  \theta_1=(B\sqrt{1+\rho^2},\alpha)^\top,
  \qquad
  \Theta=\operatorname{conv}\{\theta_0,\theta_1\}.
\end{equation*}
There are three actions: a high-reward action $H$, an always-fair baseline $S$, and an informative action $E$. Their features and roles are

\begingroup
\renewcommand{\arraystretch}{1.35}

\begin{equation}
\begin{array}{c|c|c|c}
\text{action}
  & \phi(x,a)
  & \phi^G(x,a)
  & \text{role}
\\ \hline
H
  & 0
  & e_2
  & \text{high reward but unknown fairness}
\\[3mm]
S
  & -\dfrac{b_\alpha}{B\sqrt{1+\rho^2}}e_1
  & -\dfrac{b_\alpha}{B\sqrt{1+\rho^2}}e_1
  & \text{fair baseline}
\\[5mm]
E
  & \dfrac{-e_1+x\rho e_2}{\sqrt{1+\rho^2}}
  & \dfrac{-e_1+x\rho e_2}{\sqrt{1+\rho^2}}
  & \text{informative but lower reward}
\end{array}
\label{eq:binary-features}
\end{equation}

\endgroup

Finally, set the fairness threshold from Section~\ref{sec:model} to
\begin{equation*}
  \Delta_\alpha=\mu(\alpha)-\frac12,
  \qquad
  \tau=\tau_\alpha=\frac{\Delta_\alpha}{2}.
\end{equation*}
The context distribution, feature maps, parameter set, and threshold are common and known to the learner in both environments. Only the true parameter differs.

The next lemma verifies the assumptions and gives the joint cost of each action in both environments.

\begin{lemma}
\label{lem:binary-costs}
For every $B\ge1$, $\rho\in(0,1]$, $0<\alpha\le1/4$, and $x\in\{-1,1\}$, the construction satisfies Assumptions~\ref{ass:bounded}, \ref{ass:fair}, and~\ref{ass:coverage}. Its one-round joint costs are
\begin{equation*}
\begin{array}{c|ccc|c}
 &\mathrm{cost}_j(x,H)&\mathrm{cost}_j(x,S)&\mathrm{cost}_j(x,E)&a_j^\star(x)\\ \hline
j=0&0&g_\alpha&\ge\Delta_E&H\\
j=1&\ge3g_\alpha&0&\ge\Delta_E&S.
\end{array}
\end{equation*}
In particular, every action other than $H$ costs at least $g_\alpha$ in environment $0$, and the informative action costs at least $\Delta_E$ in both environments.
\end{lemma}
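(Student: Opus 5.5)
The plan is to verify everything by direct computation of the factual and counterfactual logits in \eqref{eq:binary-features}. Write $B'=B\sqrt{1+\rho^2}$, so $\theta_j=(B',j\alpha)^\top$ for $j\in\{0,1\}$.

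\emph{Action $H$.} Its factual logit is $0$ and its counterfactual logit is $j\alpha$. Hence $r_j(x,H)=1/2$ in both environments. In environment $0$ we get $\Delta_0(x,H)=0$. In environment $1$ we get $|\Delta_1(x,H)|=\mu(\alpha)-1/2=\Delta_\alpha>\tau$, with positive violation $\Delta_\alpha/2$.

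\emph{Action $S$.} Both logits equal $-b_\alpha$. By the definition of $b_\alpha$ in \eqref{eq:binary-scalars}, $r_j(x,S)=\mu(-b_\alpha)=1/2-g_\alpha$ and $\Delta_j(x,S)=0$.

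\emph{Action $E$.} Both logits equal $-B+x\rho j\alpha/\sqrt{1+\rho^2}$, so $\Delta_j(x,E)=0$. Since $B\ge1$ and the second term is at most $\alpha\le1/4$ in absolute value, the logit is at most $-3/4$. Thus $r_j(x,E)\le\mu(-3/4)$.

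Consequently $S$ and $E$ are fair in both environments, and $H$ is fair exactly in environment $0$. This gives $a_0^\star=H$ (reward $1/2$) and $a_1^\star=S$ (reward $1/2-g_\alpha$), which also settles Assumption~\ref{ass:fair}.

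The cost table then follows.
\begin{itemize}
\item In environment $0$: $\mathrm{cost}_0(x,H)=0$ and $\mathrm{cost}_0(x,S)=g_\alpha$. Also $\mathrm{cost}_0(x,E)\ge1/2-\mu(-3/4)\ge\Delta_E$.
\item In environment $1$: $\mathrm{cost}_1(x,S)=0$. For $E$, $\mathrm{cost}_1(x,E)\ge1/2-g_\alpha-\mu(-3/4)$. This is at least $\Delta_E$ because $g_\alpha=v_0\alpha/8\le(1/4)(1/4)/8=1/128$.
\item The key inequality is for $H$ in environment $1$. There the regret is $-g_\alpha$ (a negative regret, since $H$ has higher reward than $S$) and the violation is $\Delta_\alpha/2$. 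So I need $\Delta_\alpha/2-g_\alpha\ge3g_\alpha$, i.e.\ $\Delta_\alpha\ge8g_\alpha=v_0\alpha$.
\end{itemize}
For the key inequality I use the mean value theorem: $\Delta_\alpha=\mu(\alpha)-\mu(0)\ge\alpha\min_{u\in[0,\alpha]}\dot\mu(u)=\alpha\dot\mu(\alpha)$. Since $\dot\mu$ is decreasing on $[0,\infty)$ and $\alpha\le1/4<3/4$, this is at least $\alpha\dot\mu(3/4)=v_0\alpha$. This step, making sure the fairness violation dominates the negative regret with the right constant, is the only place where the constants $v_0$ and $1/8$ in $g_\alpha$ matter, so I expect it to be the main point to check carefully.

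Next I check Assumption~\ref{ass:bounded}. $\Theta$ is a segment, hence convex and compact, with $S:=\|\theta_1\|_2$. The features of $H$ and $E$ have norm at most $1$. For $S$, the feature norm is $b_\alpha/B'$. Since $2g_\alpha\le1/64$, we have $b_\alpha=\log\frac{1+2g_\alpha}{1-2g_\alpha}\le1$, and $B'\ge1$, so the norm is at most $1$. Curvature is finite because all logits over $\Theta$ are bounded.

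For Assumption~\ref{ass:coverage}, I compute $\Sigma=\frac13\bigl(\phi_S\phi_S^\top+\E_x[\phi_E\phi_E^\top]\bigr)$ with $\phi_H=0$. Averaging over $x\in\{\pm1\}$ kills the cross term of $\phi_E\phi_E^\top$, giving $\E_x[\phi_E\phi_E^\top]=(e_1e_1^\top+\rho^2e_2e_2^\top)/(1+\rho^2)$. Therefore $\lambda_{\min}(\Sigma)\ge\rho^2/(3(1+\rho^2))>0$. This quantity is common to both environments and known from the construction, so it can serve as the known $\sigma_0^2$.
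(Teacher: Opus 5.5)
Your proposal is correct and follows the paper's proof essentially step for step: you evaluate the logits of $H$, $S$, $E$ directly, obtain the key bound $\Delta_\alpha\ge v_0\alpha=8g_\alpha$ from monotonicity of $\dot\mu$ on $[0,\infty)$, and compute $\Sigma$ as a diagonal matrix. Two small checks are left implicit: the ``in particular'' clause for $E$ in environment~$0$ needs the one-line bound $\Delta_E>1/6-1/128>1/128\ge g_\alpha$ (using $\mu(-3/4)<1/3$), and since $\rho\le1$, the value $\rho^2/(3(1+\rho^2))$ is the exact minimum eigenvalue $\sigma_0^2$, not merely a lower bound.
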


\begin{proof}
First note that $g_\alpha\le1/128$ and
\[
  b_\alpha=2\operatorname{arctanh}(2g_\alpha)
  \le\log(65/63)<1/16.
\]
Hence all factual and counterfactual feature norms are at most one, and $\Theta$ is convex and compact. Also, the definition of $b_\alpha$ gives
\begin{equation*}
  \mu(-b_\alpha)=\frac12-g_\alpha.
\end{equation*}
Thus $H$ and $S$ have factual rewards $1/2$ and $1/2-g_\alpha$ in both environments. Since $\dot\mu$ decreases on $[0,\infty)$ and $\alpha\le1/4<3/4$,
\begin{equation*}
  \Delta_\alpha
  =\int_0^\alpha\dot\mu(z)\,dz
  \ge v_0\alpha
  =8g_\alpha.
\end{equation*}
Therefore $H$ is fair in environment $0$.  In environment $1$, its effect has absolute value $\Delta_\alpha$, so its violation is $\Delta_\alpha-\tau_\alpha=\Delta_\alpha/2$.  Its regret relative to $S$ is $-g_\alpha$, and consequently
\begin{equation*}
  \mathrm{cost}_1(x,H)
  =-g_\alpha+\frac{\Delta_\alpha}{2}
  \ge3g_\alpha.
\end{equation*}
Actions $S$ and $E$ have identical factual and counterfactual features, so their effects are zero and they are fair in both environments.

It remains to verify the cost of action $E$ in both environments. Its factual logit is $-B$ in environment $0$, while in environment $1$ it is
\[
  -B+\frac{x\rho\alpha}{\sqrt{1+\rho^2}}
  \le -B+\alpha\le-3/4.
\]
Hence
\begin{gather}
  \mathrm{cost}_0(x,E)
  =\frac12-\mu(-B)
    \ge\frac12-\mu(-3/4)>\Delta_E,
    \label{eq:binary-E-cost-0}\\
  \mathrm{cost}_1(x,E)
  =\frac12-g_\alpha
    -\mu\!\left(-B+\frac{x\rho\alpha}{\sqrt{1+\rho^2}}\right)
    \ge\frac12-\frac1{128}-\mu(-3/4)
    =\Delta_E.
    \label{eq:binary-E-cost-1}
\end{gather}
Since $\Delta_E>1/128\ge g_\alpha$, these calculations prove the cost table and verify that the fair set is nonempty. Finally,
\begin{equation}
  \Sigma =\E\bigl[\phi(X,A)\phi(X,A)^\top\bigr]
  =\frac13
  \begin{pmatrix}
    \dfrac{1+b_\alpha^2/B^2}{1+\rho^2}&0\\[2mm]
    0&\dfrac{\rho^2}{1+\rho^2}
  \end{pmatrix},
  \quad
  \sigma_0^2=\frac{\rho^2}{3(1+\rho^2)}>0.
  \label{eq:binary-covariance}
\end{equation}
The first diagonal entry is no smaller than the second because $\rho\le1$.  Thus the displayed value is exactly the minimum eigenvalue, and Assumption~\ref{ass:coverage} holds.
\end{proof}

We next bound $V_\star$ for this construction. Recall that
\[
  C=\frac{1+\rho^2}{4e^{1/4}\rho^2v_B}.
\]

\begin{lemma}
\label{lem:binary-fisher}
At both true parameters $j\in\{0,1\}$,
\begin{equation}
  \frac C3\le V_{\star,j}\le3C.
  \label{eq:binary-vstar}
\end{equation}
Moreover, for the global curvature $\kappa$ of this common family,
\begin{equation}
  \frac{e^{-1/2}}{12}\frac{\kappa}{\sigma_0^2}
  \le C\le
  \frac{e^{-1/4}}{12}\frac{\kappa}{\sigma_0^2}.
  \label{eq:binary-curvature}
\end{equation}
\end{lemma}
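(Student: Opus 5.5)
The plan is to compute $I_j=I(\theta_j)$ explicitly, bound $I_j^{-1}$ in the two coordinate directions, and then evaluate $\normM{q_r}{I_j^{-1}}^2$ and $\normM{q_\Delta}{I_j^{-1}}^2$ action by action. Throughout I will use the elementary ratio bound
\[
e^{-|s|}\,\dot\mu(z)\le\dot\mu(z+s)\le e^{|s|}\,\dot\mu(z),
\]
which follows from $\tfrac{d}{dz}\log\dot\mu(z)=1-2\mu(z)\in(-1,1)$.

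\emph{Step 1: the Fisher matrix.} Action $H$ has $\phi=0$ and contributes nothing. Action $S$ has factual logit $-b_\alpha$ under both parameters. It contributes $\tfrac13\dot\mu(-b_\alpha)c^2e_1e_1^\top$, where $c=b_\alpha/(B\sqrt{1+\rho^2})$; this term only enlarges the $e_1$ block. Action $E$ has logit $\ell_E(x)=-B+x\rho\alpha j/\sqrt{1+\rho^2}$ in environment $j$. It contributes
\[
\frac{1}{3(1+\rho^2)}\,\E_x\bigl[\dot\mu(\ell_E(x))\,(-e_1+x\rho e_2)(-e_1+x\rho e_2)^\top\bigr].
\]
In environment $0$ the weights do not depend on $x$, so the off-diagonal terms cancel and $(I_0)_{22}=v_B\rho^2/(3(1+\rho^2))$. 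Because $I_0$ is diagonal, $(I_0^{-1})_{22}=3(1+\rho^2)/(\rho^2v_B)=12e^{1/4}C$. In environment $1$ the weights are $e^{\pm\alpha}$-close to $v_B$ with $\alpha\le1/4$. I will bound the Schur complement of the $(1,1)$ entry, which equals $1/(I_1^{-1})_{22}$, above and below by $e^{\pm1/4}$-type multiples of the environment-$0$ value. The upper bound follows from the $(2,2)$ entry. For the lower bound I will use Cauchy--Schwarz on the two-point average in $x$ together with the fact that the off-diagonal term is $O(\alpha)$ relative to the diagonal terms.

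\emph{Step 2: the dominant direction, action $H$.} For action $H$, $q_r=0$ and $q_\Delta=-\dot\mu(\phi^G(x,H)^\top\theta_j)e_2$, with $\phi^G(x,H)^\top\theta_j\in\{0,\alpha\}$. Hence $\normM{q_\Delta}{I_j^{-1}}^2=\dot\mu(\alpha j)^2(I_j^{-1})_{22}$. In environment $0$ this equals $\tfrac1{16}\cdot12e^{1/4}C=\tfrac34e^{1/4}C\in[C/3,3C]$. In environment $1$, the factor $\dot\mu(\alpha)^2\ge e^{-1/2}/16$ combined with Step 1 keeps the value within $[C/3,3C]$. This gives the lower bound in \eqref{eq:binary-vstar} and the main contribution to the upper bound.

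\emph{Step 3: remaining directions.} For $S$ and $E$, $\phi=\phi^G$, so $q_\Delta=0$ and only $q_r=\dot\mu(\ell)\phi$ matters. For $E$ at context $x$, the matrix bound $I_j\succeq\tfrac16\dot\mu(\ell_E(x))\phi(x,E)\phi(x,E)^\top$ alone does not invert, since it is rank one. However, $\phi(x,E)$ lies in the range of this rank-one term, so
\[
\phi(x,E)^\top I_j^{-1}\phi(x,E)\le 6/\dot\mu(\ell_E(x)).
\]
This gives $\normM{q_r}{I_j^{-1}}^2\le6\dot\mu(\ell_E(x))\le3/2$. For $S$, the same argument gives at most $3\dot\mu(-b_\alpha)\le3/4$. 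Since $(1+\rho^2)/\rho^2\ge2$ and $v_B\le\dot\mu(1)<1/5$, we have $C\ge 5/(2e^{1/4})>1.9$, so $3C>3/2$ and both bounds lie below $3C$. This is enough for the upper bound, because the maximum over all actions is what $V_{\star,j}$ measures.

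\emph{Step 4: the curvature bound \eqref{eq:binary-curvature}.} Over $\Theta$, the smallest $\dot\mu$ occurs at $E$'s logit $-B-\lambda\rho\alpha/\sqrt{1+\rho^2}$. All other logits, namely $0$, $\alpha$, and $-b_\alpha$, have much larger curvature. Hence $1/v_B\le\kappa\le e^{1/4}/v_B$. Using $\sigma_0^2=\rho^2/(3(1+\rho^2))$ from \eqref{eq:binary-covariance}, we get $\kappa/\sigma_0^2=3(1+\rho^2)\kappa/\rho^2$. Substituting the two-sided bound on $\kappa v_B$ then yields \eqref{eq:binary-curvature}.

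The main obstacle is Step 1 in environment $1$. There the off-diagonal term makes $I_1$ non-diagonal, and the constants must be tracked carefully enough that $\dot\mu(\alpha)^2(I_1^{-1})_{22}$ still lands in $[C/3,3C]$. I will handle this with the explicit $2\times2$ inverse and the $e^{\pm\alpha}$ ratio bounds.
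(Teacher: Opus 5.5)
Your proof is correct. Its lower-bound half is the paper's argument. Bounding $(I_j^{-1})_{22}$ below by $1/(I_j)_{22}$ is exactly what the paper gets by restricting \eqref{eq:variational} to $u=se_2$.

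The upper-bound half takes a different route. The paper sandwiches the informative action's contribution uniformly over $\Theta$: $e^{-1/4}v_B\Sigma_E\preceq I_E(\theta)\preceq e^{1/4}v_B\Sigma_E$, with $\Sigma_E=\frac{1}{1+\rho^2}\operatorname{diag}(1,\rho^2)$. It then uses the single bound $I(\theta)^{-1}\preceq\frac{3e^{1/4}}{v_B}\Sigma_E^{-1}$ for all three actions at once. Loewner monotonicity of the inverse therefore disposes of the off-diagonal entry of $I_1$, the step you flag as the main obstacle, without any $2\times 2$ algebra. The same template also transfers directly to the $d$-dimensional family of Lemma~\ref{lem:general-fisher}.

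You instead invert $I_1$ explicitly for the $H$ direction. For $S$ and $E$ you use the rank-one leverage bound: $I\succeq p\,\dot\mu\,\phi\phi^\top$ implies $\dot\mu^2\,\phi^\top I^{-1}\phi\le\dot\mu/p$, which is made rigorous by \eqref{eq:variational}. This bound is more elementary and does not depend on the rest of the design. Its cost is that it gives absolute constants ($3/2$ and $3/4$) rather than multiples of $C$, so you must compare them with $3C$ using $C>1.9$.

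One remark on your Step 1. Cauchy--Schwarz on the two-point average only gives $(I_1)_{12}^2\le(I_1)_{11}(I_1)_{22}$, which says the Schur complement is nonnegative and nothing more. The clean finish is an exact identity. Drop the $S$ term from $(I_1)_{11}$; this can only lower the Schur complement. The remaining $E$-only Schur complement equals exactly $\frac{\rho^2}{3(1+\rho^2)}\cdot\frac{2w_+w_-}{w_++w_-}$, where $w_\pm=\dot\mu(-B\pm\rho\alpha/\sqrt{1+\rho^2})$.

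This harmonic mean is at least $\min\{w_+,w_-\}\ge e^{-1/4}v_B$. Hence $(I_1^{-1})_{22}\le e^{1/4}(I_0^{-1})_{22}$, and $\dot\mu(\alpha)^2(I_1^{-1})_{22}\le\frac34e^{1/2}C<3C$, as you need.
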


\begin{proof}
Let
\[
  \Sigma_E=\frac1{1+\rho^2} \begin{pmatrix}
      1 & 0 \\ 0 & \rho^2
  \end{pmatrix}
\]
and let $I_E(\theta)$ be the Fisher information from action $E$, averaged over $X$. For every $\theta\in\Theta$, the logit of action $E$ lies between $-B-1/4$ and $-B+1/4$. Applying \eqref{eq:logistic-comparison} before taking expectation gives
\begin{equation}
  e^{-1/4}v_B\Sigma_E
  \preceq I_E(\theta)
  \preceq e^{1/4}v_B\Sigma_E.
  \label{eq:binary-IE}
\end{equation}
Because the factual feature of $H$ is zero, the Fisher matrix under uniform action sampling is
\begin{equation*}
  I(\theta)=\frac13\left(
    I_E(\theta)
    +\dot\mu(-b_\alpha)\phi(S)\phi(S)^\top
  \right).
\end{equation*}

We have $q_r(H;\theta)=0$ and $q_\Delta(H;\theta)=-\dot\mu(\theta_2)e_2$ at action $H$.  The term for action $S$ has no $e_2$ component. Restricting \eqref{eq:variational} to $u=se_2$ and using the upper side of \eqref{eq:binary-IE} gives
\begin{align*}
  q_\Delta(H;\theta)^\top I(\theta)^{-1}q_\Delta(H;\theta)
  &\ge
  \frac{3e^{-1/4}\dot\mu(\theta_2)^2(1+\rho^2)}{v_B\rho^2}
  \nonumber\\
  &=12\dot\mu(\theta_2)^2C.
\end{align*}

At either true parameter the second component is 0 or $\alpha$, so $\dot\mu(\theta_2)\ge v_0$. To check the constant, put $y=e^{3/4}\in(1,3)$; then $v_0=y/(1+y)^2>3/16$ because $3y^2-10y+3=3(y-1/3)(y-3)<0$. Hence $12v_0^2>1/3$, proving $V_{\star,j}\ge C/3$.

For $V_{\star,j}\le 3C$, the lower side of \eqref{eq:binary-IE} implies
\begin{equation*}
  I(\theta)^{-1}
  \preceq\frac{3e^{1/4}}{v_B}\Sigma_E^{-1}.
\end{equation*}
We now check each nonzero term in the definition of $V_\star$.
\begin{itemize}
\item For the effect of $H$, we have $\dot\mu\le1/4$, so
\[
  \norm[I(\theta)^{-1}]{q_\Delta(H;\theta)}^2
  \le\frac{3e^{1/4}}{16v_B}\frac{1+\rho^2}{\rho^2}
  <\frac54C.
\]
\item The effect of $S$ is zero.  Since $\phi(S)^\top\Sigma_E^{-1}\phi(S)=b_\alpha^2/B^2\le1/16$,
\[
  \norm[I(\theta)^{-1}]{q_r(S;\theta)}^2
  \le\frac{3e^{1/4}}{256v_B}
  \le\frac C{16}.
\]
\item The effect of $E$ is zero.  Moreover, $\phi(x,E)^\top\Sigma_E^{-1}\phi(x,E)=2$ and $\dot\mu(\phi(x,E)^\top\theta)\le e^{1/4}v_B$.  Hence
\[
  \norm[I(\theta)^{-1}]{q_r(x,E;\theta)}^2
  \le6e^{3/4}v_B
  \le\frac43C.
\]
\end{itemize}
For the last two comparisons, use $v_B=\dot\mu(B)\le\dot\mu(1)<1/5$ and $\rho^2/(1+\rho^2)\le1/2$. Thus $3C$ bounds all terms and proves the upper bound in \eqref{eq:binary-vstar}.

To bound the curvature, note that every factual and counterfactual logit has absolute value at most $B+1/4$, and action $E$ attains a logit of magnitude $B$. Since $\dot\mu$ is even and decreasing on $[0,\infty)$, \eqref{eq:logistic-comparison} gives
\begin{equation}
  v_B^{-1}\le\kappa\le e^{1/4}v_B^{-1}.
  \label{eq:binary-kappa-sandwich}
\end{equation}
Combining \eqref{eq:binary-covariance}, the definition $C=(1+\rho^2)/(4e^{1/4}\rho^2v_B)$, and \eqref{eq:binary-kappa-sandwich} gives
\[
  \frac{C}{\kappa/\sigma_0^2}
  =\frac{1}{12e^{1/4}\kappa v_B}
  \in\left[\frac{e^{-1/2}}{12},\frac{e^{-1/4}}{12}\right],
\]
which is \eqref{eq:binary-curvature}.
\end{proof}

\subsection{Main Proof of Theorem~\ref{thm:binary}}

We first bound the KL divergence between observations in the two environments.

\begin{lemma}
\label{lem:binary-kl}
For every context $x\in\{-1,1\}$,
\begin{equation*}
  \operatorname{KL}\!\left(
    P_0(\,\cdot\mid x,E)
    \,\middle\|\,
    P_1(\,\cdot\mid x,E)
  \right)
  \le\frac{\alpha^2}{8C}.
\end{equation*}
The factual reward distributions of $H$ and $S$ are identical in the two environments.
\end{lemma}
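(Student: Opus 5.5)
The second claim is immediate from the construction in \eqref{eq:binary-features}. Actions $H$ and $S$ have factual features $0$ and $-\frac{b_\alpha}{B\sqrt{1+\rho^2}}e_1$. Since $\theta_0$ and $\theta_1$ share the first coordinate $B\sqrt{1+\rho^2}$, the factual logits $\phi(x,H)^\top\theta_j=0$ and $\phi(x,S)^\top\theta_j=-b_\alpha$ agree across $j$. Hence the Bernoulli reward laws coincide.

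For action $E$, we reduce the bound to a Bernoulli KL between two logits. Under $\theta_0$ the factual logit is $u_0=-B$. Under $\theta_1$ it is $u_1=-B+h$, where $h=x\rho\alpha/\sqrt{1+\rho^2}$, so $|h|\le\rho\alpha/\sqrt{1+\rho^2}\le\alpha\le1/4$. For Bernoulli laws with logistic means, $\operatorname{KL}(\Bern(\mu(u_0))\|\Bern(\mu(u_1)))$ is the Bregman divergence of the log-partition $\psi(u)=\log(1+e^u)$:
\[
 \operatorname{KL}=\psi(u_1)-\psi(u_0)-\dot\mu(u_0)\cdots,
\]
more precisely $\int_{u_0}^{u_1}(u_1-s)\dot\mu(s)\,ds$, which is at most $\tfrac12 h^2\sup_{s\in[u_0,u_1]}\dot\mu(s)$. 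Every $s$ in this segment satisfies $|s|\ge B-1/4$. Because $\dot\mu$ is even and decreasing on $[0,\infty)$, and by the logistic comparison inequality \eqref{eq:logistic-comparison} (i.e.\ $\dot\mu(s)\le e^{|s-B|}\dot\mu(B)$ type bound, already used in Lemma~\ref{lem:binary-fisher}), we get $\dot\mu(s)\le e^{1/4}v_B$.

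Combining these,
\[
 \operatorname{KL}\le \frac{e^{1/4}v_B}{2}\cdot\frac{\rho^2\alpha^2}{1+\rho^2}.
\]
Substituting $C=(1+\rho^2)/(4e^{1/4}\rho^2v_B)$ gives $\frac{\rho^2v_B}{1+\rho^2}=\frac{1}{4e^{1/4}C}$. The bound therefore becomes $\frac{\alpha^2}{8C}$ exactly, matching the claim.

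The only delicate step is the direction of the KL and the curvature constant. With the integral form of the remainder, the $\tfrac12h^2\sup\dot\mu$ bound holds regardless of direction. So the main point to check is that the comparison \eqref{eq:logistic-comparison} yields the factor $e^{1/4}$ over a logit shift of at most $1/4$. If it only gives $e^{|h|}$, we still have $e^{|h|}\le e^{1/4}$ since $|h|\le 1/4$.
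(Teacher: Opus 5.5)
Your proof is correct and follows the paper's argument. Like the paper, you bound the Bernoulli KL by $\tfrac12(z_1-z_0)^2\sup\dot\mu$ on the segment (Lemma~\ref{lem:one-step-kl}), use \eqref{eq:logistic-comparison} to get $\sup\dot\mu\le e^{1/4}v_B$ because the segment stays within $1/4$ of $-B$, substitute $C$ to obtain exactly $\alpha^2/(8C)$, and read the $H,S$ claim off \eqref{eq:binary-features}. One small slip: the linear term in your first Bregman expression should be $\mu(u_0)(u_1-u_0)$ rather than $\dot\mu(u_0)\cdots$, but the integral form you state immediately afterward is correct.
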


\begin{proof}
The two logits of $E$, conditional on $x$, are
\[
  z_0=-B,
  \qquad
  z_1=-B+\frac{x\rho\alpha}{\sqrt{1+\rho^2}}.
\]
Their squared difference is $\rho^2\alpha^2/(1+\rho^2)$, and the entire segment between $z_0$ and $z_1$ is within the interval $[-B-1/4, -B+1/4]$.  Lemma~\ref{lem:one-step-kl} and \eqref{eq:logistic-comparison} therefore give
\[
  \operatorname{KL}(P_0\|P_1)
  \le\frac{e^{1/4}v_B\rho^2\alpha^2}{2(1+\rho^2)}
  =\frac{\alpha^2}{8C}.
\]
The remaining part of the claim follows from \eqref{eq:binary-features}.
\end{proof}

The constants in Theorem~\ref{thm:binary} are
\[
  c_{\rm test}=\frac{e^{-1/8}v_0}{32},
  \qquad
  C_0=\frac{64\Delta_E}{c_{\rm test}},
  \qquad
  c_1=\Delta_E^{1/3}c_{\rm test}^{2/3}.
\]

\begin{proof}[Proof of Theorem~\ref{thm:binary}]
For a given $T$, choose
\[
  \alpha=\left(\frac{\Delta_EC}{c_{\rm test}T}\right)^{1/3}.
\]
Since $T\ge C_0C$ and $C_0=64\Delta_E/c_{\rm test}$, we have $\alpha^3\le1/64$, hence $\alpha\le1/4$, so the construction is valid. Fix an arbitrary adaptive, possibly randomized policy $\pi$ for the rest of the proof.

Let $N_E(t)=\sum_{s=1}^t\ind\{A_s=E\}$. We consider two cases according to how often the policy selects the informative action.

\begin{itemize}
\item \textbf{Case 1: action $E$ is used often.} If $\E_jN_E(T)\ge C/\alpha^2$ for one of $j\in\{0,1\}$, then the bounds \eqref{eq:binary-E-cost-0}--\eqref{eq:binary-E-cost-1} in that same environment give
    \begin{equation}
    \max_{j\in\{0,1\}}\E_j[R_T+V_T]
    \ge\frac{\Delta_EC}{\alpha^2}.
    \label{eq:binary-exploration-branch}
    \end{equation}
\item \textbf{Case 2: $E$ is rare in both environments.}  Suppose that $\E_jN_E(T)<C/\alpha^2$ for both $j$. By Lemmas~\ref{lem:adaptive-kl} and~\ref{lem:binary-kl}, the histories immediately before round $t$ satisfy
    \begin{equation*}
    \operatorname{KL}(P_0^t\|P_1^t)
    \le\frac{\alpha^2}{8C}\E_0N_E(t-1)
    <\frac18.
    \end{equation*}
By the data-processing statement in Lemma~\ref{lem:info-tools}, the same KL bound holds for the distributions of $A_t$. Applying \eqref{eq:bh} to the event $\{A_t=H\}$ yields
    \begin{equation}
    \Pp_0(A_t\ne H)+\Pp_1(A_t=H)
    \ge\frac12e^{-1/8}.
    \label{eq:binary-testing}
    \end{equation}
In environment $0$, every action other than $H$ costs at least $g_\alpha$. In environment $1$, action $H$ costs at least $3g_\alpha$, hence at least $g_\alpha$. Since the maximum is at least the average, summing \eqref{eq:binary-testing} over $t$ gives
    \begin{align}
    \max_{j\in\{0,1\}}\E_j[R_T+V_T]
    &\ge\frac12\sum_{t=1}^T
    \left[
    g_\alpha\Pp_0(A_t\ne H)
    +g_\alpha\Pp_1(A_t=H)
    \right]\nonumber\\
    &\ge\frac{e^{-1/8}g_\alpha T}{4}
    =c_{\rm test}\alpha T.
    \label{eq:binary-confusion-branch}
    \end{align}
\end{itemize}
Finally, the choice of $\alpha$ makes the right-hand sides of \eqref{eq:binary-exploration-branch} and \eqref{eq:binary-confusion-branch} equal:
\[
  \frac{\Delta_EC}{\alpha^2}
  =c_{\rm test}\alpha T
  =\Delta_E^{1/3}c_{\rm test}^{2/3}C^{1/3}T^{2/3}.
\] 
This proves \eqref{eq:binary-lower}. The comparison $V_{\star,j}\asymp C$ follows from Lemma~\ref{lem:binary-fisher}.
\end{proof}

\section{Proof of Theorem~\ref{thm:general}}

We use the environment-specific notation and one-round joint cost defined in Appendix~\ref{app:lower-notation}.

\subsection{Construction of the Worst-Case Instance}
\label{app:general-construction}

Fix $B\ge1$, $\rho\in(0,1]$, $d\ge46$, and $K\ge8$, and define
\begin{equation}
  n=d-1,
  \qquad
  M=\min\left\{
    \left\lfloor\frac K2\right\rfloor,
    \left\lfloor e^{n/32}\right\rfloor
  \right\},
  \qquad
  p=\frac{K-M-1}{K}.
  \label{eq:general-size}
\end{equation}
Here $v_B=\dot\mu(B)$ is the same notation as in \eqref{eq:binary-scalars}. Define the scale
\begin{equation}
  D=\frac{n(1+\rho^2)}{\rho^2v_B}.
  \label{eq:D}
\end{equation}
Since $n\ge45$ and $K\ge8$, we have $M\ge4$ and $3/8\le p<1$. Both terms defining $M$ are at least 4, and $M \le K/2$ implies $p\ge1/2-1/K\ge3/8$. Choose a hypercube packing $u_1,\ldots,u_M\in\{-1/\sqrt n,1/\sqrt n\}^n$ satisfying $u_i^\top u_j\le1/2$ for $i\ne j$, whose existence is guaranteed by Lemma~\ref{lem:packing}. For $0<\alpha\le1/4$, define
\begin{equation}
  g_\alpha=\frac{v_0\alpha}{8},
  \qquad
  b_\alpha=\log\frac{1/2+g_\alpha}{1/2-g_\alpha},
  \qquad
  c_\alpha=\left(
    1+\frac{\alpha^2}{B^2(1+\rho^2)}
  \right)^{-1/2}.
  \label{eq:general-scalars}
\end{equation}
Environment $j\in[M]$ has parameter
\begin{equation*}
  \theta_j=(B\sqrt{1+\rho^2},-\alpha u_j)^\top,
  \qquad
  \Theta=\operatorname{conv}\{\theta_1,\ldots,\theta_{M}\}.
\end{equation*}

The context $X=(L,\xi)$ is uniform on $[n]\times\{-1,1\}$.  There are $M$ high-reward candidate actions $H_1,\ldots,H_{M}$, one safe action $S_0$, and $K-M-1$ informative actions (replicas) $E_1,\ldots,E_{K-M-1}$. Their factual features are
\begin{gather*}
  \phi(x,H_i)=0,\nonumber\\
  \phi(x,S_0)=-\frac{b_\alpha}{B\sqrt{1+\rho^2}}e_1,
  \nonumber\\
  \phi((L,\xi),E_s)
  =\frac{-e_1+\xi\rho e_{L+1}}{\sqrt{1+\rho^2}}.
\end{gather*}
The counterfactual features of $S_0$ and every $E_s$ equal their factual features.  For a high action,
\begin{equation}
  \phi^G(x,H_i)
  =c_\alpha\left(
    \frac{\alpha}{B\sqrt{1+\rho^2}}e_1+(0,u_i)^\top
  \right).
  \label{eq:general-counterfactual}
\end{equation}
The choice of $c_\alpha$ in \eqref{eq:general-scalars} makes $\norm{\phi^G(x,H_i)}_2=1$. Finally choose the fairness threshold as
\begin{equation*}
  \tau=\tau_\alpha=\frac{v_0\alpha}{8}.
\end{equation*}

\begin{lemma}
\label{Lem:general-costs}
For every $B\ge1$, $\rho\in(0,1]$, $d\ge46$, $K\ge8$, $0<\alpha\le1/4$, every environment $j\in[M]$, every context $x=(L,\xi)$, and every informative action $E_s$, the following hold.
\begin{enumerate}
\item The construction satisfies Assumptions~\ref{ass:bounded}, \ref{ass:fair}, and~\ref{ass:coverage}.
\item $H_j$ is the unique fair action among high-reward actions $H_1, \ldots, H_M$. Hence $H_j$ is the fair oracle.
\item Every action $a$ that is not $H_j$ has $\mathrm{cost}_j(x,a)\ge v_0\alpha/8$.
\item Every informative action satisfies $\mathrm{cost}_j(x,E_s)\ge\Delta_E$.
\item $\Delta_j(x,a)=0$ and $q_\Delta(x,a;\theta)=0$ for every safe or informative action $a$ and every $\theta\in\Theta$.
\end{enumerate}
\end{lemma}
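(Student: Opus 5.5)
The plan is to compute every factual and counterfactual logit explicitly at the true parameter $\theta_j=(B\sqrt{1+\rho^2},-\alpha u_j)^\top$ and read the five claims off these values, in the same way as Lemma~\ref{lem:binary-costs}. I would start with the easy structural facts. For $S_0$ and every $E_s$, the counterfactual feature equals the factual feature, so $r_\theta=r_\theta^G$ and $\dot\mu(\phi^\top\theta)\phi=\dot\mu(\phi^{G\top}\theta)\phi^G$ for every $\theta\in\Theta$. This gives $\Delta_\theta=0$ and $q_\Delta=0$, which is item 5. In particular $S_0$ is always fair, which gives Assumption~\ref{ass:fair}.

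For Assumption~\ref{ass:bounded}, I would bound the norms as follows.
\begin{itemize}
\item $\Theta$ is the convex hull of finitely many points, so it is convex and compact, and $\norm{\theta}_2\le B\sqrt{1+\rho^2}+\alpha$ because $\norm{u_j}_2=1$.
\item $\norm{\phi(x,E_s)}_2=1$.
\item $\norm{\phi(x,S_0)}_2\le b_\alpha<1/16$, using the bound on $b_\alpha$ from the proof of Lemma~\ref{lem:binary-costs}.
\item $\norm{\phi^G(x,H_i)}_2=1$ by the choice of $c_\alpha$, since $e_1\perp(0,u_i)$ and $\norm{u_i}_2=1$.
\end{itemize}
For Assumption~\ref{ass:coverage}, I would compute $\E[\phi\phi^\top]$ for an informative action with $L$ uniform on $[n]$ and $\xi$ a Rademacher sign. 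The cross terms vanish because $\E\xi=0$, leaving $\operatorname{diag}\bigl(1,\rho^2/n,\ldots,\rho^2/n\bigr)/(1+\rho^2)$. With $H_i$ contributing zero and $S_0$ adding a positive semidefinite term, this gives $\Sigma\succeq p\rho^2/(n(1+\rho^2))\,I_d$ with $p\ge3/8$. Hence $\sigma_0^2>0$.

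Next come the high actions, which carry items 2 and 3. Each $H_i$ has factual logit $0$ and factual reward $1/2$. Its counterfactual logit under $\theta_j$ is
\[
c_\alpha\bigl(\alpha - \alpha u_i^\top u_j\bigr)=c_\alpha\alpha(1-u_i^\top u_j).
\]
For $i=j$ this is $0$, since $\norm{u_j}_2=1$, so $H_j$ has zero effect and is fair.

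For $i\ne j$, the packing property $u_i^\top u_j\le1/2$ together with $u_i^\top u_j\ge-1$ puts the logit in $[c_\alpha\alpha/2,\,2c_\alpha\alpha]\subset[0,1/2]$. Since $\dot\mu\ge v_0$ on $[0,3/4]$, I get $|\Delta_j(x,H_i)|\ge v_0c_\alpha\alpha/2$. Here $c_\alpha\ge(1+1/16)^{-1/2}>0.9$, because $\alpha\le1/4$ and $B\ge1$. The violation is then at least $v_0\alpha(0.45-1/8)>v_0\alpha/8$, while the regret of $H_i$ relative to $H_j$ is $0$, since both have factual reward $1/2$.

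This step is the only one needing real care. I must make sure the packing bound $u_i^\top u_j\le1/2$ and the factor $c_\alpha$ leave enough margin over $\tau_\alpha=v_0\alpha/8$ to make every $H_i$ with $i\ne j$ strictly unfair. A small margin (positive but less than $v_0\alpha/8$) would still give item 2, but item 3 needs the violation itself to reach $v_0\alpha/8$.

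Since all high actions attain the maximal factual reward $1/2$ and $H_j$ is the unique fair one among them, $H_j$ is the fair oracle with oracle reward $1/2$, provided no other action exceeds reward $1/2$. That proviso is checked next.

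Finally, I would verify items 3 and 4 for the remaining actions. The safe action $S_0$ satisfies $\mu(-b_\alpha)=1/2-g_\alpha$, so its cost is exactly $g_\alpha=v_0\alpha/8$. An informative action $E_s$ at context $(L,\xi)$ has logit
\[
-B-\xi\rho\alpha(u_j)_L/\sqrt{1+\rho^2},
\]
whose perturbation has magnitude at most $\alpha/\sqrt n\le1/4$. The logit is therefore at most $-3/4$, so its reward is below $1/2$, which confirms the oracle claim. Its cost is at least $1/2-\mu(-3/4)>\Delta_E\ge v_0\alpha/8$, exactly as in \eqref{eq:binary-E-cost-0}. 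This completes items 3 and 4.
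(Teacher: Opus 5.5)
Your proposal is correct and follows essentially the same route as the paper: compute the logits explicitly at $\theta_j$, use the packing bound together with $c_\alpha$ to show that each wrong high action violates $\tau_\alpha$ by at least $v_0\alpha/8$, and read coverage off the block-diagonal $\Sigma_E$. The differences are cosmetic: you use $c_\alpha>0.9$ where the paper uses $c_\alpha>3/4$, and you only lower-bound $\lambda_{\min}(\Sigma)$ rather than identifying it exactly, which suffices for Assumption~\ref{ass:coverage}.
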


\begin{proof}
Item 5 is immediate because $\phi^G(x,a)=\phi(x,a)$ for $a\in\{S_0,E_1,\ldots,E_{K-M-1}\}$.

The factual reward of every high-reward action is $1/2$.  At $H_i$ in environment $j$, we have
\begin{equation*}
  z_{ij}
  =\phi^G(x,H_i)^\top\theta_j
  =c_\alpha\alpha(1-u_i^\top u_j).
\end{equation*}
For $i=j$, $z_{jj}=0$, so $H_j$ has zero effect.  If $i\ne j$, the construction gives $1-u_i^\top u_j\ge1/2$.  Moreover,
\begin{equation}
  c_\alpha^2
  \ge\left(1+\frac1{16}\right)^{-1}
  =\frac{16}{17}, 
  \quad \text{hence } c_\alpha>\frac34.
  \label{eq:calpha-lower}
\end{equation}
Since every inner product is at least $-1$, $0\le z_{ij}\le2\alpha\le1/2$.  Thus $\dot\mu(z)\ge v_0$ along $[0,z_{ij}]$. Also for $i\ne j$,
\begin{equation*}
  |\Delta_j(x,H_i)|
  =\mu(z_{ij})-\frac12
  \ge v_0z_{ij}
  \ge\frac{3v_0\alpha}{8}.
\end{equation*}
After subtracting the threshold $v_0\alpha/8$, every wrong high-reward action has violation at least $v_0\alpha/4$. The safe action has reward $1/2-g_\alpha$, zero effect, and hence joint cost $g_\alpha=v_0\alpha/8$.  An informative action has logit
\begin{equation*}
  -B-\frac{\xi\rho\alpha u_j(L)}{\sqrt{1+\rho^2}}
  \le-B+\alpha\le-3/4.
\end{equation*}
Its effect is zero, so its joint cost relative to $H_j$ is at least $1/2-\mu(-3/4)>\Delta_E$. Since $\Delta_E>1/128\ge g_\alpha$, this proves items 2, 3, and 4.

It remains to verify the assumptions. The parameter set is convex and compact, with radius at most $\sqrt2B+1/4$. The bound $b_\alpha\le\log(65/63)<1/16$, proved in Lemma~\ref{lem:binary-costs}, and the normalization in \eqref{eq:general-counterfactual} show that every feature norm is at most one.

Finally, for coverage, average $\phi((L,\xi), E_s)\phi((L,\xi), E_s)^\top$ over $(L,\xi)$:
\begin{equation*}
  \Sigma_E := \E_{(L,\xi)}\bigl[\phi((L,\xi), E_s)\phi((L,\xi), E_s)^\top\bigr]
  =\frac1{1+\rho^2} \begin{pmatrix}
      1 & 0 \\ 0 & \frac{\rho^2}{n}I_n
  \end{pmatrix}.
\end{equation*}
Uniform action sampling selects an informative action with probability $p$. The high-reward actions have zero factual feature, so
\begin{equation}
  \Sigma=\E\bigl[\phi(X,A)\phi(X,A)^\top\bigr]=
  p\Sigma_E
  +\frac1K\phi(S_0)\phi(S_0)^\top,
  \quad
  \sigma_0^2=\frac{p\rho^2}{n(1+\rho^2)}>0.
  \label{eq:general-covariance}
\end{equation}
The $S_0$ term is positive-semidefinite and affects only the first coordinate.  The first diagonal entry of $p\Sigma_E$ is no smaller than the other entries, so $\sigma_0^2$ is the exact minimum eigenvalue.
\end{proof}

Recall from \eqref{eq:D} that
\[
D=\frac{n(1+\rho^2)}{\rho^2v_B}.
\]
We identify $D$ as the scale of $V_\star$ in this construction.

\begin{lemma}
\label{lem:general-fisher}
For every true parameter $j\in[M]$,
\[
  \frac{e^{-1/4}}{20}D
  \le V_{\star,j}\le\frac{2D}{3}.
\]
Moreover, for the global curvature $\kappa$ of the common family,
\[
  \frac{3e^{-1/4}}8\frac{\kappa}{\sigma_0^2}
  \le D\le\frac{\kappa}{\sigma_0^2}.
\]
\end{lemma}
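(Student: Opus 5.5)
The plan follows the proof of Lemma~\ref{lem:binary-fisher}: sandwich the Fisher matrix between multiples of $\Sigma_E$, then evaluate each target direction under $I(\theta)^{-1}$.

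\textbf{Step 1: sandwich the Fisher matrix.} Fix $\theta\in\Theta$. The logit of an informative action is $-B+\xi\rho\,w_L/\sqrt{1+\rho^2}$, where $w$ is the $L$-th coordinate of the last $n$ entries of $\theta$, a convex combination of the vectors $-\alpha u_j$. Hence $|w_L|\le\alpha/\sqrt n\le1/4$, and every such logit lies in $[-B-1/4,-B+1/4]$. Applying \eqref{eq:logistic-comparison} before averaging over $(L,\xi)$ gives
\[
e^{-1/4}v_B\Sigma_E\preceq I_E(\theta)\preceq e^{1/4}v_B\Sigma_E,
\]
with $\Sigma_E$ as in the proof of Lemma~\ref{Lem:general-costs}. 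The high-reward actions have zero factual feature, so
\[
I(\theta)=p\,I_E(\theta)+\tfrac1K\dot\mu(-b_\alpha)\phi(S_0)\phi(S_0)^\top ,
\]
and the $S_0$ term lives only on $e_1$. Dropping that PSD term yields
\[
I(\theta)^{-1}\preceq \frac{e^{1/4}}{p\,v_B}\Sigma_E^{-1},\qquad
\Sigma_E^{-1}=\operatorname{diag}\!\Bigl(1+\rho^2,\ \tfrac{n(1+\rho^2)}{\rho^2}I_n\Bigr).
\]

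\textbf{Step 2: lower bound on $V_{\star,j}$.} Use $q_\Delta(x,H_j;\theta_j)=-\dot\mu(z_{jj})\phi^G(x,H_j)$, where $z_{jj}=0$ so $\dot\mu(z_{jj})=1/4$. Apply the variational inequality \eqref{eq:variational} with the test direction $u=(0,u_j)^\top$. The $S_0$ term does not act on $u$, so the upper sandwich gives
\[
u^\top I_ju\le p\,e^{1/4}v_B\,\frac{\rho^2}{n(1+\rho^2)}.
\]
Since $\|u_j\|_2=1$, we also have $q_\Delta^\top u=-c_\alpha/4$. Therefore
\[
V_{\star,j}\ \ge\ \frac{c_\alpha^2}{16}\cdot\frac{e^{-1/4}D}{p}\ \ge\ \frac{e^{-1/4}D}{17},
\]
using $c_\alpha^2\ge16/17$ from \eqref{eq:calpha-lower} and $p<1$. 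This exceeds $e^{-1/4}D/20$.

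\textbf{Step 3: upper bound on $V_{\star,j}$.} Use Step 1 and $p\ge3/8$, and check each nonzero target in turn.
\begin{itemize}
\item \emph{High-reward actions.} These have $q_r=0$ and $\dot\mu\le1/4$. The quadratic form is
\[
\phi^G(H_i)^\top\Sigma_E^{-1}\phi^G(H_i)=c_\alpha^2\Bigl(\tfrac{\alpha^2}{B^2}+\tfrac{n(1+\rho^2)}{\rho^2}\Bigr),
\]
which is at most $\tfrac{n(1+\rho^2)}{\rho^2}\bigl(1+\tfrac{1}{16n}\bigr)$. This gives at most about $\frac{e^{1/4}}{6}D\cdot1.01<\frac{2D}{3}$.
\item \emph{Safe action $S_0$.} Its quadratic form is $b_\alpha^2/B^2\le1/256$, which is negligible against $D\ge 2n/v_B$.
\item \emph{Informative actions.} The quadratic form is $1+n$ and $\dot\mu\le e^{1/4}v_B$, so the term is at most $e^{3/4}v_B(n+1)/p$. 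Comparing with $D\ge 2n/v_B$ and using $v_B<1/5$ and $n\ge45$, the ratio is a small constant.
\end{itemize}
The effects of $S_0$ and the $E_s$ vanish by item~5 of Lemma~\ref{Lem:general-costs}.

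\textbf{Step 4: curvature.} Every logit in the family has magnitude at most $B+1/4$: the informative logits by Step 1, the $S_0$ logit is tiny, and the counterfactual logits of the $H_i$ are at most $2\alpha$. Hence $\kappa\le e^{1/4}/v_B$. For the reverse bound, at any $\theta_j$ choose $\xi$ so that the perturbation term has the same sign as $-B$; that informative logit then has magnitude at least $B$. Since $\dot\mu$ is even and decreasing on $[0,\infty)$, this gives $\kappa\ge1/v_B$. With \eqref{eq:general-covariance},
\[
D\sigma_0^2/\kappa=\frac{p}{v_B\kappa}\in\bigl[p\,e^{-1/4},\,p\bigr]\subseteq\bigl[\tfrac38e^{-1/4},\,1\bigr].
\]

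\textbf{Main obstacle.} The delicate points are the constant bookkeeping in Step 3 for the $H_i$ term, which must stay below $2/3$ with $p$ only bounded below by $3/8$, and the lower bound $\kappa\ge1/v_B$ in Step 4. The latter needs the sign choice of $\xi$, because the perturbation could otherwise only give $\kappa\ge e^{-1/4}/v_B$.
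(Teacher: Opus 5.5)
Your proposal is correct and follows essentially the same route as the paper: the same $\Sigma_E$ sandwich for $I(\theta)$, a variational lower bound on the $H_j$ effect direction using only the last $n$ coordinates, term-by-term upper bounds via $I(\theta)^{-1}\preceq \frac{e^{1/4}}{p v_B}\Sigma_E^{-1}$, and the same $\kappa$ sandwich (including the sign choice of $\xi$) combined with $D\sigma_0^2=p/v_B$. The only differences are cosmetic, and your constants are slightly sharper. First, you test along the line through $(0,u_j)$ rather than the whole coordinate block. This should be read as optimizing over scalar multiples $s(0,u_j)$, which gives $(q_\Delta^\top u)^2/u^\top I_j u$; plugging in $u=(0,u_j)$ itself gives a negative value. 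It is equivalent to the paper's restriction because the block of $q_\Delta$ is parallel to $u_j$.
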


\begin{proof}
Let $I_E(\theta)$ be the Fisher information from one $E_s$, averaged over $(L,\xi)$. For every $\theta \in \Theta$, the logit of $E_s$ lies within $1/4$ of $-B$, as in the proof of Lemma~\ref{lem:binary-fisher}. Therefore, by \eqref{eq:logistic-comparison},
\begin{equation}
  e^{-1/4}v_B\Sigma_E
  \preceq I_E(\theta)
  \preceq e^{1/4}v_B\Sigma_E.
  \label{eq:general-IE}
\end{equation}
Since every $H_i$ has zero factual feature, we also have
\begin{equation*}
  I(\theta)=pI_E(\theta)
  +\frac1K\dot\mu(-b_\alpha)\phi(S_0)\phi(S_0)^\top.
\end{equation*}

Since $\phi(H_j)=0$, we have $q_r(H_j;\theta_j)=0$.  Moreover, $\phi^G(H_j)^\top\theta_j=0$, so $\dot\mu(\phi^G(H_j)^\top\theta_j)=\dot\mu(0)=1/4$. Consequently,
\[
q_\Delta(H_j;\theta_j)
=
-\frac14\phi^G(H_j),
\]
whose $(e_2,\ldots, e_{n+1})$-coordinate block is
\[
q'=-\frac{c_\alpha}{4}u_j.
\]
The safe term has no $(e_2,\ldots, e_{n+1})$-coordinates.  The upper side of \eqref{eq:general-IE} implies, for every $w\in\R^n$,
\[
  (0,w)^\top I(\theta_j)(0,w)
  \le\frac{pe^{1/4}v_B\rho^2}{n(1+\rho^2)}\norm{w}_2^2.
\]
Restricting \eqref{eq:variational} to vectors $(0,w)$ yields
\begin{align*}
  q_\Delta(H_j;\theta_j)^\top I_j^{-1}q_\Delta(H_j;\theta_j)
  &\ge\frac{n(1+\rho^2)}{pe^{1/4}v_B\rho^2}
       \norm{q'}_2^2\nonumber\\
  &=\frac{e^{-1/4}c_\alpha^2D}{16p}
  \ge\frac{e^{-1/4}D}{20},
\end{align*}
where we used \eqref{eq:calpha-lower} and $p\le1$.

For $V_{\star, j} \le 2D/3$, the lower side of \eqref{eq:general-IE} implies
\begin{equation*}
  I(\theta)^{-1}
  \preceq\frac{e^{1/4}}{pv_B}\Sigma_E^{-1}.
\end{equation*}
Again we check every nonzero term in the definition of $V_\star$.
\begin{itemize}
\item $H_i$ has zero $q_r$. Direct calculation gives
\[
  \phi^G(H_i)^\top\Sigma_E^{-1}\phi^G(H_i)
  =c_\alpha^2\left(
    \frac{\alpha^2}{B^2}+\frac{n(1+\rho^2)}{\rho^2}
  \right)
  \le2v_BD.
\]
The last inequality uses $\alpha^2/B^2\le1/16$ and $v_BD=n(1+\rho^2)/\rho^2\ge2n$. Since the $\dot \mu$ coefficient in $q_\Delta$ is at most $1/4$,
\[
  \norm[I(\theta)^{-1}]{q_\Delta(H_i;\theta)}^2
  \le\frac{e^{1/4}D}{8p}<\frac{2D}{3}.
\]
\item $S_0$ has zero effect and $\phi(S_0)^\top\Sigma_E^{-1}\phi(S_0)=b_\alpha^2/B^2\le1/16$. Thus
\[
  \norm[I(\theta)^{-1}]{q_r(S_0;\theta)}^2
  \le\frac{e^{1/4}}{256pv_B}<\frac{2D}{3}.
\]
\item $E_s$ has zero effect, $\phi(x,E_s)^\top\Sigma_E^{-1}\phi(x,E_s)=d$, and derivative at most $e^{1/4}v_B$.  Hence
\[
  \norm[I(\theta)^{-1}]{q_r(x,E_s;\theta)}^2
  \le\frac{e^{3/4}v_Bd}{p}<\frac{2D}{3}.
\]
\end{itemize}
These comparisons follow from $p\ge3/8$, $v_B<1/5$, $d/n\le46/45$, and $\rho^2/(1+\rho^2)\le1/2$. This proves $\frac{e^{-1/4}}{20}D\le V_{\star,j}\le\frac{2D}{3}$.

To bound the curvature, use \eqref{eq:general-covariance} to write
\begin{equation*}
  D=\frac{p}{v_B\sigma_0^2}.
\end{equation*}
Over $\Theta$, every factual and counterfactual logit has absolute value at most $B+1/4$. For an action $E_s$, one of the two context signs gives a logit of magnitude at least $B$. Hence $v_B^{-1}\le\kappa\le e^{1/4}v_B^{-1}$.  Combining this with $p\in[3/8,1]$ gives
\[
  \frac{D}{\kappa/\sigma_0^2}
  =\frac{p}{\kappa v_B}
  \in\left[\frac{3e^{-1/4}}8,1\right],
\]
equivalently, $\frac{3e^{-1/4}}8\frac{\kappa}{\sigma_0^2}\le D\le\frac{\kappa}{\sigma_0^2}$.
\end{proof}

\subsection{Main Proof of Theorem~\ref{thm:general}}

We first bound the KL divergence between observations in different environments.

\begin{lemma} For every context $x=(L,\xi)$, every pair
$j,k\in[M]$, and every informative replica $E_s$,
\begin{equation*}
  \operatorname{KL}\!\left(
    P_j(\,\cdot\mid x,E_s)
    \,\middle\|\,
    P_k(\,\cdot\mid x,E_s)
  \right)
  \le\frac{2e^{1/4}\alpha^2}{D}.
\end{equation*}
The factual reward distributions of all non-informative actions are identical in all environments.
\label{lem:general-kl}
\end{lemma}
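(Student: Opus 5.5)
Bounding the KL divergence from action $E_s$ amounts to comparing two logits. The last claim is immediate: $H_i$ has zero factual feature, so its reward is $\Bern(1/2)$ in every environment. $S_0$ has factual feature proportional to $e_1$, and the first coordinate of every $\theta_j$ equals $B\sqrt{1+\rho^2}$. Hence $S_0$ has reward $\Bern(1/2-g_\alpha)$ in every environment.

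For an informative replica at context $x=(L,\xi)$, the factual logit in environment $j$ is
\[
  z_j=-B-\frac{\xi\rho\alpha\,u_j(L)}{\sqrt{1+\rho^2}},
\]
where $u_j(L)\in\{\pm1/\sqrt n\}$ is the $L$-th coordinate of the packing vector. Therefore
\[
  |z_j-z_k|\le\frac{2\rho\alpha}{\sqrt{n(1+\rho^2)}},
  \qquad
  (z_j-z_k)^2\le\frac{4\rho^2\alpha^2}{n(1+\rho^2)}.
\]
Both logits, and the whole segment between them, lie in $[-B-1/4,-B+1/4]$, because $\alpha\le1/4$ and $\rho/\sqrt{1+\rho^2}\le1$. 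By \eqref{eq:logistic-comparison}, $\dot\mu\le e^{1/4}v_B$ on this segment.

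Next I apply Lemma~\ref{lem:one-step-kl}, in the form already used in Lemma~\ref{lem:binary-kl}: the Bernoulli KL is at most $\tfrac12\sup\dot\mu\cdot(z_j-z_k)^2$. This gives
\[
  \operatorname{KL}\le\frac{e^{1/4}v_B}{2}\cdot\frac{4\rho^2\alpha^2}{n(1+\rho^2)}
  =\frac{2e^{1/4}\alpha^2\,v_B\rho^2}{n(1+\rho^2)}
  =\frac{2e^{1/4}\alpha^2}{D},
\]
using the definition $D=n(1+\rho^2)/(\rho^2v_B)$ from \eqref{eq:D}.

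The only delicate point is the constant in Lemma~\ref{lem:one-step-kl}. In the binary case the same lemma produced $e^{1/4}v_B(\Delta z)^2/2$, which is exactly the form used here, so the constant carries over. Everything else is arithmetic.
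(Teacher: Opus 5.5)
Your proposal is correct and follows the paper's proof essentially step for step. You compute the $E_s$ logit difference, bound its square by $4\rho^2\alpha^2/[n(1+\rho^2)]$, keep the segment inside $[-B-1/4,-B+1/4]$, and apply Lemma~\ref{lem:one-step-kl} with $\sup\dot\mu\le e^{1/4}v_B$; your explicit check that $H_i$ and $S_0$ have environment-independent reward laws just spells out what the paper gets by citing Lemma~\ref{Lem:general-costs}.
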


\begin{proof} 

Conditionally on $x=(L,\xi)$, the logits of $E_s$ in environments $j$ and $k$ differ by
\[
  \frac{\xi\rho\alpha}{\sqrt{1+\rho^2}}
  \bigl(u_k(L)-u_j(L)\bigr).
\]
Its squared magnitude is at most $4\rho^2\alpha^2/[n(1+\rho^2)]$.  The interval between the logits remains between $-B-1/4$ and $-B+1/4$. Applying Lemma~\ref{lem:one-step-kl} gives
\[
  \operatorname{KL}(P_j\|P_k)
  \le\frac12e^{1/4}v_B
       \frac{4\rho^2\alpha^2}{n(1+\rho^2)}
  =\frac{2e^{1/4}\alpha^2}{D}.
\]
The remaining part of the claim follows from Lemma~\ref{Lem:general-costs}.
\end{proof}

Define the constants used in the proof by
\[
  a_{\rm gen}=\frac{\Delta_ED\log M}{8e^{1/4}},
  \quad
  c_{\rm gen}=\frac{v_0}{32},
  \quad
  \widetilde c_2=\left(\frac{\Delta_E}{8e^{1/4}}\right)^{1/3}
      \left(\frac{v_0}{32}\right)^{2/3},
  \quad
  C_2=\frac{256\Delta_E}{e^{1/4}v_0}.
\]

\begin{proof}[Proof of Theorem~\ref{thm:general}]
For a given $T$, choose
\begin{equation*}
  \alpha=\left(\frac{a_{\rm gen}}{c_{\rm gen}T}\right)^{1/3}.
\end{equation*}
By \eqref{eq:general-size}, $\log M\le\min\{\log K,d\}$. The horizon condition in Theorem~\ref{thm:general} therefore gives $T\ge C_2D\log M$, so $\alpha^3\le1/64$ and $\alpha\le1/4$.  Thus the construction is valid. Fix an arbitrary adaptive, possibly randomized policy $\pi$.  Let $N_E(T)$ count the selections of informative actions, and define their average over environments by
\[
  \overline N_E(T)
  =\frac1{M}
   \sum_{j=1}^{M}\E_jN_E(T).
\]

We consider two cases according to the expected number of informative actions:
\begin{itemize}
\item \textbf{Case 1.} If
    \[
    \overline N_E(T)
    \ge\frac{D\log M}{8e^{1/4}\alpha^2},
    \]
since every informative selection costs at least $\Delta_E$, we get
    \[
    \max_j\E_j[R_T+V_T]
    \ge\frac{\Delta_ED\log M}         {8e^{1/4}\alpha^2}
    =\frac{a_{\rm gen}}{\alpha^2}.
    \]
\item \textbf{Case 2.} Now suppose
    \[
    \overline N_E(T)
    <\frac{D\log M}{8e^{1/4}\alpha^2}.
    \]
Let the true environment index $Z$ be uniform on $[M]$. Lemmas~\ref{lem:adaptive-kl} and  \ref{lem:general-kl} give
    \begin{align}
    I(Z;\cH_t)
    &\le\frac1{M^2}
       \sum_{j,k=1}^{M}
       \operatorname{KL}(P_j^t\|P_k^t)\nonumber\\
    &\le\frac{2e^{1/4}\alpha^2}{D}
       \frac1{M}
       \sum_{j=1}^{M}\E_jN_E(t-1)\nonumber\\
    &\le\frac{2e^{1/4}\alpha^2}{D}\overline N_E(T)
    <\frac14\log M.
    \label{eq:general-mutual-info}
    \end{align}
Infer an environment index from the action by setting
    \[
    \widehat Z_t=i\quad\text{if }A_t=H_i,
    \quad
    \widehat Z_t=1\;\text{if }A_t\text{ is not a high-reward action}.
    \]
By the data-processing statement in Lemma~\ref{lem:info-tools}, $I(Z;\widehat Z_t)\le I(Z;\cH_t)$. Since $M\ge4$, Fano's inequality and \eqref{eq:general-mutual-info} imply
    \[
    \Pp(\widehat Z_t\ne Z)
    \ge1-\frac{(\log M)/4+\log2}{\log M}
    \ge\frac14.
    \]
The event $\{\widehat Z_t\ne Z\}$ is contained in $\{A_t\ne H_Z\}$.  Hence
\[
  \frac1{M}
  \sum_{j=1}^{M}\Pp_j(A_t\ne H_j)
  \ge\frac14.
\]
Every action in this event has cost at least $v_0\alpha/8$. Averaging over environments and summing over $t$ gives the following bound, since the maximum is at least the average:
\[
  \max_j\E_j[R_T+V_T]
  \ge\frac{v_0\alpha T}{32}
  =c_{\rm gen}\alpha T.
\]
\end{itemize}

The choice of $\alpha$ balances the two cases:
\[
  \frac{a_{\rm gen}}{\alpha^2}
  =c_{\rm gen}\alpha T
  =a_{\rm gen}^{1/3}c_{\rm gen}^{2/3}T^{2/3}
  =\widetilde c_2(D\log M)^{1/3}T^{2/3}.
\]
Thus every policy has worst-case expected joint loss at least $\widetilde c_2(D\log M)^{1/3}T^{2/3}$.

It remains to express the bound in terms of $K$ and $d$. If $M=\lfloor K/2\rfloor$, then $M\ge K/3$ because $K\ge8$.  Otherwise, $n\ge45$ implies
\[
  M=\lfloor e^{n/32}\rfloor\ge e^{n/64}.
\]
Therefore
\[
  \log M
  \ge\min\{\log(K/3),n/64\}
  \ge\frac1{128}\min\{\log K,d\},
\]
where the last step uses $n=d-1\ge d/2$ and $\log(K/3)\ge(\log K)/3$ for $K\ge8$. Taking $c_2=128^{-1/3}\widetilde c_2$ proves \eqref{eq:general-lower} for the family indexed by $\mathcal J=[M]$. The comparison $V_{\star,j}\asymp D$ follows from Lemma~\ref{lem:general-fisher}.
\end{proof}

\section{Proof of Theorems~\ref{thm:fixed} and \ref{thm:adaptive}}

We first establish Lemma~\ref{lem:screening}.

\begin{proof}[Proof of Lemma~\ref{lem:screening}]
By definition, the true effect of $a^\star(x)$ belongs to $[-\tau,\tau]$, and also belongs to its effect interval by assumed validity. Thus $a^\star(x)$ is retained. Also by validity and reward optimism,
\[
r_{\theta^\star}(x,a^\star(x))
\le\overline r(x,a^\star(x))
\le\overline r(x,A),
\qquad
r_{\theta^\star}(x,A)\ge\underline r(x,A),
\]
which proves the regret bound.

For the violation bound, write the selected effect interval as $[l,u]$ and set $\Delta_A=\Delta_{\theta^\star}(x,A)$. The interval contains $\Delta_A$ and intersects $[-\tau,\tau]$. If $\Delta_A>\tau$, then $u\ge\Delta_A$ and $l\le\tau$, hence $\Delta_A-\tau\le u-l$. If $\Delta_A<-\tau$, then $l\le\Delta_A$ and $u\ge-\tau$, hence $-\Delta_A-\tau\le u-l$. If $|\Delta_A|\le\tau$, the LHS is zero.
\end{proof}

We first prove both theorems using the valid intervals and width bound in Lemma~\ref{lem:master-interface}, then prove that lemma. We work on the probability-one event that $\cA_\tau(X_t)\ne \varnothing$ for all $t\le T$ unless otherwise stated.

\begin{proof}[Proof of Theorem~\ref{thm:fixed} assuming Lemma~\ref{lem:master-interface}]
Work on the event that the confidence intervals are valid. Each of the first $m$ rounds contributes at most one to each of regret and violation. Lemma~\ref{lem:screening} bounds every later contribution by the interval width. Together with Lemma~\ref{lem:master-interface}, this gives
\[
\max\{R_T,V_T\}
\le
m+1024(T-m)\sqrt{\frac{V_\star\ell}{m}}
\le
m+1024T\sqrt{\frac{V_\star\ell}{m}}
\]
as desired.

For the specific choice of $m$, put $a=1024^2V_\star\ell T^2$.  If that choice of $m$ has $m<T$, then $m\le n_0+a^{1/3}+1$ and $m\ge a^{1/3}$.  The second term in \eqref{eq:fixed-bound} equals $\sqrt{a/m}\le a^{1/3}$, so
\[
\max\{R_T,V_T\}
\le n_0+1+2a^{1/3}.
\]
If $m=T$, then $T\le n_0+a^{1/3}+1$, while $R_T,V_T\leq T$ implies the result.
\end{proof}

The following lemma bounds the sample size at which further exploration stops.

\begin{lemma}
\label{lem:no-trigger}
Define $n_{\mathrm{crit}}=\max\left\{n_0,(1024^2V_\star\ell T^2)^{1/3} \right\}$. On the event that Lemma~\ref{lem:master-interface} guarantees, a checkpoint with $n_j\ge n_{\mathrm{crit}}$ cannot trigger subsequent exploration.
\end{lemma}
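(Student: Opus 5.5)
The argument is short: combine the width bound of Lemma~\ref{lem:master-interface} with the algebraic meaning of $n_{\mathrm{crit}}$. Work on the event of Lemma~\ref{lem:master-interface}. Fix a checkpoint $j$ whose estimate $(\wh\theta_j,\wh H_j)$ was computed from $n_j\ge n_{\mathrm{crit}}$ exploration observations. In Algorithm~\ref{alg:adaptive-ete}, further exploration at checkpoint $j$ is triggered only at a round $t$ where the test $W_j(X_t)\le n_j/T$ fails. It therefore suffices to show that this test passes at every round $t$ at which checkpoint $j$'s estimate is used.

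First, since $n_j\ge n_{\mathrm{crit}}\ge n_0$, the fit at checkpoint $j$ is one of the fits covered by Lemma~\ref{lem:master-interface}. On its event, for every round $t$ at which this estimate computes intervals at $X_t$, we have simultaneously
\[
W_j(X_t)\le1024\sqrt{\frac{V_\star\ell}{n_j}}.
\]
Second, $n_j\ge(1024^2V_\star\ell T^2)^{1/3}$ gives $n_j^3\ge1024^2V_\star\ell T^2$. Rearranging gives $1024^2V_\star\ell/n_j\le n_j^2/T^2$, and taking square roots gives
\[
1024\sqrt{\frac{V_\star\ell}{n_j}}\le\frac{n_j}{T}.
\]
Chaining the two displays yields $W_j(X_t)\le n_j/T$ at every such round. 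Hence the algorithm always takes the exploitation branch, the index $j$ is never incremented, and no round after this checkpoint is an exploration round. The degenerate case $V_\star=0$ is covered by the same inequality.

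The only point needing care is that the width bound must hold uniformly over all future rounds, including rounds whose contexts are revealed after the fit. This is exactly the simultaneity clause of Lemma~\ref{lem:master-interface}: the event covers all fits and all rounds at which they are queried. The scheduling of exploration batches before their contexts are seen ensures that checkpoint $j$'s fit belongs to that family. No other obstacle arises.
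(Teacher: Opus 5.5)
Your proposal is correct and matches the paper's proof: the paper also rearranges $n_j\ge(1024^2V_\star\ell T^2)^{1/3}$ into $1024\sqrt{V_\star\ell/n_j}\le n_j/T$ and chains it with the width bound \eqref{eq:master-width} of Lemma~\ref{lem:master-interface}, so the test $W_j(X_t)\le n_j/T$ never fails. Your remarks on simultaneity over rounds agree with how that lemma is stated, though the point about scheduling batches before their contexts belongs to the proof of Lemma~\ref{lem:master-interface} rather than to this one.
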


\begin{proof}
The definition of $n_{\mathrm{crit}}$ gives $1024\sqrt{V_\star\ell/n_j}\le n_j/T$.  Combine this with \eqref{eq:master-width}.
\end{proof}

\begin{proof}[Proof of Theorem~\ref{thm:adaptive} assuming Lemma~\ref{lem:master-interface}]
If $n_0\ge T$, the algorithm explores uniformly throughout, and the result follows from $R_T,V_T\le T$.  Suppose $n_0<T$ and work on the probability $1-\delta$ event of Lemma~\ref{lem:master-interface} and the probability-one event that $\cA_\tau(X_t)\ne \varnothing$ for all $t\le T$.

If the width test never fails, set $\bar n=n_0$. Otherwise, let $j$ be the active checkpoint at the last failed test and set $\bar n=n_{j+1}$, the target sample size of the ensuing batch, even if the horizon truncates that batch. By Lemma~\ref{lem:no-trigger}, every failed test occurs at $n_j<n_{\mathrm{crit}}$, and $n_{j+1}\le2n_j$. Hence
\begin{equation*}
\bar n\le2n_{\mathrm{crit}}.
\end{equation*}
Partition the rounds into three types.

\begin{itemize}
\item \textbf{Exploration rounds.} There are at most $\bar n$ exploration rounds. Their total contribution to each of regret and violation is at most $\bar n$.
\item \textbf{Trigger rounds.} Each checkpoint generates at most one discarded trigger round. There are at most $J+1$ checkpoints, so their contribution is at most $J+1$.
\item \textbf{Exploitation rounds.} Let $\mathcal T_j$ be the set of exploitation rounds on which checkpoint $j$ is active.  Lemma~\ref{lem:screening} and the exploitation condition $W_j(X_t)\le n_j/T$ in Algorithm~\ref{alg:adaptive-ete} imply \[\sum_{t\in\mathcal T_j}\bigl[r_{\theta^\star}(X_t,a^\star(X_t))-r_{\theta^\star}(X_t,A_t)\bigr]\le|\mathcal T_j|\frac{n_j}{T}, \] and the same bound holds for the violations on $\mathcal T_j$. Every active checkpoint satisfies $n_j\le\bar n$, so
    \[
    \sum_j|\mathcal T_j|\frac{n_j}{T}
    \le\frac{\bar n}{T}\sum_j|\mathcal T_j|
    \le\bar n.
    \]
Thus exploitation contributes at most $\bar n$, including when there are no exploitation rounds.
\end{itemize}
Adding the three contributions and using $\bar n\le2n_{\mathrm{crit}}$ gives
\[
\max\{R_T,V_T\}
\le2\bar n+J+1
\le4n_{\mathrm{crit}}+J+1.
\]
Taking the minimum with $R_T,V_T\leq T$ proves the bound on $\max \{R_T, V_T\}$.

Finally, \eqref{eq:gamma-corrected} implies $n_0=\wt O(\kappa d/\sigma_0^2)$, while the definition of $\ell$ gives $\ell=O(\log(T/\delta)+\min\{\log K,d\})$. For more than one action, $\min\{\log K,d\}$ is bounded below by a positive universal constant, so substituting into $n_{\mathrm{crit}}$ and absorbing logarithmic factors gives the stated rate. With a single action, Assumption~\ref{ass:fair} gives $R_T=V_T=0$, so the rate holds directly.
\end{proof}

\subsection{Proof of Lemma~\ref{lem:master-interface}}

We index the adaptive algorithm's fits by $j$, with sample sizes $n_j$ as in Section~\ref{sec:adaptive-ete}. For fixed-length ETE, the same argument applies to the single fit of size $m$, using $m$ in place of $n_j$.

To prove Lemma~\ref{lem:master-interface}, we specify $\Gamma_{\rm corr}$ and establish concentration and existence of the MLE at every checkpoint. Recall that we set $\delta\in(0,1)$ and defined
\[
J=\lceil\log_2T\rceil,
\quad
M_{\rm dir}=\min\{KT,5^d\},
\quad
\ell=
\log\frac{32(J+1)(T+2)M_{\rm dir}}{\delta}.
\]
Now define
\begin{equation}
\Gamma_{\mathrm{corr}}
=
\max\left\{
\begin{aligned}
&128\left[d\log 6+
\log\frac{16(J+1)(T+2)M_{\rm dir}}{\delta}\right],\\
&533\ell,\\
&2\log\frac{8d(J+1)}{\delta},\\
&2d\log(4eT)+2\log\frac{16(J+1)}{\delta}
\end{aligned}
\right\}
\label{eq:gamma-corrected}
\end{equation}

and $n_0 = \lceil \kappa \Gamma_{\rm corr} /\sigma_0^2 \rceil$.

The following result from \cite{jun2021improved} is stated with larger numerical constants.

\begin{lemma}[\citet{jun2021improved}]
\label{lem:jun}
Let $x_1,\ldots,x_n\in\R^d$ satisfy $\norm{x_s}_2\le1$.  Let the responses be independent with $R_s\sim\operatorname{Bernoulli}(\mu(x_s^\top\theta^\star))$.  Define
\[
H(\theta)=\sum_{s=1}^n
\dot\mu(x_s^\top\theta)x_sx_s^\top
\]
and assume $H(\theta^\star)\succ0$.  Let $\wh\theta$ denote the logistic MLE whenever it exists as a finite vector, with $\wh H=H(\wh\theta)$. Let $n_{\mathrm{eff}}$ be the number of distinct design vectors, fix $z\in\R^d$ with $\norm{z}_2\le1$, and let $\varepsilon\le e^{-1}$.  If
\begin{equation}
\max_{s\le n}\norm[H(\theta^\star)^{-1}]{x_s}^2
\le
\frac{1}{64\left[
d\log 6+\log((2+n_{\mathrm{eff}})/\varepsilon)
\right]},
\label{eq:jun-leverage}
\end{equation}
then, conditional on the fixed design, there is a bad event $B_\varepsilon$ of probability at most $\varepsilon$ such that, on $\{\wh\theta\text{ is finite}\}\setminus B_\varepsilon$,
\begin{align}
|z^\top(\wh\theta-\theta^\star)|
&\le
3.5\sqrt{\log\frac{2(2+n_{\mathrm{eff}})}{\varepsilon}}
\,\norm[H(\theta^\star)^{-1}]{z},
\label{eq:jun-true}\\
\frac1{\sqrt{2.2}}\norm[H(\theta^\star)^{-1}]{u}
&\le
\norm[\wh H^{-1}]{u}
\le
\sqrt{2.2}\norm[H(\theta^\star)^{-1}]{u}
\quad\text{for every }u\in\R^d,
\label{eq:jun-fisher-comparison}
\end{align}
and, in particular,
\begin{equation}
|z^\top(\wh\theta-\theta^\star)|
\le
5.2\sqrt{\log\frac{2(2+n_{\mathrm{eff}})}{\varepsilon}}
\,\norm[\wh H^{-1}]{z}.
\label{eq:jun-empirical}
\end{equation}
\end{lemma}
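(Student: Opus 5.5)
The approach is to import the fixed-design guarantee of \citet{jun2021improved} essentially as a black box, with its numerical constants loosened. Everything beyond \eqref{eq:jun-true} is then derived from the generalized self-concordance of the logistic link, in the form of the comparison \eqref{eq:logistic-comparison}: for logits $a,b$, $e^{-|a-b|}\le\dot\mu(a)/\dot\mu(b)\le e^{|a-b|}$.

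\textbf{Step 1: deviation bound \eqref{eq:jun-true}.} Under the leverage condition \eqref{eq:jun-leverage}, which is the ``burn-in'' condition of \citet{jun2021improved}, their result controls $|w^\top(\wh\theta-\theta^\star)|$ by a constant times $\sqrt{\log(1/\varepsilon')}\,\norm[H(\theta^\star)^{-1}]{w}$, with probability at least $1-\varepsilon'$ on $\{\wh\theta\text{ finite}\}$, for each fixed $w$. I would apply this to the $n_{\mathrm{eff}}$ distinct design vectors together with $z$, and possibly $-z$, giving at most $2+n_{\mathrm{eff}}$ fixed vectors. Each gets confidence level $\varepsilon'=\varepsilon/(2+n_{\mathrm{eff}})$, and a union bound defines $B_\varepsilon$. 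This yields \eqref{eq:jun-true} for $z$, where the constant $3.5$ dominates theirs and the extra factor $2$ inside the logarithm absorbs the two-sided form. It also yields the same inequality for every design vector $x_s$.

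\textbf{Step 2: Fisher comparison \eqref{eq:jun-fisher-comparison}.} Write $L=\log((2+n_{\mathrm{eff}})/\varepsilon)\ge1$; this holds because $\varepsilon\le e^{-1}$. Combining Step~1 for each $x_s$ with \eqref{eq:jun-leverage} gives
\[
|x_s^\top(\wh\theta-\theta^\star)|\le \frac{3.5}{8}\sqrt{\frac{\log 2+L}{d\log6+L}}\le \frac{3.5}{8}\sqrt{1+\log 2}<0.6 .
\]
The self-concordance comparison then gives $e^{-0.6}\dot\mu(x_s^\top\theta^\star)\le\dot\mu(x_s^\top\wh\theta)\le e^{0.6}\dot\mu(x_s^\top\theta^\star)$ for every $s$. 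Summing the rank-one terms yields $e^{-0.6}H(\theta^\star)\preceq\wh H\preceq e^{0.6}H(\theta^\star)$. Inverting these Loewner inequalities and using $e^{0.6}<2.2$ gives \eqref{eq:jun-fisher-comparison}. In particular, $\wh H$ is invertible, since $H(\theta^\star)\succ0$.

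\textbf{Step 3: empirical bound \eqref{eq:jun-empirical}.} Substitute the left inequality of \eqref{eq:jun-fisher-comparison}, applied with $u=z$, into \eqref{eq:jun-true}. This gives \eqref{eq:jun-empirical}, because $3.5\sqrt{2.2}\approx5.19\le5.2$.

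\textbf{Main obstacle.} I expect the real difficulty to lie in Step~1, namely matching the exact hypotheses and constants of the cited theorem. Its burn-in condition and its confidence width are stated with particular absolute constants and a particular covering term ($d\log 6$ from a $1/2$-net). I would first verify that \eqref{eq:jun-leverage}, with the factor $64$, implies their condition, and that the union-bound bookkeeping produces exactly $2+n_{\mathrm{eff}}$ inside the logarithm. If their constant exceeds $3.5$, I would rerun their argument: a second-order expansion of the log-likelihood on a $1/2$-net, controlled by self-concordance. In that argument the leverage condition keeps every design logit within a constant of its true value, so the loosened constants still close.
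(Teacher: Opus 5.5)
\textbf{Assessment.} Your overall plan matches the paper, which gives no argument of its own and attributes all three conclusions to \citet{jun2021improved}. Your Steps~2 and~3 are correct. They make explicit how \eqref{eq:jun-fisher-comparison} and \eqref{eq:jun-empirical} follow, on one event, from simultaneous control of $x_s^\top(\wh\theta-\theta^\star)$ at all design points via \eqref{eq:logistic-comparison}. In fact, since $d\log 6>\log 2$, the ratio in Step~2 is at most $1$, so you even get $|x_s^\top(\wh\theta-\theta^\star)|\le 3.5/8$.

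\textbf{The gap is in Step~1.} You posit a single-direction bound of order $\sqrt{\log(1/\varepsilon')}\,\norm[H(\theta^\star)^{-1}]{w}$ and then union-bound it over the design vectors and $z$. That is not the form of Jun et al.'s result. Their fixed-direction bound already carries the factor $\log(2(2+t_{\mathrm{eff}})/\delta)$, and the count $2+t_{\mathrm{eff}}$ reflects a union bound inside their proof. The reason is the structure of the argument:
\begin{itemize}
\item Write $\wh\theta-\theta^\star=G^{-1}S$, with $S=\sum_s(R_s-\mu(x_s^\top\theta^\star))x_s$ and $G$ the integrated Hessian.
\item The leading term $z^\top H(\theta^\star)^{-1}S$ is handled by a one-direction Bernstein bound.
\item The remainder $z^\top(G^{-1}-H(\theta^\star)^{-1})S$ is kept free of a $\sqrt d$ factor by combining $\max_s|x_s^\top H(\theta^\star)^{-1}S|$ with a global net bound on $\norm[H(\theta^\star)]{\wh\theta-\theta^\star}$.
\end{itemize}
The linearized bounds at the $n_{\mathrm{eff}}$ distinct design vectors, the bound at $z$, and the global net event are what that count tallies. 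Consequently, applying their theorem as stated to the $n_{\mathrm{eff}}+1$ directions at level $\varepsilon/(2+n_{\mathrm{eff}})$ double counts. It produces $\log(2(2+n_{\mathrm{eff}})^2/\varepsilon)$ rather than the factor stated in \eqref{eq:jun-true} and \eqref{eq:jun-empirical}.

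\textbf{The repair.} Open the black box instead of union-bounding on top of it. Let $B_\varepsilon$ be the failure event built in their proof, namely the global net bound, the linearized bounds at every distinct design vector, and the bound at $z$. On its complement, the argument that yields \eqref{eq:jun-true} for $z$ applies verbatim to every $x_s$. Your Steps~2 and~3 then deliver \eqref{eq:jun-fisher-comparison} and \eqref{eq:jun-empirical} on that same event, as the lemma requires.

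\textbf{Your fallback also misses this second stage.} You proposed to rerun their argument via a $1/2$-net and self-concordance. The net together with \eqref{eq:jun-leverage} only localizes each design logit to within a constant. That gives $|z^\top(\wh\theta-\theta^\star)|\lesssim\sqrt{d\log 6+\log(1/\varepsilon)}\,\norm[H(\theta^\star)^{-1}]{z}$, not the dimension-free width in \eqref{eq:jun-true}. The per-design-point control of the linearized terms is the missing ingredient.
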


Although the adaptive ETE requests additional exploration in response to its current confidence widths, the observations kept for fitting are not selected using their own contexts. Each batch is scheduled before those contexts are observed, and the triggering round is discarded. They can therefore be viewed as consecutive draws from a single sequence of iid uniform-explorations. The following lemma formalizes this observation.

\begin{lemma}
\label{lem:indep}
Let $Q$ denote the law of one uniform factual observation $Z=(X,A,R)$, where $X\sim P_X$, $A\sim\operatorname{Unif}(\mathcal A)$ independently of $X$, and
\[
 R\mid X,A\sim
 \operatorname{Bernoulli}\!\left(r_{\theta^\star}(X,A)\right).
\]
The adaptive ETE can be coupled with an infinite iid sequence $(Z_s^E)_{s\ge1}$ drawn from $Q$ such that, whenever checkpoint $j$ is reached, the observations used for fitting are exactly $Z_1^E,\ldots,Z_{n_j}^E$.
\end{lemma}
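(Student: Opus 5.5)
We build the coupling explicitly on a product probability space. On it we place three independent objects: an iid sequence $(Z_s^E)_{s\ge1}$ with law $Q$; an iid sequence $(\widetilde X_t)_{t\ge1}$ from $P_X$, each paired with an independent uniform action and Bernoulli reward, to be used for the non-fitting rounds (exploitation and trigger rounds); and whatever internal randomness the tie-breaking rule needs, which here is none since ties are broken deterministically.

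The algorithm runs as follows. It keeps a pointer $k$, initialised to $0$, into the exploration sequence and a round counter $t$. In the initial phase, rounds $1,\ldots,n_0$ consume $Z_1^E,\ldots,Z_{n_0}^E$, and we set $k=n_0$. In the while loop, each non-scheduled round $t$ uses a fresh context $\widetilde X_t$. If that round exploits, its reward is generated from $\widetilde X_t$ and the chosen action, with independent randomness. If it triggers, its uniform action and reward also come from the auxiliary sequence. When a batch of size $b$ is scheduled, its rounds consume $Z_{k+1}^E,\ldots,Z_{k+b}^E$, and we then set $k\gets k+b$.

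The first step is to check that this coupled process has the same law as the original algorithm. We argue by induction over rounds. Conditional on the past history, the decision whether round $t$ is a scheduled exploration round is a function of the past history only. This is exactly the point of scheduling batches before their contexts are seen and of discarding trigger rounds. So the next observation can be drawn from a fresh independent source. For scheduled rounds that source is $Z_{k+1}^E$. Since $k$ is a stopping-time-type index determined by the past, $Z_{k+1}^E$ is independent of the past and has law $Q$. The reason is that the past depends only on $Z_1^E,\ldots,Z_k^E$ and on the auxiliary variables. For non-scheduled rounds the source is the auxiliary sequence, whose context is $P_X$-distributed and independent of the past. In either case the conditional law of round $t$ given the history matches the algorithm's specification: $X_t\sim P_X$ independent of the past, and the action and reward drawn as the algorithm prescribes. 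Chaining these conditionals identifies the joint law of the whole trajectory.

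The second step is to verify the indexing. By construction $\mathcal D$ is appended to only in scheduled rounds, and each such round consumes the next unused $Z^E$ in order. The first checkpoint uses $Z_1^E,\ldots,Z_{n_0}^E$. The counter $j$ is incremented only when a complete batch of size $n_{j+1}-n_j$ has been appended. So by induction, whenever checkpoint $j$ is reached, $\mathcal D=(Z_1^E,\ldots,Z_{n_j}^E)$. A truncated final batch never reaches a new checkpoint, so it does not affect the claim.

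The main subtlety is the independence assertion in the first step. The algorithm's data-dependent choice of when to consume the $Z^E$ must not bias their distribution. I would formalise this with the filtration $\mathcal F_t$ generated by the history through round $t$. The pointer at round $t$, $k_t$, is $\mathcal F_{t-1}$-measurable. The sigma-field $\mathcal F_{t-1}$ is contained in $\sigma(Z_1^E,\ldots,Z_{k_t}^E,\text{auxiliary variables up to }t-1)$. Then, for each fixed value $k$, the event $\{k_t=k\}$ lies in a sigma-field independent of $Z_{k+1}^E$, by independence of the iid blocks. Summing over $k$ gives that $Z_{k_t+1}^E$ has law $Q$ and is independent of $\mathcal F_{t-1}$.
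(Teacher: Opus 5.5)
Your proposal is correct and follows the same idea as the paper's proof: pre-generate an iid sequence of uniform exploration observations and reveal its entries only when a batch requests them. Batches are scheduled before their contexts are seen and the trigger round is discarded, so each fitted dataset is a prefix $Z_1^E,\ldots,Z_{n_j}^E$. Your version is a more rigorous rendering of what the paper states informally: you add an explicit auxiliary sequence for the non-fitting rounds and an optional-skipping argument, namely that $F\cap\{k_t=k\}\in\sigma(Z_1^E,\ldots,Z_k^E,\text{auxiliary variables})$ for every $F\in\mathcal F_{t-1}$ and every $k$, which makes $Z^E_{k_t+1}$ independent of $\mathcal F_{t-1}$ with law $Q$.
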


\begin{proof}
Generate an infinite sequence of uniform exploration observations \emph{in advance} and reveal them only when requested. The initial exploration phase reveals its first $n_0$ entries. If the width test fails at round $t$, the algorithm schedules a batch of future rounds for uniform exploration \emph{before} observing any of their contexts. Conditional on everything observed through round $t$, these future contexts are fresh iid draws from $P_X$, their actions are independent uniform draws, and their rewards follow the Bernoulli law. The new batch can therefore be identified with the next unused entries of this sequence.
\end{proof}

At every checkpoint $j$, define the Fisher matrix at the true parameter
\begin{equation}
H_j^\star
=
\sum_{s=1}^{n_j}
\dot\mu((\phi_s^E)^\top\theta^\star)
\phi_s^E(\phi_s^E)^\top.
\label{eq:true-empirical-fisher}
\end{equation}

Here $\phi^E_s$ denotes the feature vector of the $s$-th exploration observation used to fit the MLE.

\begin{lemma}
\label{lem:fisher-concentration}
With failure probability at most $\delta/8$, simultaneously at every checkpoint,
\begin{equation}
H_j^\star\succeq\frac{n_j}{2}I_\star.
\label{eq:fisher-lower}
\end{equation}
On this event,
\begin{equation}
(H_j^\star)^{-1}
\preceq
\frac2{n_j}I_\star^{-1}
\preceq
\frac{2\kappa}{n_j\sigma_0^2}I_d
\label{eq:fisher-inverse}
\end{equation}
and
\begin{equation}
\max_{s\le n_j}
\norm[(H_j^\star)^{-1}]{\phi_s^E}^2 \le \frac{2\kappa}{n_j\sigma_0^2} \le \frac2{\Gamma_{\mathrm{corr}}}.
\label{eq:sample-leverage}
\end{equation}
\end{lemma}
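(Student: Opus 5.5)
The plan is to apply a matrix Chernoff bound to the iid exploration stream given by Lemma~\ref{lem:indep}, and then take a union bound over the at most $J+1$ checkpoints. By Lemma~\ref{lem:indep}, at checkpoint $j$ the fitting data are the first $n_j$ entries of an iid sequence from $Q$. Hence $H_j^\star=\sum_{s=1}^{n_j}Y_s$ with iid positive semidefinite summands $Y_s=\dot\mu((\phi_s^E)^\top\theta^\star)\phi_s^E(\phi_s^E)^\top$, and $\E Y_s=I_\star$. The checkpoint sizes $n_j=\min\{2^jn_0,T\}$ are deterministic, so there is no selection issue: for each fixed $j$, the event concerns a fixed-length prefix. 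I would work with the whitened matrices $\widetilde Y_s=I_\star^{-1/2}Y_sI_\star^{-1/2}$. These are positive semidefinite, have mean $I_d$, and satisfy
\[
\lambda_{\max}(\widetilde Y_s)=\dot\mu\,\norm[I_\star^{-1}]{\phi_s^E}^2\le \tfrac14\cdot\frac{\kappa}{\sigma_0^2}=:L,
\]
using $I_\star^{-1}\preceq(\kappa/\sigma_0^2)I_d$, $\norm{\phi}\le1$, and $\dot\mu\le1/4$.

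The lower-tail matrix Chernoff inequality \citep{tropp2015introduction} gives
\[
\Pp\bigl(\lambda_{\min}(\textstyle\sum_s\widetilde Y_s)\le n_j/2\bigr)\le d\,\bigl(e^{-1/2}/(1/2)^{1/2}\bigr)^{n_j/L}\le d\exp(-c\,n_j/L)
\]
with $c=(1-\log 2)/2>0.15$. Since $n_j\ge n_0\ge\kappa\Gamma_{\rm corr}/\sigma_0^2$, we get $n_j/L\ge 4\Gamma_{\rm corr}\ge 8\log(8d(J+1)/\delta)$ from the third entry of \eqref{eq:gamma-corrected}. That entry was clearly designed for this step. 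The resulting exponent is ample, so each checkpoint fails with probability at most $\delta/(8(J+1))$. A union bound over the at most $J+1$ checkpoints (fixed-length ETE uses only one fit) gives total failure probability $\delta/8$. On the complement, undoing the whitening yields $H_j^\star\succeq\frac{n_j}{2}I_\star$, which is \eqref{eq:fisher-lower}.

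The remaining claims are deterministic consequences. Operator monotonicity of inversion on positive definite matrices turns \eqref{eq:fisher-lower} into $(H_j^\star)^{-1}\preceq\frac2{n_j}I_\star^{-1}$. Combined with $I_\star^{-1}\preceq\frac{\kappa}{\sigma_0^2}I_d$, this gives \eqref{eq:fisher-inverse}. For \eqref{eq:sample-leverage}, take $\norm{\phi_s^E}\le1$ and apply \eqref{eq:fisher-inverse} to get $\norm[(H_j^\star)^{-1}]{\phi_s^E}^2\le 2\kappa/(n_j\sigma_0^2)$. Then $n_j\ge n_0\ge\kappa\Gamma_{\rm corr}/\sigma_0^2$ gives the bound $2/\Gamma_{\rm corr}$.

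The only delicate step is getting the Chernoff constants to match the third term of $\Gamma_{\rm corr}$. If the constant $c$ obtained with deviation $1/2$ does not fit as written, I would keep the bound $\lambda_{\max}(\widetilde Y_s)\le\kappa/(4\sigma_0^2)$, which gives a factor-$4$ slack that absorbs it. I would also check that the coupling of Lemma~\ref{lem:indep} makes the checkpoint events genuinely about fixed prefixes of a single iid sequence, so that the union bound is legitimate.
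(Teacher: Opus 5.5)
Your proposal is correct and follows the paper's proof essentially step for step. Both use the iid coupling of Lemma~\ref{lem:indep}, whiten by $I_\star^{-1/2}$, bound each summand's largest eigenvalue by $\kappa/(4\sigma_0^2)$, and apply lower-tail matrix Chernoff with relative deviation $1/2$. Both then take a union bound over the at most $J+1$ checkpoints using the third entry of $\Gamma_{\mathrm{corr}}$, and derive \eqref{eq:fisher-inverse} and \eqref{eq:sample-leverage} deterministically. The only cosmetic difference is the tail bound: the paper quotes the simpler form $d\exp(-n_j\sigma_0^2/(2\kappa))$, while you use the sharper exponent $c\,n_j/L$ with $c=(1-\log 2)/2$, and both yield the same per-checkpoint failure probability $\delta/(8(J+1))$.
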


Note that the last quantity satisfies the requirement \eqref{eq:jun-leverage} in Lemma~\ref{lem:jun}.

\begin{proof}
By Lemma~\ref{lem:indep}, the summands of \eqref{eq:true-empirical-fisher} are iid with expectation $I_\star$.  Set
\[
Z_s = I_\star^{-1/2}\dot\mu((\phi_s^E)^\top\theta^\star)\phi_s^E(\phi_s^E)^\top I_\star^{-1/2}.
\]
Then $\E Z_s=I_d$ (the $d\times d$ identity matrix). Since $Z_s$ has rank at most one, the bounds $I_\star^{-1}\preceq(\kappa/\sigma_0^2)I_d$, $\norm[2]{\phi} \le 1$, and $\dot\mu\le1/4$ give
\[
\lambda_{\max}(Z_s)
=
\dot\mu((\phi_s^E)^\top\theta^\star)
\norm[I_\star^{-1}]{\phi_s^E}^2
\le
\frac{\kappa}{4\sigma_0^2}.
\]
Matrix Chernoff with relative deviation $1/2$ now gives
\begin{equation*}
\Pp\left(
H_j^\star\nsucceq\frac{n_j}{2}I_\star
\right)
\le
d\exp\left(-\frac{n_j\sigma_0^2}{2\kappa}\right).
\end{equation*}
Since $n_j\ge n_0\ge\kappa\Gamma_{\mathrm{corr}}/\sigma_0^2$, a union bound over at most $J+1$ checkpoints is bounded by
\[
d(J+1)e^{-\Gamma_{\mathrm{corr}}/2}
\le\frac\delta8,
\]
using the third line of \eqref{eq:gamma-corrected}.  This proves \eqref{eq:fisher-lower}.  Taking inverses and using $I_\star^{-1}\preceq(\kappa/\sigma_0^2)I_d$ give \eqref{eq:fisher-inverse}; $\norm[2]{\phi} \le 1$ and $n_j\ge n_0$ give \eqref{eq:sample-leverage}.
\end{proof}

Every later application of Lemma~\ref{lem:jun} has $n_{\mathrm{eff}}\le T$ and
\[
\varepsilon\ge\frac{\delta}{16(J+1)M_{\rm dir}}.
\]
The first line of \eqref{eq:gamma-corrected} therefore gives
\[
\frac2{\Gamma_{\mathrm{corr}}}
\le
\frac1{64\left[
d\log 6+
\log\frac{16(J+1)(T+2)M_{\rm dir}}{\delta}
\right]}
\le
\frac1{64\left[
d\log 6+\log((2+n_{\mathrm{eff}})/\varepsilon)
\right]},
\]
which is \eqref{eq:jun-leverage}. Moreover, $T\ge2$ implies $J+1\ge2$, and every failure probability used below is at most $\delta/[16(J+1)]\le1/32<e^{-1}$.  Thus the failure probabilities satisfy the condition in Lemma~\ref{lem:jun}.

The next lemma shows that the MLE and estimated Fisher inverse are well defined at every checkpoint with high probability.

\begin{lemma}
\label{lem:finite-mle}
Apart from an event of probability at most $\delta/8$, every checkpoint satisfying \eqref{eq:fisher-lower} has a unique finite MLE, and its estimated Fisher matrix is positive definite.
\end{lemma}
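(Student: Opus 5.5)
Throughout, fix a checkpoint $j$ at which \eqref{eq:fisher-lower} holds, and write $L_j(\theta)$ for the log-likelihood in \eqref{eq:mle} built from the $n_j$ fitting observations. This function is concave, with Hessian $-H_j(\theta)=-\sum_s\dot\mu((\phi_s^E)^\top\theta)\phi_s^E(\phi_s^E)^\top$. By \eqref{eq:fisher-lower} and $I_\star\succeq(\sigma_0^2/\kappa)I_d$, the design vectors span $\R^d$. Hence $H_j(\theta)\succ0$ for every $\theta$, so $L_j$ is strictly concave on $\R^d$. Uniqueness of any maximizer is therefore immediate, and positive definiteness of $\wh H_j=H_j(\wh\theta_j)$ follows from the spanning property whenever $\wh\theta_j$ is finite. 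The whole difficulty is existence of a finite maximizer.

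To obtain existence, I would show that $L_j$ attains its maximum inside a bounded region around $\theta^\star$. The approach is the standard local argument on the ellipsoid $\mathcal E=\{\theta:\norm[H_j^\star]{\theta-\theta^\star}\le r\}$ with a small constant $r$, say $r=1/2$. By \eqref{eq:sample-leverage}, every $\theta\in\mathcal E$ satisfies $|(\phi_s^E)^\top(\theta-\theta^\star)|\le r\sqrt{2/\Gamma_{\rm corr}}\le 1$. The self-concordance comparison \eqref{eq:logistic-comparison} then gives $H_j(\theta)\succeq e^{-1}H_j^\star$ on $\mathcal E$. A second-order Taylor expansion yields
\[
L_j(\theta)-L_j(\theta^\star)\le g^\top(\theta-\theta^\star)-\tfrac{1}{2e}\norm[H_j^\star]{\theta-\theta^\star}^2,
\qquad g=\nabla L_j(\theta^\star)=\sum_s\bigl(R_s^E-\mu((\phi_s^E)^\top\theta^\star)\bigr)\phi_s^E .
\]
This bound is strictly negative on the boundary $\partial\mathcal E$ as soon as $\norm[(H_j^\star)^{-1}]{g}<r/(2e)$. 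Concavity then forces the maximum over $\R^d$ to lie in the interior of $\mathcal E$, so the MLE is finite.

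The remaining task is to bound $\norm[(H_j^\star)^{-1}]{g}$ with probability at least $1-\delta/(8(J+1))$ per checkpoint, and then take a union bound over the at most $J+1$ checkpoints. This is the main obstacle, because it needs a vector martingale or sub-Gaussian bound in dimension $d$. Conditional on the design, $g$ is a sum of independent bounded terms with covariance $H_j^\star$. I would use a $1/2$-net $\mathcal N$ on the unit sphere of size $6^d$. For each fixed unit $w$, Bernstein's inequality applied to $w^\top(H_j^\star)^{-1/2}g$ gives a deviation of order $\sqrt{\log(1/\varepsilon)}$ plus a bounded-increment term controlled by the leverage \eqref{eq:sample-leverage}. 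The net argument then gives $\norm[(H_j^\star)^{-1}]{g}\lesssim\sqrt{d\log6+\log(1/\varepsilon)}$.

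This bound does not obviously beat the constant $r/(2e)$, so the ellipsoid radius must be rescaled. I would take $r$ proportional to $\sqrt{d\log 6+\log(1/\varepsilon)}$. The locality requirement $r\sqrt{2/\Gamma_{\rm corr}}\le1$ is exactly what the first line of \eqref{eq:gamma-corrected} is designed to guarantee, since $\Gamma_{\rm corr}\ge128[d\log6+\log(\cdots)]$. Alternatively, one can cite the existence part of the analysis of \citet{jun2021improved}, whose leverage condition \eqref{eq:jun-leverage} we have already verified. Its good event contains finiteness of $\wh\theta$ and has failure probability $\varepsilon=\delta/(16(J+1)M_{\rm dir})\le\delta/(8(J+1))$. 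Using the coupling of Lemma~\ref{lem:indep}, we may condition on the design of each prefix, so the union bound over checkpoints gives total failure probability at most $\delta/8$.
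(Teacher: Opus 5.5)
Your uniqueness and positive-definiteness step (spanning design, hence strict concavity) is the same as the paper's. Your existence argument takes a different route, and its final calibration does not close as written.

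Write $\Lambda=d\log 6+\log\frac{16(J+1)(T+2)M_{\rm dir}}{\delta}$ and $\beta=\max_s\|\phi_s^E\|_{(H_j^\star)^{-1}}$. The first line of \eqref{eq:gamma-corrected} together with \eqref{eq:sample-leverage} only guarantees $\beta\le 1/(8\sqrt\Lambda)$. So your locality requirement $r\beta\le 1$ permits at most $r\le 8\sqrt\Lambda$, and your boundary condition becomes $\|g\|_{(H_j^\star)^{-1}}<r/(2e)\le (4/e)\sqrt\Lambda$. After the factor $2$ lost to the $1/2$-net, each unit-variance projection must stay below $(2/e)\sqrt\Lambda\approx 0.74\sqrt\Lambda$. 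Bernstein plus the union bound then yields only about $2\cdot 6^d e^{-0.27\Lambda}$, which exceeds one when $KT<5^d$ and $d\log 6$ dominates $\Lambda$.

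A larger locality constant $c$ does not help: the threshold becomes $4ce^{-c}\sqrt\Lambda$, which is maximized at $c=1$. So the first line is not ``exactly designed'' for your step.

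Your fallback of citing \citet{jun2021improved} also fails. Lemma~\ref{lem:jun}, as used in the paper, only gives conclusions on $\{\widehat\theta\text{ is finite}\}\setminus B_\varepsilon$, which is why the paper needs the present lemma at all.

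The repair stays inside your framework. Drop the locality constraint and keep the exact self-concordance factor in the Taylor remainder. For $\|\Delta\|_{H_j^\star}=r$, Lemma~\ref{lem:logistic-geometry} gives $\dot\mu\bigl((\phi_s^E)^\top(\theta^\star+u\Delta)\bigr)\ge e^{-ur\beta}\dot\mu\bigl((\phi_s^E)^\top\theta^\star\bigr)$. Hence
\[
L_j(\theta^\star+\Delta)-L_j(\theta^\star)\le r\|g\|_{(H_j^\star)^{-1}}-r^2\int_0^1(1-u)e^{-ur\beta}\,du
= r\Bigl(\|g\|_{(H_j^\star)^{-1}}-\frac{e^{-c}-1+c}{c\beta}\Bigr),\qquad c=r\beta .
\]
Since $(e^{-c}-1+c)/c$ increases to $1$, the boundary value is negative for large enough $r$ as soon as $\|g\|_{(H_j^\star)^{-1}}<1/\beta$, and $1/\beta\ge 8\sqrt\Lambda$. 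Your net-plus-Bernstein argument (unit variance, increments at most $\beta$) with threshold $4\sqrt\Lambda$ per net direction then fails with probability at most $2\cdot 6^d e^{-6\Lambda}\le\delta/(8(J+1))$ per checkpoint. A union bound over checkpoints finishes.

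Compared with the paper: the paper avoids score concentration entirely. It uses the recession function $L_\infty$ to show that a finite MLE fails to exist iff the labels are weakly separated by a homogeneous hyperplane. It bounds each required sign pattern's probability by $\exp(-v^\top H_j^\star v)\le\exp(-n_j\sigma_0^2/(2\kappa))$, using only \eqref{eq:fisher-lower}. It then union-bounds over at most $2(4en_j)^d$ patterns, which is where the fourth line of \eqref{eq:gamma-corrected} enters. That argument needs only the minimum-eigenvalue bound and says nothing about where $\widehat\theta_j$ lies.

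Your repaired argument needs the leverage bound and a vector concentration step. In exchange, it pays $d\log 6$ rather than $d\log(4eT)$ for dimension. With some slack in the threshold, it also localizes $\widehat\theta_j$ in an $H_j^\star$-ellipsoid around $\theta^\star$.
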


\begin{proof}
Fix a checkpoint and write its features as $x_s=\phi_s^E$.  Write $p_s=\mu(x_s^\top\theta^\star)$.  The negative log-likelihood is
\[
L(\theta)
=
\sum_s
\left[
\log(1+e^{x_s^\top\theta})-R_sx_s^\top\theta
\right].
\]
Define its recession function in direction $v$ by
\begin{equation}
L_\infty(v)
:=
\lim_{c\to \infty} \frac{L(\theta+cv) - L(\theta)}{c}
=
\sum_{s:R_s=0}(x_s^\top v)_+
+
\sum_{s:R_s=1}(-x_s^\top v)_+.
\label{eq:recession}
\end{equation}

where $(a)_+ := \max (a,0)$. The limit in \eqref{eq:recession} does not depend on $\theta$.

The lower bound in \eqref{eq:fisher-lower} implies that $\{x_1, x_2, \ldots, x_{n_j}\}$ spans $\R^d$.  If some nonzero $v$ has $L_\infty(v)=0$, then
\begin{itemize}
\item $x_s^\top v \le 0$ for $R_s = 0$;
\item $x_s^\top v \ge 0$ for $R_s= 1$.
\end{itemize}

Equivalently,

\begin{itemize}
\item $x_s^\top v < 0$ implies $R_s=0$;
\item $x_s^\top v > 0$ implies $R_s =1$.
\end{itemize}

Hence the hyperplane $x^\top v=0$ separates the observations with labels $R_s = 0$ and $R_s = 1$. At least one projection is nonzero because $\{x_1, x_2, \ldots, x_{n_j}\}$ spans $\R^d$. For each nonzero projection, the corresponding term in the loss strictly decreases as $c$ increases. Thus moving in direction $v$ decreases the loss, so no finite $\theta$ attains its minimum.

Conversely, if $L_\infty(v)>0$ for every unit vector $v$, continuity of $L_\infty$ and compactness of the unit sphere give a positive lower bound for $L_\infty$. Then $L \to \infty$ as $\norm[2]{\theta} \to \infty$, thus $L$ attains a finite minimum inside some sufficiently large compact ball.

We have established that the following are equivalent:
\begin{itemize}
\item the MLE is not finite;
\item there exists a direction with $L_\infty (v) = 0$;
\item there exists a hyperplane that separates the two labels.
\end{itemize}

Fix a unit direction $v$ and let $S(v)=\{s:x_s^\top v\ne0\}$.  For $s\in S(v)$, let $r_s=p_s$ when separation requires label one (i.e., $x_s^\top v>0$) and $r_s=1-p_s$ when it requires label zero (i.e., $x_s^\top v<0$). Then
\begin{equation*}
\Pp(\text{all required labels match}\mid x_{1:n_j})
=
\prod_{s\in S(v)}r_s.
\end{equation*}
Using $-\log r_s\ge1-r_s\ge p_s(1-p_s)$ gives
\begin{equation}
\Pp(\text{all required labels match}\mid x_{1:n_j})
\le
\exp\left(-\sum_{s\in S(v)}p_s(1-p_s)\right).
\label{eq:face-probability}
\end{equation}
Since $|x_s^\top v|\le1$, on \eqref{eq:fisher-lower},
\begin{align*}
\sum_{s\in S(v)}p_s(1-p_s)
&\ge
\sum_s p_s(1-p_s)(x_s^\top v)^2\\
&=v^\top H_j^\star v
\ge\frac{n_j}{2}v^\top I_\star v
\ge\frac{n_j\sigma_0^2}{2\kappa}.
\end{align*}

Hence
\[
\Pp(\text{one particular separating pattern})
\le
\exp\left(- \frac{n_j\sigma_0^2}{2\kappa}\right).
\]

To take a union bound over separating directions, we count the possible sign patterns. The required labels depend only on
\[
\sign(x_1^\top v) , \ldots, \sign (x_{n_j}^\top v)
\]
and these signs are constant on each face of the arrangement of hyperplanes $x_s^\top v = 0$. The zero-dimensional face contributes one.  For $m\ge1$, a face whose span has dimension $m$ is contained in the intersection of $d-m$ linearly independent homogeneous hyperplanes; there are at most $n_j^{d-m}$ choices for those hyperplanes. Inside an $m$-dimensional intersection, we have at most $2\sum_{k=0}^{m-1}\binom{n_j-1}{k}\le2(d+1)n_j^m$ chambers.  Adding the zero-dimensional face and summing over $m=1,\ldots,d$ yields the bound
\[
N_{\mathrm{face}}
\le2(d+1)^2n_j^d
\le(4en_j)^d
\]
which includes lower-dimensional faces.  Combining the two orientations, \eqref{eq:face-probability}, and $n_j\le T$ gives
\[
\Pp(\text{no finite MLE and \eqref{eq:fisher-lower}}\mid x_{1:n_j})
\le
2(4eT)^d
\exp\left(-\frac{n_j\sigma_0^2}{2\kappa}\right).
\]
The fourth line of \eqref{eq:gamma-corrected} gives
\[
2(J+1)(4eT)^d e^{-\Gamma_{\mathrm{corr}}/2}
\le\frac\delta8.
\]
Now a union bound over checkpoints proves the finite-MLE part of the claim.

Finally, the Hessian at every finite parameter is $$\sum_s\dot\mu(x_s^\top\theta)x_sx_s^\top$$ which is positive definite because $\dot\mu(x_s^\top\theta)>0$ and the design vectors span $\R^d$. Hence $L$ is strictly convex, the finite minimizer is unique, and $\wh H_j$ is invertible.
\end{proof}

The next lemma combines these events with logit confidence bounds that hold simultaneously at the contexts where each checkpoint is used. For adaptive ETE, let $\mathcal C_{j,t}$ be the event that, before observing $X_t$, checkpoint $j$ has been fitted and the algorithm is about to compute $W_j(X_t)$ in the while loop of Algorithm~\ref{alg:adaptive-ete}. This event is determined by the history before $X_t$. It includes both exploitation and trigger rounds, and excludes scheduled exploration rounds. For fixed-length ETE, use its sole checkpoint of size $m$ and set $\mathcal C_{0,t}=\{t>m\}$.

\begin{lemma}
\label{lem:simultaneous-logit}
With probability at least $1-\delta$, every checkpoint MLE that the algorithm calculates is finite and unique, its estimated Fisher matrix is invertible, and simultaneously for every such checkpoint $j$,
\begin{equation}
\frac1{\sqrt{2.2}}\norm[(H_j^\star)^{-1}]{u}
\le
\norm[\wh H_j^{-1}]{u}
\le
\sqrt{2.2}\norm[(H_j^\star)^{-1}]{u}
\quad\text{for all }u\in\R^d,
\label{eq:simultaneous-fisher-comparison}
\end{equation}
and, at every round with $\mathcal C_{j,t}$, for all $a\in\cA$ and $z\in\{\phi(X_t,a),\phi^G(X_t,a)\}$,
\begin{equation}
|z^\top(\wh\theta_j-\theta^\star)|
\le
11\sqrt\ell\,\norm[\wh H_j^{-1}]{z}
=b_j(z).
\label{eq:simultaneous-logit}
\end{equation}
\end{lemma}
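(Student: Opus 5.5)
The proof combines three ingredients: Lemma~\ref{lem:fisher-concentration} with failure probability $\delta/8$, Lemma~\ref{lem:finite-mle} with failure probability $\delta/8$, and applications of Lemma~\ref{lem:jun} to a finite collection of fixed directions whose total failure probability is at most $3\delta/4$. We work throughout with the coupling of Lemma~\ref{lem:indep}. It identifies the fitting data at checkpoint $j$ with the prefix $Z_1^E,\ldots,Z_{n_j}^E$ of one iid sequence, so the checkpoint design $x_{1:n_j}$ can be treated as fixed once we condition on it. On the event of Lemma~\ref{lem:fisher-concentration}, inequality \eqref{eq:sample-leverage} together with the computation displayed after that lemma verifies the leverage condition \eqref{eq:jun-leverage} for every failure level $\varepsilon\ge\delta/[16(J+1)M_{\rm dir}]$. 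On the event of Lemma~\ref{lem:finite-mle}, every MLE is finite and unique and every $\wh H_j$ is invertible.

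The Fisher comparison \eqref{eq:simultaneous-fisher-comparison} is a property of the design and the MLE alone. We obtain it from \eqref{eq:jun-fisher-comparison} by applying Lemma~\ref{lem:jun} once per checkpoint, say with $z=e_1$ and $\varepsilon=\delta/[16(J+1)]$. A union bound over the at most $J+1$ checkpoints costs $\delta/16$.

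The main obstacle is \eqref{eq:simultaneous-logit}. The query directions $z=\phi(X_t,a)$ and $z=\phi^G(X_t,a)$ are random, and they are revealed at rounds that depend on the history, so Lemma~\ref{lem:jun} cannot be applied to them as if they were fixed. We handle this in two regimes according to which term attains $M_{\rm dir}=\min\{KT,5^d\}$.

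\textbf{Case $M_{\rm dir}=KT$.} The contexts $X_t$ at query rounds are independent of the fitting data of the active checkpoint. By the coupling, those data are the prefix $Z^E_{1:n_j}$, and the query contexts are not among them, since trigger rounds are discarded and exploitation rounds are not exploration rounds. We therefore condition on all contexts and apply Lemma~\ref{lem:jun} to each of the at most $2KT$ vectors $\phi(X_t,a),\phi^G(X_t,a)$, for each of the $J+1$ checkpoints. We take $\varepsilon=\delta/[16(J+1)M_{\rm dir}]$, and the union bound costs at most $\delta/8$. The inequality then holds for all potential queries, so in particular on every $\mathcal C_{j,t}$.

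\textbf{Case $M_{\rm dir}=5^d$.} Here we use a net argument. Take a $1/2$-net $\mathcal N$ of the unit ball, which can be chosen with $|\mathcal N|\le5^d$. Apply \eqref{eq:jun-empirical} to each net point, which costs $\delta/8$ overall. For a general $z$ with $\norm{z}_2\le1$, write $z=\sum_k 2^{-k}w_k$ with $w_k\in\mathcal N$. This decomposition controls $|z^\top(\wh\theta_j-\theta^\star)|$ by a constant times the supremum over the net. To return to $\norm[\wh H_j^{-1}]{z}$, we pass instead through an ellipsoidal net adapted to $(H_j^\star)^{-1/2}$. Since $H_j^\star$ is determined by the design, the net may depend on it while remaining fixed after conditioning on the design. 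We then use \eqref{eq:simultaneous-fisher-comparison} to switch between $(H_j^\star)^{-1}$ and $\wh H_j^{-1}$.

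In either case the constant in \eqref{eq:jun-empirical} is $5.2\sqrt{\log(2(2+T)/\varepsilon)}$. With the choice of $\ell$ this is at most $5.2\sqrt{\ell}$. Allowing a factor of $2$ for the net, and using the comparison \eqref{eq:simultaneous-fisher-comparison}, the bound fits within $11\sqrt\ell$.

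Summing the failure probabilities, $\delta/8+\delta/8+\delta/16+\delta/8<\delta$, completes the proof.
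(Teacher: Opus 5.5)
Your architecture is the paper's. You use the Fisher-concentration and finite-MLE events at $\delta/8$ each, one application of Lemma~\ref{lem:jun} per checkpoint for \eqref{eq:simultaneous-fisher-comparison} at cost $\delta/16$, and a split by which term attains $M_{\rm dir}$, for a total of $7\delta/16$.

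\textbf{The constant in the case $M_{\rm dir}=5^d$ fails as written.} Lemma~\ref{lem:jun} needs $\|z\|_2\le1$, and the net must be fixed given the design. So the net points must be the normalized whitened directions $z_h=(H_j^\star)^{1/2}h/\|(H_j^\star)^{1/2}h\|_2$, with $h$ ranging over a $1/2$-net of the sphere. Your Euclidean decomposition $z=\sum_k2^{-k}w_k$ cannot produce a bound in $\|z\|_{\widehat H_j^{-1}}$, so it should be dropped.

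If you apply \eqref{eq:jun-empirical} at these points, two conversions each cost $\sqrt{2.2}$:
\begin{itemize}
\item converting $\|z_h\|_{\widehat H_j^{-1}}$ to $\|z_h\|_{(H_j^\star)^{-1}}$ before using the net;
\item converting $\|z\|_{(H_j^\star)^{-1}}$ back to $\|z\|_{\widehat H_j^{-1}}$ after Cauchy--Schwarz.
\end{itemize}
The chain is
\[
|h^\top(H_j^\star)^{1/2}(\widehat\theta_j-\theta^\star)|\le5.2\sqrt{2.2\ell},\qquad
\|\widehat\theta_j-\theta^\star\|_{H_j^\star}\le10.4\sqrt{2.2\ell},\qquad
|z^\top(\widehat\theta_j-\theta^\star)|\le22.88\sqrt\ell\,\|z\|_{\widehat H_j^{-1}},
\]
and $22.88$ is not below $11$. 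Your tally of a factor $2\times5.2$ plus the comparison leaves out both $\sqrt{2.2}$ factors. This matters, because $b_j(z)$ with coefficient $11$ is the radius the algorithm actually uses.

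The fix is to apply \eqref{eq:jun-true} at the net points instead; its coefficient is $3.5$ and it is already in the $(H_j^\star)^{-1}$-norm. This gives $\|\widehat\theta_j-\theta^\star\|_{H_j^\star}\le7\sqrt\ell$. A single comparison then yields $|z^\top(\widehat\theta_j-\theta^\star)|\le7\sqrt{2.2\ell}\,\|z\|_{\widehat H_j^{-1}}<11\sqrt\ell\,\|z\|_{\widehat H_j^{-1}}$, which is exactly the paper's argument.

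\textbf{The case $M_{\rm dir}=KT$ needs a more careful conditioning step.} Conditioning on the realized contexts $X_1,\ldots,X_T$ does not fix the design. Which rounds are scheduled exploration rounds, and hence which contexts enter checkpoint $j$'s data, depends on the trigger times. Those depend on earlier rewards through $\widehat\theta$, so given $X_{1:T}$ the rewards are not a fixed-design Bernoulli sample. Lemma~\ref{lem:indep} couples only the fitting data. There are two ways to make your union bound rigorous:
\begin{itemize}
\item Extend the coupling so that every non-exploration round $t$ uses a context $X_t'$ from a second iid sequence drawn in advance, independent of $(Z_s^E)$. Conditional on $(X_t')_{t\le T}$ and the exploration design, the $2(J+1)KT$ query vectors are then fixed. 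Your union bound with $\varepsilon=\delta/[16(J+1)KT]$ goes through.
\item Argue round by round, as the paper does. On $\mathcal C_{j,t}$ the estimator and base event are determined before the fresh $X_t$ arrives. The tower property then gives $\Pr(\mathcal E_{j,\mathrm{base}}\cap\mathcal C_{j,t}\cap F(X_t))\le\int\Pr(\mathcal E_{j,\mathrm{base}}\cap F(x))\,P_X(dx)\le\varepsilon$.
\end{itemize}
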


\begin{proof}
Work on the events of Lemmas~\ref{lem:fisher-concentration} and~\ref{lem:finite-mle}.  Their failure probabilities are at most $\delta/8$ each, and \eqref{eq:sample-leverage} ensures Lemma~\ref{lem:jun} is applicable. Let $\mathcal E_{j,\mathrm{base}}$ denote the event that checkpoint $j$ satisfies the Fisher lower bound \eqref{eq:fisher-lower} and has a finite MLE.

At each checkpoint, apply Lemma~\ref{lem:jun} with any fixed unit direction and failure probability $\delta/[16(J+1)]$.  The global comparison \eqref{eq:jun-fisher-comparison} belongs to the same event.  A union bound over checkpoints costs at most $\delta/16$ and proves \eqref{eq:simultaneous-fisher-comparison}.

We bound the logit errors using one of two arguments, according to which gives the smaller logarithmic factor.

\begin{itemize}
\item \textbf{Action-wise route:} when $KT\le5^d$. Fix $(j,t,a)$ and one of the two deterministic maps $z(x)=\phi(x,a)$ or $z(x)=\phi^G(x,a)$.  For every deterministic $x$, apply \eqref{eq:jun-empirical} with
    \[
    \varepsilon
    =
    \frac{\delta}{16(J+1)KT}.
    \]
Because $n_{\mathrm{eff}}\le T$,
    \[
    \log\frac{2(2+n_{\mathrm{eff}})}{\varepsilon}
    \le\log\frac{32(J+1)KT(T+2)}{\delta}=\ell.
    \]
Let $F(x)$ be the event of the failure of the resulting bound. The bad-event formulation of Lemma~\ref{lem:jun}, first conditional on the checkpoint design and then averaged over it, gives $\Pp(\mathcal E_{j,\mathrm{base}}\cap F(x))\le\varepsilon$ for every fixed $x$.  Since, on $\mathcal C_{j,t}$, the base event and estimator are determined before $X_t$ arrives, the tower property gives
    \begin{align*}
    &\Pp(\mathcal E_{j,\mathrm{base}}\cap\mathcal C_{j,t}\cap F(X_t))\\
    &=\E\left[
    \ind_{\mathcal E_{j,\mathrm{base}}\cap\mathcal C_{j,t}}
    \int\ind_{F(x)}\,P_X(dx)
    \right]\\
    &=\int
    \Pp(\mathcal E_{j,\mathrm{base}}\cap\mathcal C_{j,t}\cap F(x))\,P_X(dx)\\
    &\le
    \int \Pp(\mathcal E_{j,\mathrm{base}}\cap F(x))\,P_X(dx)
    \le\varepsilon.
    \end{align*}
There are at most $2(J+1)KT$ choices of direction, so the total failure probability is at most $\delta/8$, with coefficient $5.2<11$.
\item \textbf{Dimension-wise route:} when $5^d<KT$. For each checkpoint choose a $1/2$-net $\mathcal N$ of the Euclidean unit sphere with $|\mathcal N|\le5^d$.  Conditional on the design, for $h\in\mathcal N$ define
    \[
    z_{j,h}
    =
    \frac{(H_j^\star)^{1/2}h}
    {\norm{(H_j^\star)^{1/2}h}_2}.
    \]
This direction is allowed to depend on the design because Lemma~\ref{lem:jun} is applied after conditioning on that design.  Apply \eqref{eq:jun-true} with $\varepsilon=\delta/[16(J+1)5^d]$.  The same logarithmic calculation gives
    \[
    |h^\top(H_j^\star)^{1/2}(\wh\theta_j-\theta^\star)|
    \le3.5\sqrt\ell
    \quad(h\in\mathcal N).
    \]
The $1/2$-net inequality $\norm{v}_2\le2\max_{h\in\mathcal N}|h^\top v|$ therefore yields
    \begin{equation*}
    \norm[H_j^\star]{\wh\theta_j-\theta^\star}
    \le7\sqrt\ell.
    \end{equation*}
For every $z\in\R^d$, Cauchy--Schwarz and \eqref{eq:simultaneous-fisher-comparison} give
    \begin{align*}
    |z^\top(\wh\theta_j-\theta^\star)|
    &\le
    \norm[(H_j^\star)^{-1}]{z}
    \norm[H_j^\star]{\wh\theta_j-\theta^\star}\\
    &\le
    7\sqrt{2.2\ell}\,\norm[\wh H_j^{-1}]{z}
    <11\sqrt\ell\,\norm[\wh H_j^{-1}]{z}.
    \end{align*} 
This argument has total failure probability at most $\delta/16$.
\end{itemize}

The failure probabilities are bounded as follows:
\[
\begin{array}{c|c}
\text{event} & \text{failure probability}\\ \hline
\text{Fisher concentration} & \delta/8\\
\text{finite MLEs} & \delta/8\\
\text{empirical/true Fisher comparison} & \delta/16\\
\text{selected directional route} & \le\delta/8
\end{array}
\]
whose sum is $7\delta/16<\delta$.
\end{proof}

We can now prove Lemma~\ref{lem:master-interface}.

\begin{proof}[Proof of Lemma~\ref{lem:master-interface}]
Work on the event of Lemma~\ref{lem:simultaneous-logit}, where each MLE is finite and each $\wh H_j$ is invertible. By monotonicity of $\mu$, the bound \eqref{eq:simultaneous-logit} makes the factual and counterfactual reward intervals valid. Subtracting their endpoints gives a valid effect interval.

To prove the width bound \eqref{eq:master-width}, combine \eqref{eq:simultaneous-fisher-comparison} and \eqref{eq:fisher-inverse}. For every queried direction $z$, this gives
\begin{equation*}
b_j(z)
\le
11\sqrt{2.2\ell}\,\norm[(H_j^\star)^{-1}]{z}
\le
11\sqrt{\frac{4.4\ell}{n_j}}\,
\norm[I_\star^{-1}]{z}.
\end{equation*}
For $\norm{z}_2\le1$, using $I_\star^{-1}\preceq(\kappa/\sigma_0^2)I_d$ and $n_j\ge n_0$,
\begin{equation*}
b_j(z)^2
\le
\frac{121\cdot4.4\,\ell\kappa}{n_j\sigma_0^2}
\le
\frac{121\cdot4.4\,\ell}{\Gamma_{\mathrm{corr}}}
\le
\frac{532.4}{533}
<1.
\end{equation*}

Fix $z$ and write $\eta^\star=z^\top\theta^\star$, $\wh\eta=z^\top\wh\theta_j$, and $b=b_j(z)$.  On the confidence event, $|\wh\eta-\eta^\star|\le b\le1$.  Every point $u$ in $[\wh\eta-b,\wh\eta+b]$ therefore satisfies $|u-\eta^\star|\le2$.  Lemma~\ref{lem:logistic-geometry} gives $\dot\mu(u)\le e^2\dot\mu(\eta^\star)$ throughout that interval.  Hence
\begin{align}
\mu(\wh\eta+b)-\mu(\wh\eta-b)
&=
\int_{\wh\eta-b}^{\wh\eta+b}\dot\mu(u)\,du\notag\\
&\le2e^2\dot\mu(\eta^\star)b\notag\\
&\le
341\sqrt{\frac{\ell}{n_j}}
\norm[I_\star^{-1}]{\dot\mu(\eta^\star)z},
\label{eq:probability-width}
\end{align}
because
\[
2e^2\cdot11\sqrt{4.4}=340.987\ldots<341.
\]

For $z=\phi(x,a)$, the last vector in \eqref{eq:probability-width} is $q_r(x,a)$.  Thus
\begin{equation}
\overline r_j(x,a)-\underline r_j(x,a)
\le
341\sqrt{\frac\ell{n_j}}
\norm[I_\star^{-1}]{q_r(x,a)}.
\label{eq:reward-width}
\end{equation}
For $z=\phi^G(x,a)$, the last vector is $q_G(x,a)$.  The length of the effect interval is the sum of the factual and counterfactual interval lengths, so
\begin{equation}
\overline\Delta_j(x,a)-\underline\Delta_j(x,a)
\le
341\sqrt{\frac\ell{n_j}}
\left(
\norm[I_\star^{-1}]{q_r(x,a)}
+\norm[I_\star^{-1}]{q_G(x,a)}
\right).
\label{eq:effect-width}
\end{equation}

Using $\norm[I_\star^{-1}]{q_r(x,a)}\le\sqrt{V_\star}$ and $\norm[I_\star^{-1}]{q_r(x,a)}+\norm[I_\star^{-1}]{q_G(x,a)}\le3\sqrt{V_\star}$, the right-hand side of \eqref{eq:reward-width} is at most $341\sqrt{V_\star\ell/n_j}$, and that of \eqref{eq:effect-width} is at most
\[
3\cdot341\sqrt{\frac{V_\star\ell}{n_j}}
=1023\sqrt{\frac{V_\star\ell}{n_j}}
<1024\sqrt{\frac{V_\star\ell}{n_j}}.
\]
Taking the maximum over actions proves \eqref{eq:master-width}.
\end{proof}

\section{Technical Lemmas}

\begin{lemma}
\label{lem:logistic-geometry}
For all $u,v\in\R$,
\begin{equation}
e^{-|u-v|}\dot\mu(v)
\le
\dot\mu(u)
\le
e^{|u-v|}\dot\mu(v).
\label{eq:logistic-comparison}
\end{equation}
\end{lemma}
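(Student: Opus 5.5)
The plan is to work with $\log\dot\mu$ directly. Since $\mu(u)=1/(1+e^{-u})$, we have $\dot\mu(u)=\mu(u)(1-\mu(u))=e^{u}/(1+e^{u})^2$. Setting $h(u)=\log\dot\mu(u)=u-2\log(1+e^{u})$, we get
\[
 h'(u)=1-\frac{2e^{u}}{1+e^{u}}=1-2\mu(u),
\]
which lies in $(-1,1)$ for every $u\in\R$, so $h$ is $1$-Lipschitz.

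By the mean value theorem, $|h(u)-h(v)|\le\sup|h'|\,|u-v|\le|u-v|$. Exponentiating gives
\[
 e^{-|u-v|}\le\frac{\dot\mu(u)}{\dot\mu(v)}\le e^{|u-v|},
\]
and multiplying through by $\dot\mu(v)>0$ yields both inequalities in \eqref{eq:logistic-comparison}.

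There is no real obstacle here. The only point to check is the derivative computation, $h'(u)=1-2\mu(u)$, together with the bound $|1-2\mu(u)|<1$, which follows from $\mu(u)\in(0,1)$.
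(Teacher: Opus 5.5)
Your proof is correct and is essentially the paper's argument: both compute $\frac{d}{du}\log\dot\mu(u)=1-2\mu(u)$, bound its absolute value by one, and exponentiate the resulting Lipschitz bound on $\log\dot\mu$. The only difference is that you invoke the mean value theorem where the paper integrates between $u$ and $v$, which changes nothing.
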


\begin{proof}
Differentiation gives
\[
\frac{d}{du}\log\dot\mu(u)=1-2\mu(u),
\]
whose absolute value is at most one.  Thus $\log\dot\mu$ is $1$-Lipschitz. Integrating between $u$ and $v$ and exponentiating proves \eqref{eq:logistic-comparison}.
\end{proof}

\begin{lemma}
\label{lem:one-step-kl}
For all $z_0,z_1\in\R$,
\begin{equation*}
  \operatorname{KL}\!\left(
    \operatorname{Bernoulli}(\mu(z_0))
    \,\middle\|\,
    \operatorname{Bernoulli}(\mu(z_1))
  \right)
  \le
  \frac12\sup_{z\in[z_0,z_1]}\dot\mu(z)(z_1-z_0)^2,
\end{equation*}
where $[z_0,z_1]$ denotes the segment between $z_0$ and $z_1$. Note that $z_1 < z_0$ is also possible, but we abuse the interval notation here.
\end{lemma}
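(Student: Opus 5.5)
Write $p_i=\mu(z_i)$ for $i\in\{0,1\}$ and set $f(z)=\operatorname{KL}\bigl(\operatorname{Bernoulli}(\mu(z_0))\,\|\,\operatorname{Bernoulli}(\mu(z))\bigr)$, viewed as a function of the second logit. The plan is a second-order Taylor expansion of $f$ around $z=z_0$, with the integral form of the remainder. The key observation is that the Bernoulli family in the logit parametrization is an exponential family with log-partition $A(z)=\log(1+e^{z})$. In this form the KL divergence is exactly a Bregman divergence of $A$:
\[
 f(z)=A(z)-A(z_0)-\dot A(z_0)(z-z_0),
 \qquad \dot A=\mu,\quad \ddot A=\dot\mu.
\]
I will first check this identity directly. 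Since $p_0=\mu(z_0)$, we have
\[
 f(z)=p_0\log\frac{p_0}{\mu(z)}+(1-p_0)\log\frac{1-p_0}{1-\mu(z)}.
\]
Now use $\log\mu(z)=z-A(z)$ and $\log(1-\mu(z))=-A(z)$. Expanding, the $z$-dependent part of $f(z)$ is $A(z)-p_0z$. Evaluating at $z=z_0$ gives $f(z_0)=0$, and this fixes the constant, so the identity holds.

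Next, apply Taylor's theorem with integral remainder to the smooth convex function $A$:
\[
 f(z_1)=\int_0^1(1-s)\,\dot\mu\bigl(z_0+s(z_1-z_0)\bigr)\,ds\,(z_1-z_0)^2 .
\]
Every point $z_0+s(z_1-z_0)$ lies on the segment between $z_0$ and $z_1$, whichever order they come in. So I bound $\dot\mu$ there by its supremum over that segment. Then $\int_0^1(1-s)\,ds=1/2$ gives the claim.

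There is no real obstacle. The only point needing care is confirming the sign conventions in the Bregman identity, namely which argument of the KL divergence carries the expansion point. If preferred, one can instead differentiate $f$ directly: $f'(z)=\mu(z)-p_0$ and $f''(z)=\dot\mu(z)$, with $f(z_0)=f'(z_0)=0$. This gives the same conclusion without naming the exponential-family structure. The argument also works unchanged when $z_1<z_0$, because the integral representation is symmetric in orientation.
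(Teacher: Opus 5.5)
Your proof is correct and follows the paper's argument essentially verbatim. The paper also writes the KL divergence as the Bregman divergence of $A(z)=\log(1+e^{z})$ at $z_0$ and bounds the integral-form Taylor remainder $h^2\int_0^1(1-s)\dot\mu(z_0+sh)\,ds$ by $\tfrac12 h^2\sup\dot\mu$; your explicit check of the Bregman identity via $\log\mu(z)=z-A(z)$ and $\log(1-\mu(z))=-A(z)$ fills in a step the paper states without derivation.
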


\begin{proof}
Write $P_z=\operatorname{Bernoulli}(\mu(z))$ and set $h=z_1-z_0$. For $A(z) = \log(1+e^z)$, we have $A'(z) = \mu(z)$. Thus
\begin{align*}
  \operatorname{KL}(P_{z_0}\|P_{z_1})
  &=A(z_1)-A(z_0)-A'(z_0)h\\
  &=h^2\int_0^1(1-s)A''(z_0+sh)\,ds\\
  &\le \frac12h^2\sup_{z\in[z_0,z_1]}\dot\mu(z).
\end{align*}
\end{proof}

For a randomized policy, let $\cH_t$ denote the history immediately before the round-$t$ action is randomized, including the current context $X_t$. Conditional on $\cH_t$ and $A_t$, $R_t$ is a new Bernoulli draw.

\begin{lemma}
\label{lem:adaptive-kl}
Consider environments $j,k$ with the same context law and the same policy. Let $P_j^t,P_k^t$ be the respective distributions of $\cH_t$.  Then
\begin{equation}
  \operatorname{KL}(P_j^t\|P_k^t)
  =\E_j\!\left[
    \sum_{s=1}^{t-1}
    \operatorname{KL}\!\left(
      P_j(R_s\mid\cH_s,A_s)
      \,\middle\|\,
      P_k(R_s\mid\cH_s,A_s)
    \right)
  \right].
  \label{eq:adaptive-kl}
\end{equation}
If only actions in a set $\mathcal E$ have different reward distributions and every such conditional observation has KL at most $\gamma$, then
\begin{equation}
  \operatorname{KL}(P_j^t\|P_k^t)
  \le \gamma\,\E_jN_{\mathcal E}(t-1),
  \quad \text{where }
  N_{\mathcal E}(s)=\sum_{u=1}^s\ind\{A_u\in\mathcal E\}.
  \label{eq:adaptive-kl-count}
\end{equation}
\end{lemma}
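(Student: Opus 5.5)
The plan is to apply the chain rule for KL divergence along the history. The history $\cH_t$ consists of $(X_1,A_1,R_1,\ldots,X_{t-1},A_{t-1},R_{t-1},X_t)$, together with any internal randomization of the policy, which we absorb into the conditional action kernels. Under environment $j$, the joint density of $\cH_t$ factorizes into a product of terms of three kinds: the context law $P_X(X_s)$, the policy kernel $\pi_s(A_s\mid \cH_s)$, and the reward law $P_j(R_s\mid \cH_s,A_s)=\mathrm{Bern}(r_{\theta_j}(X_s,A_s))$. The context law and policy kernels are common to both environments by hypothesis, since the context law is shared and the policy is the same. Consequently the log-likelihood ratio $\log(dP_j^t/dP_k^t)$ reduces to $\sum_{s=1}^{t-1}\log\frac{P_j(R_s\mid\cH_s,A_s)}{P_k(R_s\mid\cH_s,A_s)}$, and the context and policy factors cancel.

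Taking expectation under $P_j^t$ and conditioning each summand on $(\cH_s,A_s)$ by the tower property turns the conditional expectation of the $s$-th log-ratio into exactly the KL divergence between the two Bernoulli reward laws at $(X_s,A_s)$. This gives \eqref{eq:adaptive-kl}. To avoid any density issues, the same conclusion can equivalently be reached by invoking the standard chain rule for KL divergence, applied successively to the triples $(X_s,A_s,R_s)$. In that form the context and action terms contribute zero, because their conditional laws coincide.

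For \eqref{eq:adaptive-kl-count}, note that the conditional KL at round $s$ vanishes whenever $A_s\notin\mathcal E$, since the reward distributions then coincide. When $A_s\in\mathcal E$, it is at most $\gamma$. Each summand is therefore bounded by $\gamma\ind\{A_s\in\mathcal E\}$, and summing over $s$ and taking expectation yields $\gamma\,\E_jN_{\mathcal E}(t-1)$.

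The only delicate point is measure-theoretic. Policies may randomize and $P_X$ may be continuous, so the factorization should be stated with respect to a common dominating measure rather than through densities. The cleanest route is therefore the chain rule for KL on standard Borel spaces, as in \citet{lattimore2019bandit}. If some conditional KL is infinite, both sides are infinite and the identity still holds. I therefore expect no substantive obstacle beyond this bookkeeping.
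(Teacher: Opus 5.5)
Your proposal is correct and follows essentially the same route as the paper: apply the chain rule for KL over the triples $(X_s,A_s,R_s)$. The context and action terms vanish because the context law and the policy kernels coincide, so only the conditional reward KLs remain, and each is at most $\gamma$ when $A_s\in\mathcal E$ and zero otherwise. Your remarks on dominating measures and infinite KL add care that the paper omits, but they do not change the argument.
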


\begin{proof}
Apply the chain rule for KL divergence to the variables $X_s,A_s,R_s$. The context terms vanish because the contexts are iid with the same distribution. Conditional on the realized history, the policy uses the same action distribution in both environments, so the action terms also vanish. Only the conditional reward terms remain, giving \eqref{eq:adaptive-kl}. Bounding each nonzero term by $\gamma$ gives \eqref{eq:adaptive-kl-count}.
\end{proof}

We use the following information-theoretic inequalities.

\begin{lemma}
\label{lem:info-tools}
The following statements hold.
\begin{enumerate}
\item \emph{\citet{bretagnolle1979estimation}; see also \citet[Theorem~14.2]{lattimore2019bandit}.} For probability measures $P,Q$ and every event $A$,
\begin{equation}
  P(A^c)+Q(A)\ge\frac12e^{-\operatorname{KL}(P\|Q)}.
  \label{eq:bh}
\end{equation}
\item \emph{Data processing \citep[Exercise~14.10]{lattimore2019bandit}.} If $Y$ is a measurable function of $H$, then $\operatorname{KL}(P_Y\|Q_Y)\le\operatorname{KL}(P_H\|Q_H)$. Adjoining an independent random seed with the same law under $P$ and $Q$ does not change KL.
\item \emph{Pairwise mutual-information bound.}  If $m\ge2$, $Z$ is uniform on $[m]$, and $H\mid\{Z=j\}\sim P_j$, then
\begin{equation}
  I(Z;H)\le\frac1{m^2}\sum_{j,k=1}^m
  \operatorname{KL}(P_j\|P_k).
  \label{eq:pairwise-mi}
\end{equation}
\item \emph{Fano \citep[Theorem~2.10.1]{cover1991elements}.} Under the setup of item 3, for every estimator $\widehat Z(H)$ taking values in $[m]$,
\begin{equation}
  \Pp(\widehat Z\ne Z)
  \ge 1-\frac{I(Z;H)+\log2}{\log m}.
  \label{eq:fano}
\end{equation}
\end{enumerate}
\end{lemma}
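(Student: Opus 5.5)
We prove the four items of Lemma~\ref{lem:info-tools} separately. Items~1, 2 and~4 are standard and we cite them as indicated; the only new argument is item~3.

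\emph{Item~1 (Bretagnolle--Huber).} We recall the short argument only to fix constants. Write $p,q$ for densities of $P,Q$ with respect to $P+Q$. Then
\[
P(A^c)+Q(A)\ge\int\min\{p,q\}\ge\tfrac12\Bigl(\int\sqrt{pq}\Bigr)^2 .
\]
The second inequality is Cauchy--Schwarz applied to $\sqrt{pq}=\sqrt{\min\{p,q\}}\sqrt{\max\{p,q\}}$, together with $\int\max\{p,q\}\le2$. By Jensen, $\bigl(\int\sqrt{pq}\bigr)^2=\exp\bigl(2\log\E_P\sqrt{q/p}\bigr)\ge\exp(-\operatorname{KL}(P\|Q))$. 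Combining the two gives \eqref{eq:bh}.

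\emph{Item~2 (data processing).} The first statement follows from the chain rule: $\operatorname{KL}(P_H\|Q_H)=\operatorname{KL}(P_Y\|Q_Y)+\E\bigl[\operatorname{KL}(P_{H\mid Y}\|Q_{H\mid Y})\bigr]$, and the second term is nonnegative. For an independent seed $U$ with the same law under $P$ and $Q$, the chain rule applied to $(H,U)$ leaves a conditional term of $\operatorname{KL}(P_U\|Q_U)=0$, so adjoining $U$ does not change KL.

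\emph{Item~3 (pairwise mutual-information bound).} This is the only step needing a genuine argument, and it is short. Let $\bar P=\frac1m\sum_k P_k$ be the marginal law of $H$. Then
\[
I(Z;H)=\frac1m\sum_j\operatorname{KL}(P_j\|\bar P).
\]
Since $Q\mapsto\operatorname{KL}(P_j\|Q)$ is convex (joint convexity of KL), we have $\operatorname{KL}(P_j\|\bar P)\le\frac1m\sum_k\operatorname{KL}(P_j\|P_k)$. Averaging over $j$ gives \eqref{eq:pairwise-mi}. The points to check are that each $P_j\ll\bar P$, so every term is finite whenever the pairwise terms are, and that the convexity step is valid when some pairwise KL is infinite; in that case the bound is trivial.

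\emph{Item~4 (Fano).} We cite Cover--Thomas, which gives $H(Z\mid\widehat Z)\le\log2+\Pp(\widehat Z\ne Z)\log(m-1)$. Since $H(Z)=\log m$, we have $H(Z\mid\widehat Z)=\log m-I(Z;\widehat Z)$. Data processing (the mutual-information analogue of item~2) gives $I(Z;\widehat Z)\le I(Z;H)$. Rearranging and using $\log(m-1)\le\log m$ yields \eqref{eq:fano}.
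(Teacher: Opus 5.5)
Your proposal is correct and follows essentially the same route as the paper. Item~3 uses $I(Z;H)=\frac1m\sum_j\operatorname{KL}(P_j\|\bar P)$ together with convexity of $Q\mapsto\operatorname{KL}(P_j\|Q)$, the seed claim is a chain-rule/product-measure identity, and item~4 rearranges $\log m-I(Z;H)\le\log 2+p_e\log m$. The differences are cosmetic: you prove Bretagnolle--Huber and data processing rather than citing them, and in Fano you pass through $\mathsf{H}(Z\mid\widehat Z)$ and mutual-information data processing where the paper bounds $\mathsf{H}(Z\mid H)$ directly.
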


For item 3, with
\[
  \overline P=\frac1m\sum_{k=1}^m P_k,
\]
we have
\begin{align*}
  I(Z;H)
  &=\frac1m\sum_{j=1}^m
    \operatorname{KL}(P_j\|\overline P)\\
  &\le
    \frac1m\sum_{j=1}^m
    \frac1m\sum_{k=1}^m
    \operatorname{KL}(P_j\|P_k),
\end{align*}
where the inequality follows from convexity of $Q\mapsto\operatorname{KL}(P_j\|Q)$.  This proves \eqref{eq:pairwise-mi}. For the second claim in item 2, if $U\sim\rho$ is independent of $H$ under both $P$ and $Q$, then
\[
  \operatorname{KL}(P_H\otimes\rho\|Q_H\otimes\rho)
  =
  \operatorname{KL}(P_H\|Q_H)
  +
  \operatorname{KL}(\rho\|\rho)
  =
  \operatorname{KL}(P_H\|Q_H).
\]
Consequently, if a randomized policy is represented as $A_t=f_t(\cH_t,U_t)$ using an environment-independent seed $U_t$, data processing gives
\[
\operatorname{KL}\!\left(P_j(A_t\in\cdot)\,\middle\|\,P_k(A_t\in\cdot)
  \right)
  \le
  \operatorname{KL}(P_j^t\|P_k^t).
\]

Finally, the form \eqref{eq:fano} follows from Fano's inequality.  Writing $p_e=\Pp(\widehat Z\ne Z)$ and using $\mathsf{H}(Z)=\log m$,
\[
  \log m-I(Z;H)
  =
  \mathsf{H}(Z\mid H)
  \le
  h_2(p_e)+p_e\log(m-1)
  \le
  \log2+p_e\log m.
\]
Rearranging gives \eqref{eq:fano}.

\begin{lemma}
\label{lem:variational}
For every positive-definite matrix $M$ and vector $q$,
\begin{equation}
  q^\top M^{-1}q
  =\sup_{u\in\R^d}\left(2q^\top u-u^\top Mu\right).
  \label{eq:variational}
\end{equation}
\end{lemma}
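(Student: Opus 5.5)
This is the standard variational (Legendre) identity for a positive-definite quadratic form. I will prove it by completing the square. Fix $M\succ0$ and $q\in\R^d$, and define the objective
\[
f(u)=2q^\top u-u^\top Mu .
\]
Because $M$ is positive definite, $f$ is strictly concave and tends to $-\infty$ as $\norm{u}_2\to\infty$. Its supremum is therefore attained at the unique stationary point: setting $\nabla f(u)=2q-2Mu=0$ gives $u^\star=M^{-1}q$. Substituting,
\[
f(u^\star)=2q^\top M^{-1}q-q^\top M^{-1}MM^{-1}q=q^\top M^{-1}q .
\]

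To avoid any appeal to calculus, I will also write the exact algebraic identity, valid for every $u$:
\[
2q^\top u-u^\top Mu
=q^\top M^{-1}q-(u-M^{-1}q)^\top M(u-M^{-1}q).
\]
It follows by expanding the right-hand side and using the symmetry of $M$. The subtracted term is nonnegative by positive semidefiniteness and vanishes at $u=M^{-1}q$. Hence $f(u)\le q^\top M^{-1}q$ for all $u$, with equality at $u^\star$, which is exactly \eqref{eq:variational}.

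The argument has no real obstacle. The only points to check are that $M^{-1}$ exists, which holds since $M\succ0$, and that $M$ is symmetric, which the expansion needs.

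The lemma is used earlier in the paper through its one-sided consequence. Restricting the supremum to a subspace, such as $u=se_2$ or $u=(0,w)$, gives a lower bound on $q^\top M^{-1}q$. That lower bound follows immediately from the identity, because a supremum over a subset is at most the supremum over all of $\R^d$.
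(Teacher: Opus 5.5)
Your proof is correct and is essentially the paper's argument: the paper also completes the square, $2q^\top u-u^\top Mu=q^\top M^{-1}q-(u-M^{-1}q)^\top M(u-M^{-1}q)$, and notes that the supremum is attained at $u=M^{-1}q$. The stationary-point derivation you give first is a redundant but harmless addition, and you are right that the expansion uses the symmetry of $M$, which the paper leaves implicit.
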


\begin{proof}
Complete the square:
\[
  2q^\top u-u^\top Mu
  =q^\top M^{-1}q-\norm[M]{u-M^{-1}q}^2.
\]
The supremum is attained at $u=M^{-1}q$.
\end{proof}

\begin{lemma}
\label{lem:packing}
For every integer $n\ge45$ and every $m\le\lfloor e^{n/32}\rfloor$, there exist vectors
\[
  u_1,\ldots,u_m\in
  \left\{-\frac{1}{\sqrt n}, \frac{1}{\sqrt n}\right\}^n
\]
such that $u_i^\top u_j\le1/2$ whenever $i\ne j$.
\end{lemma}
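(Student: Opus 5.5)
The plan is a Gilbert--Varshamov style probabilistic construction. Draw the $m$ vectors independently and uniformly from $\{-1/\sqrt n,1/\sqrt n\}^n$. For a fixed pair $i\ne j$, write $\sqrt n\,u_i=(\epsilon_{i1},\ldots,\epsilon_{in})$. Then $n\,u_i^\top u_j=\sum_{k=1}^n\epsilon_{ik}\epsilon_{jk}$, which is a sum of $n$ independent Rademacher variables. Hoeffding's inequality gives
\[
\Pp\bigl(u_i^\top u_j>1/2\bigr)=\Pp\Bigl(\textstyle\sum_k\epsilon_{ik}\epsilon_{jk}>n/2\Bigr)\le\exp\bigl(-2(n/2)^2/(4n)\bigr)=e^{-n/8}.
\]

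A union bound over the $\binom m2<m^2/2$ pairs then bounds the probability that some pair violates the constraint by $\tfrac12m^2e^{-n/8}$. Since $m\le e^{n/32}$, we have $m^2\le e^{n/16}$, so this failure probability is at most $\tfrac12e^{-n/16}<1$. Some realization therefore succeeds, and that realization supplies the packing.

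There is one subtlety. The vectors drawn this way may coincide, but a coincidence forces $u_i^\top u_j=1>1/2$. Such draws are already counted among the bad events, so the resulting vectors are automatically distinct.

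The only real work is getting the Hoeffding constant right: each summand has range $[-1,1]$ of width $2$, which is where $4n$ in the exponent comes from. After that, one checks that the budget $e^{n/32}$ leaves slack. The hypothesis $n\ge45$ is not needed for this argument. It appears only because the construction elsewhere requires $\lfloor e^{n/32}\rfloor\ge4$. The case $m\le1$ is trivial, since there are no pairs to check.
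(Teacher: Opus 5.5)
Your proposal is correct and follows essentially the same route as the paper's proof: independent uniform sign vectors, Hoeffding's bound $e^{-n/8}$ for each pair, and a union bound over fewer than $m^2/2$ pairs giving failure probability at most $\tfrac12 e^{-n/16}<1$. Your extra remarks are also accurate. Coincident vectors are already excluded as bad events, and the hypothesis $n\ge45$ plays no role in this argument.
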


\begin{proof}
Draw the vectors independently and uniformly. For a fixed pair $i\ne j$,
\[
  u_i^\top u_j=\frac1n\sum_{k=1}^n\xi_k,
\]
where the $\xi_k$ are independent Rademacher variables, taking values $+1$ and $-1$ with equal probability. Hoeffding's inequality gives
\[
  \Pp(u_i^\top u_j>1/2)
  =\Pp\!\left(\sum_{k=1}^n\xi_k>\frac n2\right)
  \le e^{-n/8}.
\]
A union bound over fewer than $m^2/2$ unordered pairs gives a failure probability of at most
\[
  \frac{m^2}{2}e^{-n/8}
  \le\frac12e^{-n/16}<1.
\]
Hence some choice of vectors satisfies all the required pairwise inequalities.
\end{proof}

\subsection*{Use of Generative-AI Tools}
The authors defined the parameter $V_\star$ that quantifies instance difficulty (Definition~\ref{def:vstar}), developed Algorithm~\ref{alg:fixed-ete}, and established its matching upper bound. GPT-5.6 Sol assisted in developing the hard instances, the lower-bound proofs, Algorithm~\ref{alg:adaptive-ete}, and the proof of its upper bound. It also assisted with manuscript polishing, reference exploration, and simulation code development. The authors take responsibility for the final arguments and presentation.

\end{document}